\def \papertitle{Bayesian Coreset Optimization for Personalized Federated Learning}
\newcommand*{\at}[2][]{#1|_{#2}}
\newtheorem{definition}{Definition}
\newtheorem{proposition}{Proposition}
\newcommand{\RomanNumeralCaps}[1]
    {\MakeUppercase{\romannumeral #1}}
\newtheorem{lemma}{Lemma}
\newtheorem{theorem}{Theorem}
\newtheorem{thm}{Theorem}
\newcommand{\method}[1]{CoreSetPFedBayes}
\newcommand{\methodprop}{\textsc{CoreSet-PFedBayes}}
\newcommand{\baseline}{\textsc{PFedBayes}}
\newcommand{\randomsub}{\textsc{RandomSubset}}
\DeclareMathAlphabet{\pazocal}{OMS}{zplm}{m}{n}
\xpatchcmd{\proof}{\topsep6\p@\@plus6\p@\relax}{}{}{}
\def\BibTeX{{\rm B\kern-.05em{\sc i\kern-.025em b}\kern-.08em
    T\kern-.1667em\lower.7ex\hbox{E}\kern-.125emX}}
\title{Bayesian Coreset Optimization for \\ Personalized Federated Learning}
\author{Prateek Chanda \\
Department of Computer Science \\
Indian Institute of Technology \\
Bombay, India \\
\texttt{prateekch@cse.iitb.ac.in} \\
\And
Shrey Modi \\
Department of Computer Science\\
Indian Institute of Technology\\
Bombay, India \\
\texttt{200020135@iitb.ac.in} \\
\And
Ganesh Ramakrishnan \\
Department of Computer Science\\
Indian Institute of Technology\\
Bombay, India \\
\texttt{ganesh@cse.iitb.ac.in}
}
\begin{document}

\maketitle

\begin{abstract}
In a distributed machine learning setting like Federated Learning where there are multiple clients involved which update their individual weights to a single central server, often training on the entire individual client's dataset for each client becomes cumbersome. To address this issue we propose $\methodprop$: a personalized coreset weighted federated learning setup where the training updates for each individual clients are forwarded to the central server based on only individual client coreset based representative data points instead of the entire client data. Through theoretical analysis we present how the average generalization error is minimax optimal up to logarithm bounds (upper bounded by $\mathcal{O}(n_k^{-\frac{2 \beta}{2 \beta+\boldsymbol{\Lambda}}} \log ^{2 \delta^{\prime}}(n_k))$) and lower bounds of $\mathcal{O}(n_k^{-\frac{2 \beta}{2 \beta+\boldsymbol{\Lambda}}})$, and how the overall generalization error on the data likelihood differs from a vanilla Federated Learning setup as a closed form function ${\boldsymbol{\Im}}(\boldsymbol{w}, n_k)$ of the coreset weights $\boldsymbol{w}$ and coreset sample size $n_k$. 
Our experiments on different benchmark datasets based on a variety of recent personalized federated learning architectures show significant gains as compared to random sampling on the training data followed by federated learning, thereby indicating how intelligently selecting such training samples can help in performance. Additionally, through experiments on medical datasets our proposed method showcases some gains as compared to  other submodular optimization based approaches used for subset selection on client's data.
\end{abstract}

\section{Introduction}

Distributed machine learning has a wide variety of applications in various domains like in recommendation systems \citep{zhang2022pipattack}, healthcare and other areas. Given the advantages of data-privacy, heterogeneity and resource efficiency, federated learning in particular stands out among all other learning methods. 
For instance in recommendation systems based work \citep{luo2022personalized}  \citep{zhang2023dual} or in healthcare \citep{lu2022personalized}, personalization and privacy preserving based recommendations are currently most focused upon. However, often these require inference on larger datasets which is often computationally expensive given the fact that often these datasets only consist of a subset of representative data points \citep{kaushal2018learning}, \citep{maheshwari2020semi}.

 Recently, researchers have tried to use Bayesian coresets for performing inference \citep{campbell2019sparse},\citep{campbell2018bayesian}. Bayesian coreset refers to a subset of the complete dataset which can be used to approximate the posterior inference as if it was performed on the complete dataset. In Federated Learning, we have a distributed machine learning model where there are a set of clients each having their own share of data and a server. This is prevalent in many production level systems such as recommendation systems or machine learning models deployed for mobile apps. It is imperative for each client to get the same user satisfiability and personalized experience with only a small chunk of data on their local devices. Instead of training on the entire user data for a particular client, what if we can only learn based on a representative set of the user data for each client and achieve near optimal accuracy? In this space \citep{huang2022coresets} perform coresets optimization for vertical federated learning where empirically they show that coresets optimization help reducing the communication complexity among clients and server in the vertical FL setting.

Our contributions are as follows: 

\begin{itemize}
    \item Proposal of a new architecture to incorporate bayesian coresets optimization in the space of Federated Learning.
    \item Proposal of multiple novel objective functions that take into account the optimization problem of general Federated Learning  in a Bayesian coresets setting with particular focus on personalized coreset weights for each individual clients.
    \item Theoretical analysis on the convergence rate shows our approach $\methodprop$ achieves convergence rate within logarithmic bounds.
    \item Experimental results across several benchmark datasets conducted with a wide array of pre-existing baselines show promising results towards good performance in terms of model accuracy even with less data at each client's end.

\end{itemize}

\section{Related Work}


It is well known in literature that training datasets offer diminishing returns in terms of performance. It has also been demonstrated that one can train and obtain better returns in performance and energy efficiency by training models over subsets of data that are meticulously selected~\citep{datashapley, datavaluation, datavalue, strubell2019energy}. This leads us to the problem of coreset selection that deals with approximating a desirable quantity ({\em e.g.}, gradient of a loss function) over an entire dataset with a weighted subset of it. 
Traditional methods of coreset selection have used a submodular proxy function to select the coreset and are model dependent~\citep{submodcoreset1, submodcoreset2, submodcoreset3, coresetkmeans, submodcoreset4}. Coreset selection with deep learning models has become popular in recent years~\citep{craig, crust, gradmatch, deepcoreset1, facilitylocation}. Coreset selection to approximately match the full-batch training gradient is proposed in~\citep{craig}. 
\citet{gradmatch} propose algorithms to select coresets that either match the full-batch gradient or the validation gradient. 
\citet{crust} propose an approach to select a coreset that admits a low-rank jacobian matrix and show that such an approach is robust to label noise. 
Most existing coreset selection approaches are proposed in conventional settings wherein all the data is available in one place. In FL, since no client or the server gets a holistic view of the training dataset, coresets can at best approximate only local data characteristics and are thus inherently sub-optimal.

Coreset selection in federated learning has remained largely unexplored because of the intricacies involved due to privacy and data partition across clients. Federated Learning can be modelled as a cooperative game where it often uses Shapley values to select clients whose updates result in the best reduction of the loss on a validation dataset held by the server. One work that comes very close to ours is that of~\citep{diverseblimes}, that selects a coreset of clients\footnote{As against selecting a coreset of data instances} whose update represents the update aggregated across all the clients. They apply facility location algorithm on the gradient space to select such a coreset of clients. In contrast to all these approaches, our proposal attempts to select a coreset of the dataset at each client and is thus more fine-grained than such prior works. One another paradigm of FL that is in contrast to our setup is Personalized FL, where the aim is to train specialised models for each individual client ~\citep{collins2021exploiting, FL_MAML,pmlr-v162-marfoq22a,Li_2022_CVPR, NEURIPS2021_f8580959}. While personalized FL focuses on finetuning the model to match each client's data distribution, we build models to account for just the server's distribution, similar to ~\citep{scaffold}. 

The algorithms that ensure privacy in FL include differential privacy~\citep{diff_privacy_1, diff_privacy_2}, Homomorphic encryption ~\citep{homomorphic_1, homomorphic_2}, {\em etc.}; which is not the focus of our work. However, we note that all these methods can be easily integrated with our solution approach and can be used in conjunction.







\begin{figure}
    \centering
\def\svgwidth{\linewidth}
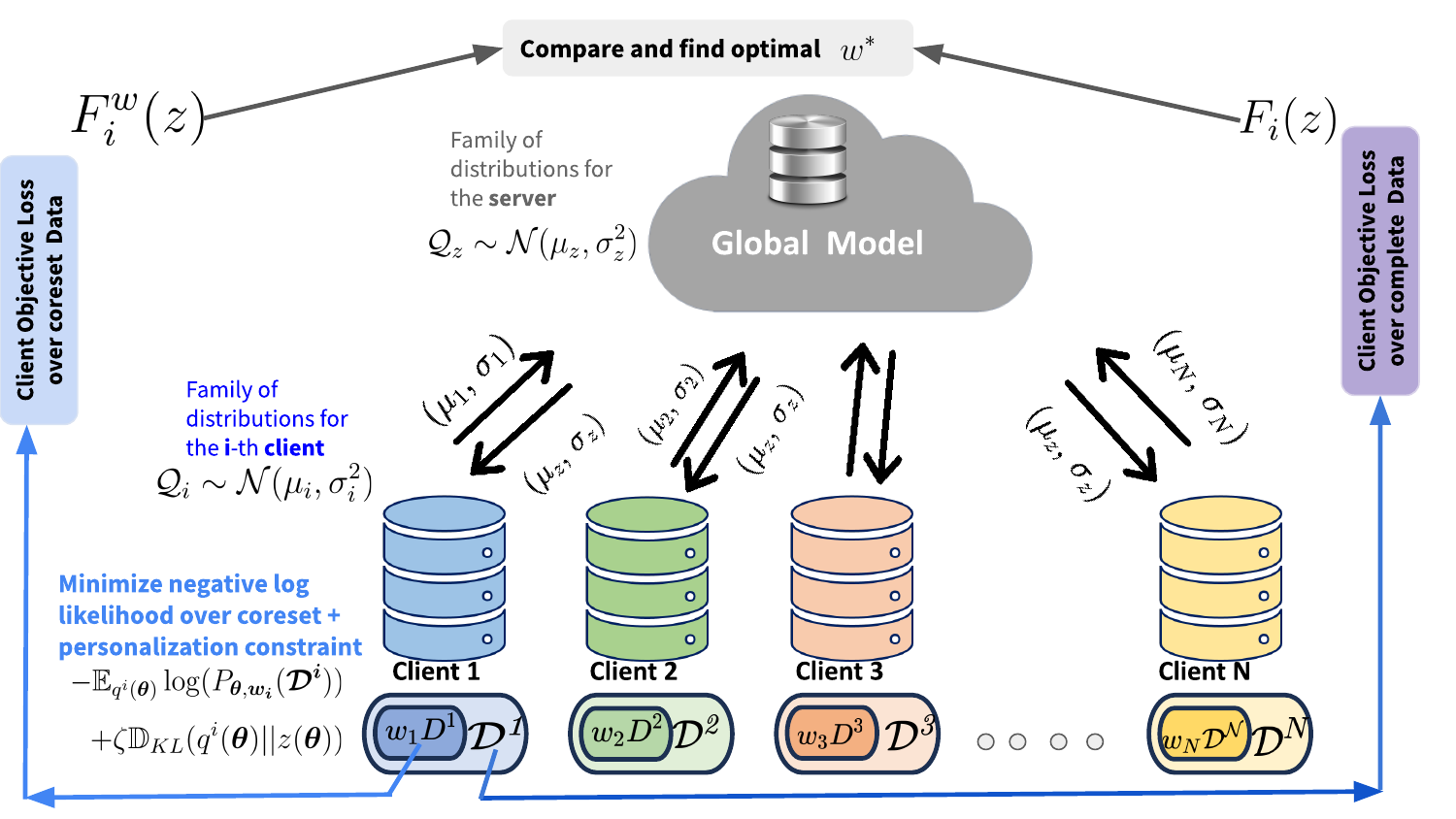
    \caption{\textbf{System Diagram} : Coreset Weighted Personalized Federated Learning model with parameters under Gaussian assumptions. Each client uploads its updated distribution to the server based its corresponding coreset training data (each client $i$'s data $\boldsymbol{\mathcal{D}^i}$ is weighted by $\boldsymbol{w}_i$) and then the aggregrated global distribution is utilised from the server.}
    \label{fig:sysarch}
\end{figure}

\section{Preliminaries and Problem Setting}\label{Sec:Notations}



We consider the problem setting from \citep{zhang2022personalized} as follows: Consider  a distributed system that includes one server and $N$ clients. Let the $i$th client's dataset be $\boldsymbol{\mathcal{D}}^i = \{(\boldsymbol{x}^i_j, \boldsymbol{y}^i_j)\}_{j=1}^n$.(assuming all $N$ clients have the same sample size $n$)  Further, let the $i$-th client satisfy a regression model with random covariates as
$\boldsymbol{y}_j^i = f^i(\boldsymbol{x}^i_j) + \epsilon^i_j , \hspace{6pt}
    \forall j \in [1, \dots n]$
where $\boldsymbol{x}_j^i \in \mathbb{R}^{s_0}, \boldsymbol{y}_j^i \in \mathbb{R}^{s_{L+1}}$ for $\forall i=$ $1, \ldots, N$. Here $f^i(\bullet): \mathbb{R}^{s_0} \rightarrow \mathbb{R}^{s_{L+1}}$ denotes a nonlinear function (which is unknown and we want to estimate) and $\epsilon^i_j$ denotes a Gaussian noise independent of $\boldsymbol{x}^i_j$ i.e. $\epsilon^i_j \stackrel{\text{i.i.d}}{\sim}\mathcal{N}(0, \sigma^2_\epsilon)$ with variance $\sigma^2_{\epsilon}$. We assume that $f^{i}(\bullet)$ is $\beta$-Hölder-smooth functions \citep{takezawa2005introduction} and the intrinsic dimension of each of the client's data is $\boldsymbol\Lambda$.
We assume each client has the same fully-connected deep neural network (DNN), each having their individual neural network parameters. We denote the output of the neural network as $f_{\boldsymbol{\theta}}(\bullet)$ where $\boldsymbol{\theta} \in \mathbb{R}^T$ represents the network parameters. Analogously the output of the $i$-th client is denoted as $f^i_{\boldsymbol{\theta}}$.  
Each neural network has $L$ hidden layers, where the $j$-th hidden layer has $s_j$ neurons and its corresponding activation functions $\sigma(\bullet)$. We denote $\boldsymbol{\mathcal{S}} = \{s_i\}_{i=1}^{L}$, the set of all neurons in the neural network. The assumption made is that all the neural network parameters are bounded {\em i.e.}, $||\boldsymbol{\theta}||_{\infty} \leq \Omega$ for $\Omega >0$.

The main aim is to architect a Federated Learning system that takes into consideration the coreset problem for each individual client along with personalization and tackling overfitting. Using the above problem setting we first formulate the client side objective and server side objective for the Federated Learning setting as a bi-level optimization problem.

\subsection{Federated Learning Objectives}
 The standard BNN model \citet{jordan1999introduction} aims to solve the optimization problem to find the closest distribution $q^i(\boldsymbol{\theta})$ for the $i$-th client from the family of distributions $\mathcal{Q}_i$ to match  the posterior distribution $\pi(\boldsymbol{\theta}|\boldsymbol{\mathcal{D}^i})$ of the given data $\boldsymbol{\mathcal{D}^i}$ via minimizing the KL-divergence as follows.

\textbf{Client Side Objective}
\begin{equation}        \mathcal{F}_{i}(z) \triangleq 
\min_{q^i(\boldsymbol{\theta}) \in \mathcal{Q}_i}\mathbb{D}_{KL}(q^i(\boldsymbol{\theta})||\pi(\boldsymbol{\theta}|\boldsymbol{\mathcal{D}^i})) \Leftrightarrow \min_{q^i(\boldsymbol{\theta}) \in \mathcal{Q}_i} \overbrace{- \mathbb{E}_{q^i(\boldsymbol{\theta})}[\log P_{\boldsymbol{\theta}}(\boldsymbol{\mathcal{D}^i})]}^{\text{reconstruction error over } \boldsymbol{\mathcal{D}}} +  \zeta \overbrace{\mathbb{D}_{KL}(q^i(\boldsymbol{\theta}) || \pi(\boldsymbol{\theta}))}^{\text{regularization term}}
\label{eq:originalPFedBayes}
\end{equation}
Here $\pi(\boldsymbol{\theta})$ denotes the prior distribution and $P_{\boldsymbol{\theta}}(\boldsymbol{\mathcal{D}^i})$ denotes the likelihood and $\zeta$ is a personalization constant that defines the weightage towards more regularization thus leading to more personalization. 

\textbf{Server Side Objective}
On the server side the global model  tries to find the closest distribution in $\mathcal{Q}_z$ to the client's distribution by minimizing the aggregrate KL divergence from all the clients as follows:
\begin{equation}
    \min_{z(\boldsymbol{\theta}) \sim \mathcal{Q}_z} \mathcal{F}(z) \triangleq \frac{1}{N}\sum_{i=1}\mathcal{F}_{i}(z)
    \label{eq:ztheta}
\end{equation}


\subsection{Bayesian Coresets Optimization}
We now introduce the notion of coreset weights i.e. we assign to each client $i$'s data $\boldsymbol{\mathcal{D}^i}$ a weight vector $\boldsymbol{w}_i \in \mathbb{R}^n$ that will act as the corresponding coreset weight for the $i$-th client.
In standard bayesian coresets optimization setting, the goal is to control the deviation of coreset log-likelihood from the true log-likelihood via some sparsity ($n_k << n$). In accordance to \citep{zhang2021bayesian} we utilise the following optimization objective for the $i$-th client:
\begin{equation}
\begin{aligned}
 \arg \min_{\boldsymbol{w}_i \in \mathbb{R}^n} \mathcal{G}^i(\boldsymbol{w}_i) := \left\lVert\mathcal{P}_{\boldsymbol{\theta}}(\boldsymbol{\mathcal{D}^i}) - \mathcal{P}_{\boldsymbol{\theta},\boldsymbol{w}_i}(\boldsymbol{\mathcal{D}^i})\right\rVert^2_{\hat{\pi},2} \hspace{7pt} s.t. \hspace{6pt} ||\boldsymbol{w}_i||_0 \leq n_k,  \hspace{8pt} \forall i \in [N]
    \end{aligned}
    \label{eq:Bayesian Coreset original Eq}
\end{equation}
where the coreset weights $\boldsymbol{w}_i$ are considered over the data points $\boldsymbol{\mathcal{D}^i}$ and $L^2(\hat{\pi})$-norm as the distance metric is considered in the embedding Hilbert Space.
Specifically, $\hat{\pi}$ is the weighting distribution that has the same support as true posterior $\pi$.
The above equation can be further approximated to the following where $\hat{g_j}$ is a Monte-Carlo approximation over $g_j = \mathcal{P}_{\theta}(\boldsymbol{\mathcal{D}}^i_j) - \mathbb{E}_{\theta \sim \hat{\pi}}P_{\theta}(\boldsymbol{\mathcal{D}}^{i}_j)$ for Monte-carlo samples (the derivation can be found in Appendix \ref{prop:AIHT_simplified})
\begin{equation}
\begin{aligned}
     \arg \min_{\boldsymbol{w} \in \mathbb{R}^n} \mathcal{G}^i(\boldsymbol{w}_i) :=  \left\lVert\sum_{i=1}^n \hat{g_j} - \sum_{i=1}^n \boldsymbol{w}_i\hat{g_j} \right\rVert_2^2  \hspace{6pt} s.t. \hspace{8pt} ||\boldsymbol{w}_i||_0 \leq n_k,  \hspace{8pt} \forall i \in [N]
\end{aligned}
\end{equation}

The above \textit{sparse regression} problem is non-convex due to the combinatorial nature of the constraints. \citep{campbell2018bayesian} uses the $l_2$-norm formulation which however results in less approximation accuracy compared to \citep{campbell2019sparse}. As the authors in \citep{zhang2021bayesian} point out both the above approaches have expensive computation cost and hence they propose a better alternative via accelerated iterative thresholding Appendix \ref{alg: AIHT}.







\section{Methodology - $\methodprop$}


We now proceed towards formulating our problem for combining the coreset optimization problem in a federated learning setting.

\subsection{Modified Federated Learning Objectives - Incorporating Coreset Weights}

\textbf{Modified Client Side Objective}
We now aim towards incorporating the coreset formulation from Eq: \ref{eq:Bayesian Coreset original Eq} in our federated learning setting Eq: \ref{eq:originalPFedBayes}. Assuming the bayesian coreset weights setup for each client $i$ from , we introduce a new modified client objective function
\begin{equation}
    \mathcal{F}_{i}^{w}(z) \triangleq \min_{q^i(\boldsymbol{\theta}) \sim \mathcal{Q}_i} [-\mathbb{E}_{q^i(\boldsymbol{\theta})} \log(P_{\boldsymbol{\theta}, \boldsymbol{w_i}}(\boldsymbol{\mathcal{D}^{i}})) + \zeta \mathbb{D}_{KL}(q^{i}(\boldsymbol{\theta}) || z(\boldsymbol{\theta}))]
    \label{eq:weighted_client-opt}
\end{equation}

where $z(\boldsymbol{\theta})$ and $q^{i}(\boldsymbol{\theta})$ denote the global distribution and the local distribution for the $i$-the client that is to be optimized respectively. Here, $\mathcal{F}_i^{w}(\bullet)$ and $\mathcal{F}_i(\bullet)$ indicates the coreset weighted i-th client objective and full data based client objective respectively.

In particular, let the family of distributions of the $i$-th client $\mathcal{Q}_i$and server $\mathcal{Q}_z$satisfy

\begin{equation}
    \mathcal{Q}_{i,k} \sim \mathcal{N}(\mu_{i,k}, \sigma^2_{i,k}), 
    \hspace{12pt}
        \mathcal{Q}_{z,k} \sim \mathcal{N}(\mu_{z,k}, \sigma^2_{z,k}) 
        \hspace{26pt} k = 1, \dots, T
    \label{eq:qtheta}
\end{equation}
where the above gaussian parameters correspond to the mean and variance for the $k$-th parameter of the $i$-th client respectively. Similar holds true for the server.
This is a valid assumption commonly used in literature
\citep{blundell2015weight} that let's us
simplify the evaluation of $KL$ divergence score $\mathbb{D}_{KL}$ between $q^i(\boldsymbol{\theta})$ and $z(\boldsymbol{\theta})$, resulting in a closed form solution.

\subsection{Novel Objective : Achieving near-optimal client distribution based on optimal coreset weights}

Let $\boldsymbol{z^*(\boldsymbol{\theta})}$ be the optimal variational solution of the problem in Eq: \eqref{eq:ztheta}
and $\hat{q^i}(\boldsymbol{\theta})$ be its corresponding variational solution for the $i$'th client's objective in Eq: \eqref{eq:originalPFedBayes}. Now, for the same $\boldsymbol{z^{*}(\boldsymbol{\theta})}$, let $\hat{q^{i}}(\boldsymbol{\theta}; \boldsymbol{w})$ denote the corresponding variational solution for the weighted coreset client objective in Eq: \ref{eq:weighted_client-opt}. We want to ensure that the optimal distribution $\hat{q^{i}}(\boldsymbol{\theta}; \boldsymbol{w})$ which minimizes the coreset objective does not deviate too much from the original distribution $\hat{q^i}(\boldsymbol{\theta})$ (which acts a solution for Eq: \ref{eq:originalPFedBayes}) and hence we want to fixate on $\boldsymbol{w}$ accordingly. This intuition comes from the fact that we want to choose $\boldsymbol{w}$ in such a way that the performance of our federated learning system in terms of accuracy does not drop too further in the face of training on a small percentage of data dictated by the coreset weights $\boldsymbol{w}$ (\textit{in our problem setting we denote coresets weights as $\boldsymbol{w}_i$ for each client $i$}).
From here forward, let $\mathcal{F}_{i}^{\boldsymbol{w}}(\bullet)_{\arg}$ denote  $\hat{q^{i}}(\boldsymbol{\theta}; \boldsymbol{w})$ and $\mathcal{F}_{i}(\bullet)_{\arg}$ 
denote $\hat{q^{i}}(\boldsymbol{\theta})$.

\subsection{Observations and Motivation for a New Objective}
\label{initial_hyp}
If we observe the client equation Eq: \ref{eq:originalPFedBayes} closely, we will see that for each client $i$ we are first fixing $z(\boldsymbol{\theta})$ in the equation. For a fixed $z(\boldsymbol{\theta})$, we now search for the optimal distribution $\hat{q^{i}}(\boldsymbol{\theta})$ among the whole family of distributions local to that client $\mathcal{Q}^{i}$. This is similar also for Eq: \ref{eq:weighted_client-opt} for the coreset weighted client objective.

Our objective is the learned optimal local distribution for the general client optimization objective should be as close as possible to that of the weighted coreset based client optimization objective for the same value of $z$.

So we want to formulate a new objective function such that for each client $i$ we minimize the divergence between the two optimal distributions resulting from the coreset and normal objective functions.
\begin{equation}
        \{\boldsymbol{w_i}^*\} \triangleq \arg \min_{\boldsymbol{w}} \mathbb{D}_{KL}( \mathcal{F}_i^{\boldsymbol{w}}(z)_{\arg} || \mathcal{F}_i(z)_{\arg})
        \hspace{6pt}
        \Leftrightarrow\hspace{6pt} \arg \min_{\boldsymbol{w}}\mathbb{D}_{KL}(  \hat{q^{i}}(\boldsymbol{\theta},\boldsymbol{w}) || \hat{q^{i}}(\boldsymbol{\theta}))        \hspace{12pt}            \left\|\boldsymbol{w_i}\right\|_{0} \leq n_k
\label{eq: individual Kl}
\end{equation}
\subsection{ Incorporating likelihood into Modified Client Objective}

Although the above minimization objective Eq: \ref{eq: individual Kl} captures the intuition behind matching the near-optimal performance (accuracy) with only a small coreset of the original client's data to that by using the entire data, this approach does not take into account how close the likelihood of each of the client's coreset weighted data subset $P_{\boldsymbol{\theta}, \boldsymbol{w_i}}(\boldsymbol{\mathcal{D}^i})$ is to that of the original client's data $P_{\boldsymbol{\theta}}(\boldsymbol{\mathcal{D}^{i}})$.

More specifically, we now want to choose the optimal coreset weights  ($\boldsymbol{w_i}$ (personal coreset weights) by not only minimizing the KL divergence between the corresponding client distributions ( $\hat{q^{i}}(\boldsymbol{\theta}; \boldsymbol{w})$ and  $\hat{q^{i}}(\boldsymbol{\theta})$) but also taking into account the closeness of the coreset weighted data likelihood to that of the original likelihood.
\begin{equation}
\begin{aligned}
 \{\boldsymbol{w_i}^*\} \triangleq \arg \min_{\boldsymbol{w}}\mathbb{D}_{KL}(\hat{q^{i}}(\boldsymbol{\theta}, \boldsymbol{w}) || \hat{q^{i}}(\boldsymbol{\theta}) ) + \left\lVert P_{\boldsymbol{\theta}}(\boldsymbol{\mathcal{D}^{i}}) - P_{\boldsymbol{\theta}, \boldsymbol{w_i}}(\boldsymbol{\mathcal{D}^i})\right\rVert^2_{\hat{\pi},2} 
\hspace{7pt} ||\boldsymbol{w_i}||_{0} \leq n_k 
        \label{eq:modified-personalized-weights}
\end{aligned}
\end{equation}

\section{Algorithm}


\begin{table*}

\begin{tabularx}{\textwidth}
{l}
\hline \textbf{Algorithm 1}: $\methodprop$ \\
\toprule
\resizebox{\textwidth}{!}{%
\begin{tabular}{@{}|l|l|@{}}
\toprule
\textbf{\textcolor{purple}{Server side Objective \& Coreset Update}} & \textcolor{purple}{\textbf{Client Side Objective}} \\
\hline \textbf{Cloud server executes}: & \textcolor{brown}{\textbf{ClientUpdate}} $\left(i, w_i,\boldsymbol{v}^t\right):$ \\
$\quad$ \textbf{Input} $T, R, S, \lambda, \eta, \beta, b, \boldsymbol{v}^0=\left(\boldsymbol{\mu}^0, \boldsymbol{\sigma}^0\right)$ & $\quad$ $\boldsymbol{v}_{z, 0}^t=v^t$ \\
$\quad$ \textbf{for} $t=0,1, \ldots, T-1$ \textbf{do} &$\quad$ \textbf{for} $r=0,1, \ldots, R-1$ \textbf{do}  \\
$\quad$ $\quad$ \textbf{for} $i=1,2, \ldots, N$ \textbf{in parallel do} & $\quad$ $\quad$ $\boldsymbol{D}_{\Lambda}^i \leftarrow$ sample a minibatch 
$\Lambda$ with size $b$ from $\boldsymbol{D}^i$ \\
$\quad$ $\quad$ $\quad \boldsymbol{v}_i^{t+1} \leftarrow$ \textcolor{teal}{\textbf{CoresetOptUpdate}} $\left(i, \boldsymbol{v}^t\right)$ & $\quad$ $\quad$ $\boldsymbol{D}_{\Lambda, w}^i \leftarrow$ sample a minibatch $\Lambda$ with size $b$ from $w_i\boldsymbol{D}^i$ \\
$\quad$ $\quad$ $\mathbb{S}^t \leftarrow$ Random subset of clients with size $S$ & $\quad$ $\quad$ $\boldsymbol{g}_{i, r} \leftarrow$ Randomly draw $K$ samples from $\mathcal{N}(0,1)$ \\
$\quad$  $\quad$ $\boldsymbol{v}^{t+1}=(1-\beta) \boldsymbol{v}^t+\frac{\beta}{S} \sum_{i \in S^t} \boldsymbol{v}_i^{t+1}$ & $\quad$ $\quad$  $\Omega^i\left(\boldsymbol{v}_r^t\right) \leftarrow$ Use Eq:\ref{eq:weighted-stochastic-estimator} with 
$\boldsymbol{g}_{i, r}, \boldsymbol{D}_{\Lambda}^i$ and $\boldsymbol{v}_r^t$\\
\textcolor{teal}{\textbf{CoresetOptUpdate}}$\left(i,\boldsymbol{v}^t\right):$ & $\quad$ $\quad$  $\nabla_v \Omega^i\left(\boldsymbol{v}_r^t\right) \leftarrow$ Back propagation w.r.t $\boldsymbol{v}_r^t$\\
$\quad$ $y = P_{\theta}(D^{i}),\Phi w^i =P_{\theta,w^i}(D^i) $ & $\quad$ $\quad$  $\boldsymbol{v}_r^t \leftarrow$ Update with $\nabla_v \Omega^i\left(\boldsymbol{v}_r^t\right)$ using GD algorithms\\
$\quad$\textbf{ Objective} $f(\boldsymbol{w}) = \mathbb{D}_{KL}(q^i_w|| q^i) + ||y-\Phi w^i||^2_2$ & $\quad$ $\quad$  $\Omega_z^i\left(\boldsymbol{v}_{z, r}^t\right) \leftarrow$ Forward propagation w.r.t $\boldsymbol{v}$\\
$\quad$ $t=0, l_0 = 0, w^i_0 = 0$ & $\quad$ $\quad$ $\nabla \Omega_z^i\left(\boldsymbol{v}_{z, r}^t\right) \leftarrow$ Back propagation w.r.t $\boldsymbol{v}$\\
$\quad  w_i \leftarrow  w^i_0 , v^t_{z,R} \leftarrow 0$ & $\quad$ $\quad$ Update $\boldsymbol{v}_{z, r+1}^t$ with $\nabla \Omega_z^i\left(\boldsymbol{v}_{z, r}^t\right)$ using GD algorithms\\
$\quad$ \textbf{repeat} & $\quad$ $\quad$ \textit{Repeat the above 7 steps for} \\
$\quad \quad v^t_z, q^i, q^i_w  \leftarrow \textbf{\textcolor{brown}{\text{ClientUpdate}}}\left(i, w_i,\boldsymbol{v}^t\right)$ & $\quad$ $\quad$ \textit{the weighted stochastic estimate}\\
$\quad \quad f(w) = \mathbb{D}_{KL}(q^i_w|| q^i) + ||y-\Phi w^i||^2_2$ & $\quad \hat{q^i(z)} \leftarrow \arg \Omega_z^i\left(\boldsymbol{v}_{z, R}^t\right)$ 
$, \hat{q^i_w(z)} \leftarrow \arg \Omega_z^i\left(\boldsymbol{v}_{w,z, R}^t\right)$\\
$\quad \quad  \textbf{\textcolor{blue}{\text{Accelerated-IHT}}} (f(w))  (\text{Algo \ref{algorithm:AIHT}})$ & $\quad$ \text{return } $\boldsymbol{v}_{z, R}^t,\hat{q^i(z)}, \hat{q^i_w(z)}$\\
$\quad$ until Stop criteria met &
 return $\boldsymbol{v}_{z, R}^t $ to the cloud server \\
\hline
\end{tabular}%
}
\end{tabularx}
\end{table*}

We showcase our complete algorithm for $\methodprop$ in Table 1 below
In line with \citep{zhang2022personalized} we utilise a reparameterization trick for $\boldsymbol{\theta}$ via variables $\boldsymbol{\mu}$ and $\boldsymbol{\rho}$ i.e. $\boldsymbol{\theta}=h(\boldsymbol{v}, \boldsymbol{g})$, where $\boldsymbol{\theta}_m =h\left(v_m, g_m\right) =\mu_m+\log \left(1+\exp \left(\rho_m\right)\right) \cdot g_m, \hspace{7pt} g_m \sim \mathcal{N}(0,1)$, where $m \in [1, \dots , T]$.
For the first term in Equation  \ref{eq:weighted_client-opt}, we use a  minibatch stochastic gradient descent to get an estimate for the $i$-th client as follows:


\begin{equation}
\begin{aligned}
    \Omega^i(\boldsymbol{v_w}) \approx-\frac{n}{b} \frac{1}{K} \sum_{j=1}^b \sum_{k=1}^K \log p_{h\left(\boldsymbol{v_w}, \boldsymbol{g}_k\right)}^i\left(\boldsymbol{D}_j^i\right)
+\zeta \mathbb{D}_{KL}\left(q_{\boldsymbol{v_w}}^i(\boldsymbol{\theta;w}) \| z_{\boldsymbol{v_w}}(\boldsymbol{\theta})\right)  
\end{aligned}
\label{eq:weighted-stochastic-estimator}
\end{equation}

where $b$ and $K$ are minibatch size and Monte Carlo sample
size, respectively.






Here $R$ indicates the number iterations after which the clients upload the localized global models to the server. Like \citep{t2020personalized}, we use an  additional parameter $\beta$ in order to make the algorithm converge faster.

\section{Theoretical Contributions} 
\label{sec: Theoretical Analysis}

Here we provide theoretical analysis related to the averaged generalization error for $\methodprop$ w.r.t our baseline $\baseline$. 
The main results and derivations of the proofs can be found in the Appendix \ref{supplementary:proofs}. We first provide certain definitions here.

\begin{definition}\label{def:HellingerDistance}
The Hellinger distance for a particular client $i$ between the estimate likelihood $\mathcal{P}_{\boldsymbol{\theta}}^i$ and the true likelihood $\mathcal{P}^i$ is defined as follows $d^2(\mathcal{P}_{\theta}^i, \mathcal{P}^i) = \mathbb{E}_{X^i}(1-e^{-\frac{\bigl[f^i_{\boldsymbol\theta}(X^i) - f^i(X^i)\bigr]^2}{8\sigma^2_{\epsilon}}})$
\end{definition}

 
Let $\hat{q^i}(\boldsymbol{\theta};\boldsymbol{w})$ be the corresponding variational solution for the
i-th client’s subproblem under the coreset weighted regime and let us define the following term \textbf{Generalization Error Term}:
$\int_{\Theta}d^2(\mathcal{P}^i_{\boldsymbol{\theta}, w}, \mathcal{P}^i)\hat{q^i}(\boldsymbol{\theta};\boldsymbol{w})d\boldsymbol{\theta}$ as the $i$-th client's generalization error.

\begin{theorem}\label{Theorem:1}
The difference in the upper bound incurred in the overall generalization error of $\methodprop$ as compared w.r.t that of $\baseline$ is always upper bounded by a closed form positive function that depends on the coreset weights and coreset size- $\boldsymbol{\Im}(\boldsymbol{w}, n_k)$. generalization error in the original full data setup 
 \begin{align*}
    \left[\frac{1}{N}\sum_{i=1}^N\int_{\Theta}d^2(\mathcal{P}^i_{\boldsymbol{\theta}}, \mathcal{P}^i)\hat{q^i}(\boldsymbol{\theta})d\boldsymbol{\theta}\right]_{u.b.}
   -  \left[\frac{1}{N}\sum_{i=1}^N\int_{\Theta}d^2(\mathcal{P}^i_{\boldsymbol{\theta}, w}, \mathcal{P}^i)\hat{q^i}(\boldsymbol{\theta};\boldsymbol{w})d\boldsymbol{\theta}\right]_{u.b.} \leq \boldsymbol{\Im}(\boldsymbol{w}, n_k)
    \end{align*}
\end{theorem}

 This indicates that the extra estimation error and approximation model incurred by the coreset weighted objective is a direct function of the coreset weight and thus the coreset size and hence can be measured in closed form. Proof. in Appendix \ref{thm1:Supplementary}

\begin{theorem}\label{Theorem:2}
The convergence rate of the generalization error under $L^2$ norm of $\methodprop$ is minimax optimal up to a logarithmic term (in order $n_k$) for bounded functions ($\boldsymbol\beta$-Hölder-smooth functions) $\{f^i\}^N_{i=1}$,  $\{f^i_{\boldsymbol{\theta}}\}^N_{i=1}$ and $\{f^i_{\boldsymbol{\theta},\boldsymbol{w}}\}^N_{i=1}$ where $C_2$, $C_3$ and $\delta^{\prime}$ are constants (defined in Appendix ) and $\boldsymbol\Lambda$ being the intrinsic dimension of each client's data:

\begin{align*}
\frac{C_F}{N} \sum_{i=1}^N \int_{\boldsymbol{\theta}}\left\|f_{\boldsymbol{\theta},\boldsymbol{w}}^i\left(X^i\right)-f^i\left(X^i\right)\right\|_{L^2}^2 \hat{q^i}(\boldsymbol{\theta}; \boldsymbol{w}) d \boldsymbol{\theta} 
\leq C_2 n_k^{-\frac{2 \beta}{2 \beta+\boldsymbol{\Lambda}}} \log ^{2 \delta^{\prime}}(n_k) .
\end{align*}
and
\begin{align*}
\inf _{\left\{\left\|f_{\boldsymbol{\theta}, \boldsymbol{w}}^i\right\|_{\infty} \leq F\right\}_{i=1}^N\left\{\left\|f^i\right\|_{\infty} \leq F\right\}_{i=1}^N} \frac{C_F}{N} \sum_{i=1}^N 
\int_{\boldsymbol{\theta}}\left\|f_{\boldsymbol{\theta}, \boldsymbol{w}}^i\left(X^i\right)-f^i\left(X^i\right)\right\|_{L^2}^2 \hat{q}^i(\boldsymbol{\theta};\boldsymbol{w}) d \boldsymbol{\theta} \geq C_3 n_k^{-\frac{2 \beta}{2 \beta+\boldsymbol\Lambda}}
\end{align*}

where $n_k$ denotes the coreset size per client dataset and $n$ denotes the original per client dataset size and $\frac{d^2\left(P_{\boldsymbol{\theta}, \boldsymbol{w}}^i, P^i\right)}{\left\|f_{\boldsymbol{\theta},\boldsymbol{w}}^i\left
(X^i\right)-f^i\left(X^i\right)\right\|_{L^2}^2} \geq \frac{1-\exp \left(-\frac{4 F^2}{8 \sigma_\epsilon^2}\right)}{4 F^2} \triangleq C_F$.
\end{theorem}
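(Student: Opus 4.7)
The plan is to prove the upper--lower bound pair by reducing the $L^2$-risk to the Hellinger risk via the constant $C_F$ already stated in the theorem, then invoking the previous Theorem (the Hellinger-distance inequality for $\methodprop$) to obtain the upper bound, and finally importing the classical minimax lower bound for $\beta$-Hölder-smooth regression on a $d$-dimensional intrinsic domain to obtain the matching lower bound. The coreset size $n_k$ will play the role of the effective sample size throughout, replacing $n$ in the baseline $\baseline$ analysis.

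For the upper bound, I would start from the pointwise inequality $C_F \, \|f^i_{\boldsymbol{\theta},\boldsymbol{w}}(X^i)-f^i(X^i)\|_{L^2}^2 \leq d^2(\mathcal{P}^i_{\boldsymbol{\theta},\boldsymbol{w}}, \mathcal{P}^i)$, which is simply the re-arrangement of the $C_F$ hypothesis in the statement. Integrating against $\hat{q^i}(\boldsymbol{\theta};\boldsymbol{w})$, averaging over the $N$ clients, and using the preceding Theorem, the left-hand side is controlled by $\tfrac{1}{N}\sum_i \int d^2(\mathcal{P}^i_{\boldsymbol{\theta},\boldsymbol{w}},\mathcal{P}^i)\,\hat{q^i}(\boldsymbol{\theta};\boldsymbol{w})\,d\boldsymbol{\theta}$, which by the Theorem decomposes into the baseline $\baseline$ estimation error plus the coreset-induced term $G(\boldsymbol{w})$. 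For the baseline term one replays the posterior-contraction argument of \citet{zhang2022personalized}: choose a sieve of DNNs with truncated parameters indexed by $T$, bound the prior mass and Kullback--Leibler neighbourhoods of $\pi(\boldsymbol{\theta})$ around $f^i$ using the $\beta$-Hölder approximation theory on an intrinsic $d$-dimensional manifold, and solve the entropy equation $T \log T \asymp n_k \varepsilon_{n_k}^2$ to get $\varepsilon_{n_k}^2 \asymp n_k^{-2\beta/(2\beta+d)} \log^{2\delta'}(n_k)$. The key substitution is that the client likelihood is now evaluated on the weighted coreset, so the effective sample count entering the entropy balance is $n_k$ rather than $n$. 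Absorbing $G(\boldsymbol{w})$ into the constant $C_2$ (valid provided $G(\boldsymbol{w}) = O(n_k^{-2\beta/(2\beta+d)})$, which is justified by the sparsity constraint $\|\boldsymbol{w}_i\|_0 \leq k$ and the weighted-likelihood control in Eq.~\ref{eq:modified-personalized-weights}) completes the upper bound.

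For the lower bound, I would invoke a standard Fano / Yang--Barron construction: fix a packing of the $\beta$-Hölder ball of size $M_{n_k} \asymp \exp(c \, n_k^{d/(2\beta+d)})$, separated in $L^2$ by $\asymp n_k^{-\beta/(2\beta+d)}$, and apply Fano's inequality with $n_k$ i.i.d.\ samples per client. Because the infimum is taken over \emph{all} estimators $f^i_{\boldsymbol{\theta},\boldsymbol{w}}$ with $\|f^i_{\boldsymbol{\theta},\boldsymbol{w}}\|\leq F$, the Bayesian posterior mean under $\hat{q^i}(\boldsymbol{\theta};\boldsymbol{w})$ is a valid estimator and is therefore subject to the same lower bound; averaging the client-wise bound over $N$ clients preserves the rate, and the $C_F$ constant can be absorbed into $C_3$.

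The main obstacle I expect is the second step of the upper bound, namely showing that the coreset-induced approximation error $G(\boldsymbol{w})$ and the residual variance of the weighted stochastic estimator in Eq.~\ref{eq:weighted-stochastic-estimator} together do not exceed the target rate $n_k^{-2\beta/(2\beta+d)}\log^{2\delta'}(n_k)$. Concretely, one must show that the $L^2(\hat{\pi})$-norm $\|P_{\boldsymbol{\theta}}(\boldsymbol{\mathcal{D}}^i) - P_{\boldsymbol{\theta},\boldsymbol{w}_i}(\boldsymbol{\mathcal{D}}^i)\|^2_{\hat{\pi},2}$ at the Accelerated-IHT optimum decays at least as fast as $n_k^{-2\beta/(2\beta+d)}$ in expectation over $\hat{\pi}$; this is the only place where the sparsity-$k$ structure of $\boldsymbol{w}$ is used nontrivially, and it ties the coreset-optimization rate to the generalization rate. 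Once this calibration between $k$, $n_k$, and $G(\boldsymbol{w})$ is established, the remaining steps are routine adaptations of the $\baseline$ argument with $n$ replaced by $n_k$.
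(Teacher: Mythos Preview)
Your overall skeleton --- pass from the $L^2$ risk to the Hellinger risk via the $C_F$ inequality, upper-bound the averaged Hellinger risk, and match with a minimax lower bound --- is exactly the paper's route. The lower-bound part is also essentially identical: you sketch the Fano/Yang--Barron packing, while the paper simply cites Theorem~8 of \citet{nakada2020adaptive}, which packages the same argument.

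Where you diverge is in how you upper-bound the averaged Hellinger risk. You propose to use the \emph{statement} of the preceding theorem to write the coreset Hellinger average as ``baseline $\baseline$ term $+$ $G(\boldsymbol{w})$'' and then absorb $G(\boldsymbol{w})$ into $C_2$, which forces you to show that the A-IHT optimum drives $\|P_{\boldsymbol{\theta}}(\boldsymbol{\mathcal{D}}^i)-P_{\boldsymbol{\theta},\boldsymbol{w}_i}(\boldsymbol{\mathcal{D}}^i)\|_{\hat\pi,2}^2$ down to the target rate $n_k^{-2\beta/(2\beta+d)}$. The paper never does this. Instead it lifts an \emph{intermediate} inequality from the proof of Theorem~1 (not its final statement), namely
\[
\frac{1}{N}\sum_{i=1}^N\int_{\Theta}d^2(\mathcal{P}^i_{\boldsymbol{\theta},\boldsymbol{w}},\mathcal{P}^i)\,\hat q^i(\boldsymbol{\theta};\boldsymbol{w})\,d\boldsymbol{\theta}
\;\le\; C\,\varepsilon_{n_k}^2 + C'\,r_{n_k} + \frac{C''}{N\zeta}\sum_{i=1}^N \xi_{n_k}^i,
\]
so that every term already carries the coreset sample size $n_k$. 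It then bounds the approximation error via Corollary~6 of \citet{nakada2020adaptive}, $\xi_{n_k}^i\le C_0 T^{-2\beta/d}$, substitutes $T=C_1 n_k^{d/(2\beta+d)}$, and reads off the rate $n_k^{-2\beta/(2\beta+d)}\log^{2\delta'}(n_k)$ directly.

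The practical upshot is that the obstacle you single out as ``the main obstacle'' --- calibrating the A-IHT convergence of $G(\boldsymbol{w})$ against the statistical rate --- is self-imposed. The paper's argument is a mechanical $n\mapsto n_k$ replay of the $\baseline$ analysis and never invokes any quantitative control on the weighted-likelihood gap or on the sparsity level $k$ beyond the assumption that each client holds $n_k$ effective samples. Your route would work if you could establish that extra control, but it is strictly more laborious than what the paper actually does.
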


This indicates that the minimax optimality of the generalization error for $\methodprop$ is in logarithmic bounds w.r.t the coreset size $n_k$. Proof. in Appendix \ref{ref:theorem2:Supplementary}

\begin{theorem}\label{Theorem:3}
The lower bound (l.b.) incurred for the deviation for the weighted coreset $\methodprop$ (\ref{eq:weighted_client-opt}) generalization error is always higher than the lower bound of that for the original $\baseline$ objective (\ref{eq:originalPFedBayes}) with a delta difference (\textbf{Error I} - \textbf{Error II}) as $\mathcal{O}(n_k^{-\frac{2 \beta}{2 \beta+\boldsymbol\Lambda}})$ 
\begin{align*}
\Bigg[\underbrace{\sum_{i=1}^N\int_{\boldsymbol{\Theta}}\left\|f_{\boldsymbol{\theta}, \boldsymbol{w}}^i\left(X^i\right)-f^i\left(X^i\right)\right\|_{L^2}^2 \hat{q^i}(\boldsymbol{\theta}, \boldsymbol{w}) d \boldsymbol{\theta}}_{\text{Coreset weighted objective Generalization Error (\textbf{Error I})}} \Bigg]_{l.b.} 
> \Bigg[\underbrace{\sum_{i=1}^N\int_{\boldsymbol{\Theta}}\left\|f_{\boldsymbol{\theta}}^i\left(X^i\right)-f^i\left(X^i\right)\right\|_{L^2}^2 \hat{q^i}(\boldsymbol{\theta}) d \boldsymbol{\theta}}_{\text{Vanilla objective Generalization Error (\textbf{Error II})}}\Bigg]_{l.b.}
\end{align*}
\end{theorem}
This simply implies that the generalization error suffers in the case due to limited coreset samples but that is bounded in closed form w.r.t. the coreset sample size. Proof. in Appendix: \ref{ref: Theorem deviation lower bound}

\begin{theorem}\label{Theorem:4}
The lower bound incurred in the overall generalization error across all $N$ clients of $\methodprop$ is always higher compared to that of the generalization error in the original full data setup 
 \begin{align*}
    \left[\frac{1}{N}\sum_{i=1}^N\int_{\Theta}d^2(\mathcal{P}^i_{\boldsymbol{\theta}, w}, \mathcal{P}^i)\hat{q^i}(\boldsymbol{\theta};\boldsymbol{w})d\boldsymbol{\theta}\right]_{l.b.} \geq 
    \left[\frac{1}{N}\sum_{i=1}^N\int_{\Theta}d^2(\mathcal{P}^i_{\boldsymbol{\theta}}, \mathcal{P}^i)\hat{q^i}(\boldsymbol{\theta})d\boldsymbol{\theta}\right]_{l.b.}
    \end{align*}
\end{theorem}

 Both Theorem \ref{Theorem:1} and \ref{Theorem:4} implies that the overall spread of the Generalization Error Term in case of coreset weighted objective $\methodprop$ is much more wider than that of the original $\baseline$ case. Proof. in Appendix \ref{ref: Supplementary Theorem:4}

\section{Experiments}

Here we perform our experiments to 
showcase the utility of our method 
$\methodprop$ compared to other baselines like $\baseline$ and do an in-depth analysis of each of the components involved as follows.

\subsection{Utility of A-IHT  in Vanilla Bayesian Coresets Optimization}

In Algorithm 1, since we are employing A-IHT algorithm during coreset updates, we first want to study the utility of applying A-IHT (Accelerated Iterative Thresholding) in a simplistic Bayesian coresets setting using the algorithm proposed by \citep{huang2022coresets}.
For analysis we test the same on Housing Prices 2018 \footnote{https://www.gov.uk/government/statistical-data-sets/price-paid-data-downloads}{data}.
Figure \ref{fig:coresets} showcases the experiments done on the dataset using a riemann linear regression for different coreset sizes of the data (k = 220, 260, 300). As it can be seen the radius capturing the weights w.r.t to the coreset points matches closely at k=300 with that of the true posterior distribution (extreme right), thereby indicating the correctness of approximation and recovery of the true posterior by the A-IHT algorithm.

\begin{figure}[h!]
\centering
\begin{minipage}{.25\textwidth}
  \centering
  \includegraphics[width=.9\linewidth]{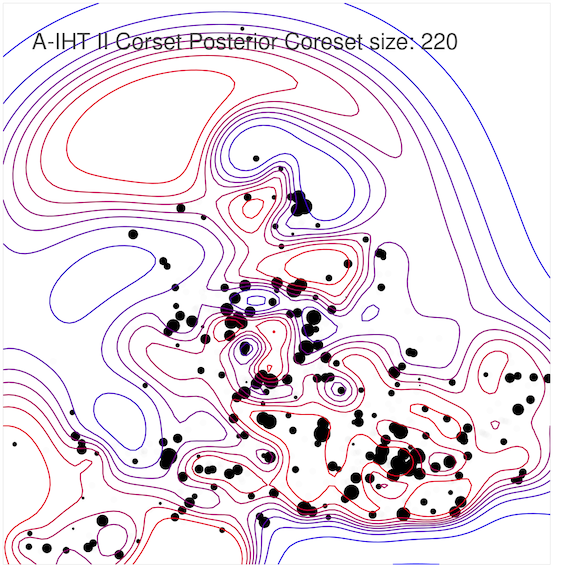}
  \label{fig:coreset1}
\end{minipage}%
\begin{minipage}{.25\textwidth}
  \centering
  \includegraphics[width=.9\linewidth]{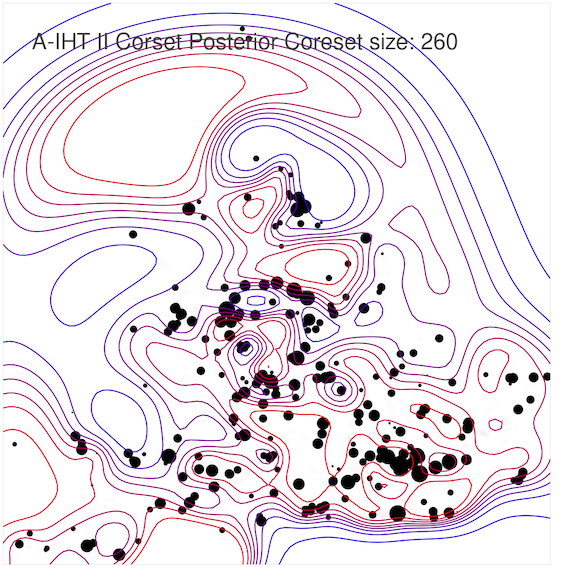}
  \label{fig:coreset2}
\end{minipage}%
\begin{minipage}{.25\textwidth}
  \centering
  \includegraphics[width=.9\linewidth]{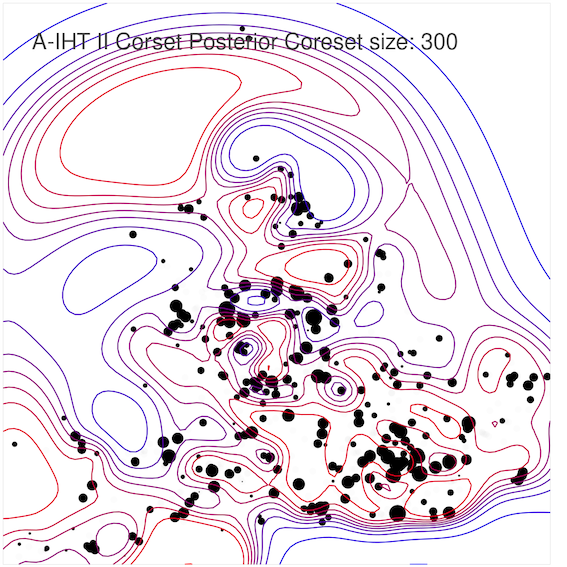}
  \label{fig:coreset3}
\end{minipage}%
\begin{minipage}{.25\textwidth}
  \centering
  \includegraphics[width=.9\linewidth]{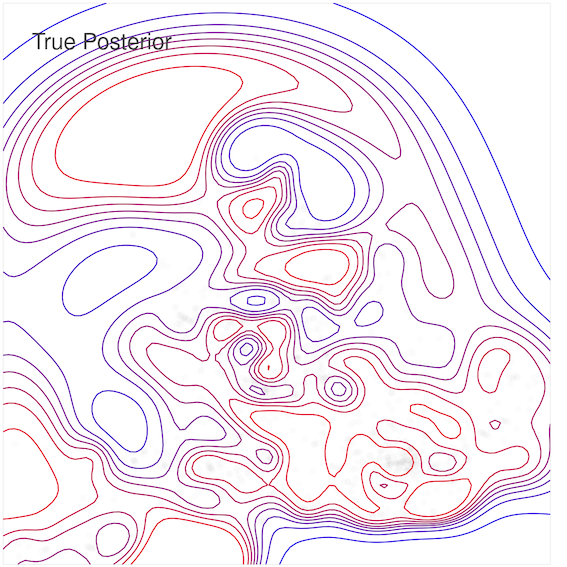}
  \label{fig:coreset4}
\end{minipage}
 \caption{Experiments on Bayesian reimann linear function regression for different settings of coreset size =220,260,300 constructed by Accelerated IHT II. 
        Coreset points are presented as black dots, with their radius indicating assigned weights. Extreme right showcases the true posterior distribution}
\label{fig:coresets}
\end{figure}

\subsection{Experiments on $\methodprop$ against S.O.T.A Federated Learning methods}



Here, we compare the performance of the proposed method $\methodprop$ with a variety of baselines such as FedAvg \citep{pmlr-v54-mcmahan17a}, BNFed \citep{pmlr-v97-yurochkin19a}, pFedMe \citep{NEURIPS2020_f4f1f13c}, perFedAvg \citep{NEURIPS2020_24389bfe}, PFedBayes \citep{zhang2022personalized} based on non-i.i.d. datasets. We generate the non-i.i.d. datasets based on three public benchmark datasets, MNIST \citep{726791}, FMNIST (Fashion- MNIST) \citep{xiao2017fashionmnist} and CIFAR-10 \citep{krizhevsky2009learning}. For MNIST, FMNIST and CIFAR-10 datasets, we follow the non-i.i.d. setting strategy in \citep{t2020personalized}. In our use case we considered total 10 clients each of whom holds a unique local data.

\vspace{6pt}

\begin{table*}[h!]
\centering
\caption{Comparative results of personal and global accuracies (in \%) across all 7 methods}
\label{Table : Accuracy Stats}
\resizebox{\textwidth}{!}{%
\begin{tabular}{@{}c|cc|cc|cc@{}}
\toprule
{\textbf{Method}  (Percentage = sampling fraction) } & \multicolumn{2}{c}{\textbf{MNIST}}                                                                                                                    & \multicolumn{2}{c}{\textbf{FashionMNIST}}                                                                                                                                           & \multicolumn{2}{c}{\textbf{CIFAR}}                                                   \\ \cmidrule(l){2-7} 
                                   & \begin{tabular}[c]{@{}c@{}}Personal Model\end{tabular} & \begin{tabular}[c]{@{}c@{}}Global Model\end{tabular} & \begin{tabular}[c]{@{}c@{}}Personal Model\end{tabular} & \begin{tabular}[c]{@{}c@{}}Global Model\end{tabular} & \begin{tabular}[c]{@{}c@{}}Personal Model\end{tabular} & \begin{tabular}[c]{@{}c@{}}Global Model\end{tabular} \\ \cmidrule(r){1-7}
\textbf{FedAvg (Full/ 50\%)}            &    -                                    &  \textbf{\textcolor{magenta}{92.39}}(90.60)                                        & -                                              & 85.42(83.90)                                     & -                                                    &   \textbf{\textcolor{red}{79.05}}(56.73) \\
\textbf{BNFed (Full / 50\%)}  & - &  82.95(80.02) & - & 70.1(69.68) &- &  44.37(39.52)
\\
\textbf{pFedMe (Full / 50\%)}  & - & 91.25(89.67) & \textbf{\textcolor{orange}{92.02}}(84.71) & 84.41(83.45)  & \textbf{\textcolor{magenta}{77.13}}(66.75) & \textbf{\textcolor{magenta}{70.86}}(51.18)
\\
\textbf{perFedAvg (Full / 50\%)}  & \textbf{\textcolor{orange}{98.27}} & - & 88.51(84.90) & - &69.61(52.98) &-
\\
\textbf{$\baseline$ (Full / 50\%)}  & \textbf{\textcolor{red}{98.79}}(90.88) & \textbf{\textcolor{red}{97.21}}(92.33) & \textbf{\textcolor{red}{93.01}}(85.95) & \textbf{\textcolor{red}{93.30}}(82.33) &\textbf{\textcolor{red}{83.46}}(\textbf{\textcolor{orange}{73.94}}) &64.40(60.84)
\\
\textbf{$\randomsub$ (50\%)}  & 80.2 & 88.4 & 87.12 & \textcolor{magenta}{\textbf{90.75}} &48.31 &61.35
\\
\textbf{$\methodprop$ (k = 50\%)}  & \textbf{\textcolor{magenta}{92.48}} & \textbf{\textcolor{orange}{96.3}} & \textbf{\textcolor{magenta}{89.55}} & \textcolor{orange}{\textbf{92.7}} & 69.66 & \textbf{\textcolor{orange}{71.5}}
\\
\bottomrule
\end{tabular}%
\label{table:accuracy-report}
}
\subcaption[]{We report accuracies on both global and personal model for the current set of proposed methods across major datasets like \textbf{MNIST, CIFAR, FashionMNIST}. \textbf{\textcolor{red}{Red}} indicates the highest accuracy column-wise. Similarly \textbf{\textcolor{orange}{Orange}} and \textbf{\textcolor{magenta}{Magenta}} indicates the 2nd and 3rd best modelwise accuracy. (-) indicates no accuracy reported due to very slow convergence of the corresponding algorithm.
\textbf{Full indicates training on full dataset and 50\% is on using half the data size after randomly sampling 50\% of the training set.
}}
\end{table*}
\vspace{-15pt}

In Table \ref{Table : Accuracy Stats},  we showcase the accuracy statistics of the corresponding baselines discussed above with our method $\methodprop$ across two different configurations \textbf{Full Dataset Training} and \textbf{50\% Training} indicating half of the training samples were selected at random and then the corresponding algorithm was trained). As observed in almost all cases our method $\methodprop$ beats $\baseline$(random 50\% data sampled) by the following margins : +4.87\% on MNIST, +8.61\% on FashionMNIST, +9.71\% and almost on other baselines (random 50\% data sampled) and some baselines even when they were trained on full dataset (e.g. our method does better than PFedMe on FashionMNIST). In Appendix \ref{sec:Communication Complexity Analysis For Different Coreset Sizes} we showcase the communication complexity of our proposed method.



\begin{figure}[h!]
\centering
\begin{subfigure}{.45\textwidth}
  \centering
\includegraphics[width=.9\linewidth]{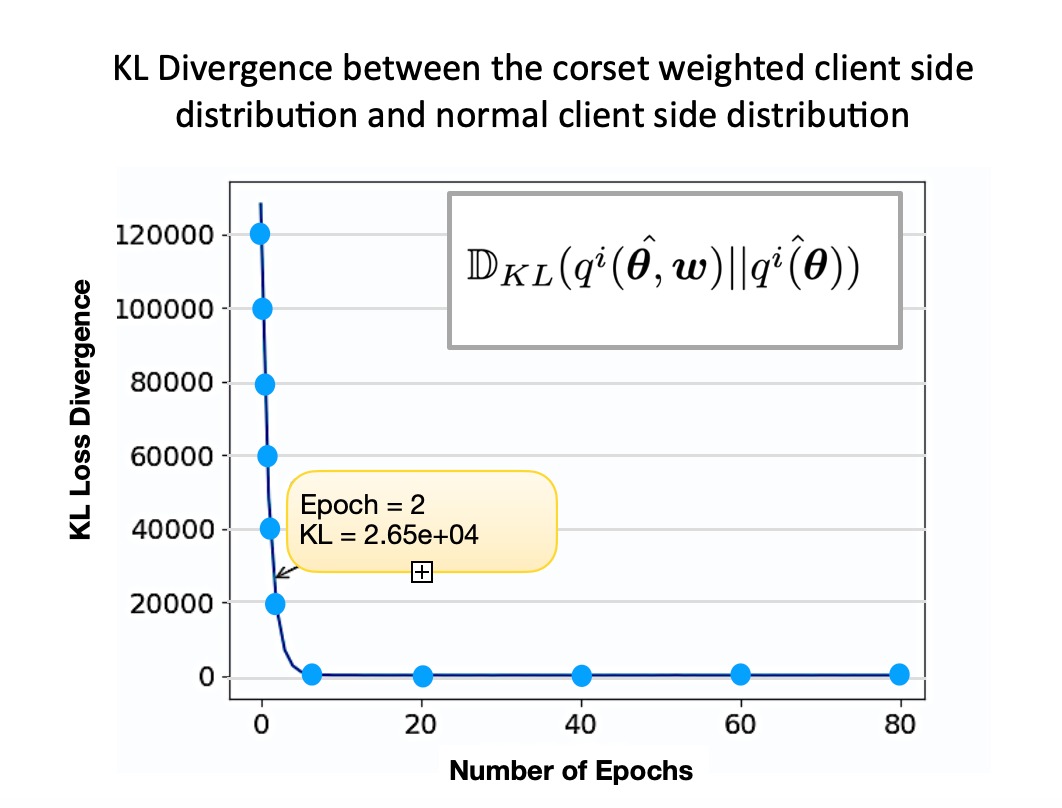}
\caption{The optimal distribution parameters learnt after taking a coreset of the data tend to match up with parameters, when trained on the full dataset after few epochs resulting in decreasing KL divergence score. This is in line with our initial hypothesis \ref{initial_hyp}}
  \label{fig:coreset1}
\end{subfigure}%
\hfill
\begin{subfigure}{.45\textwidth}
  \centering
\includegraphics[width=.9\linewidth]{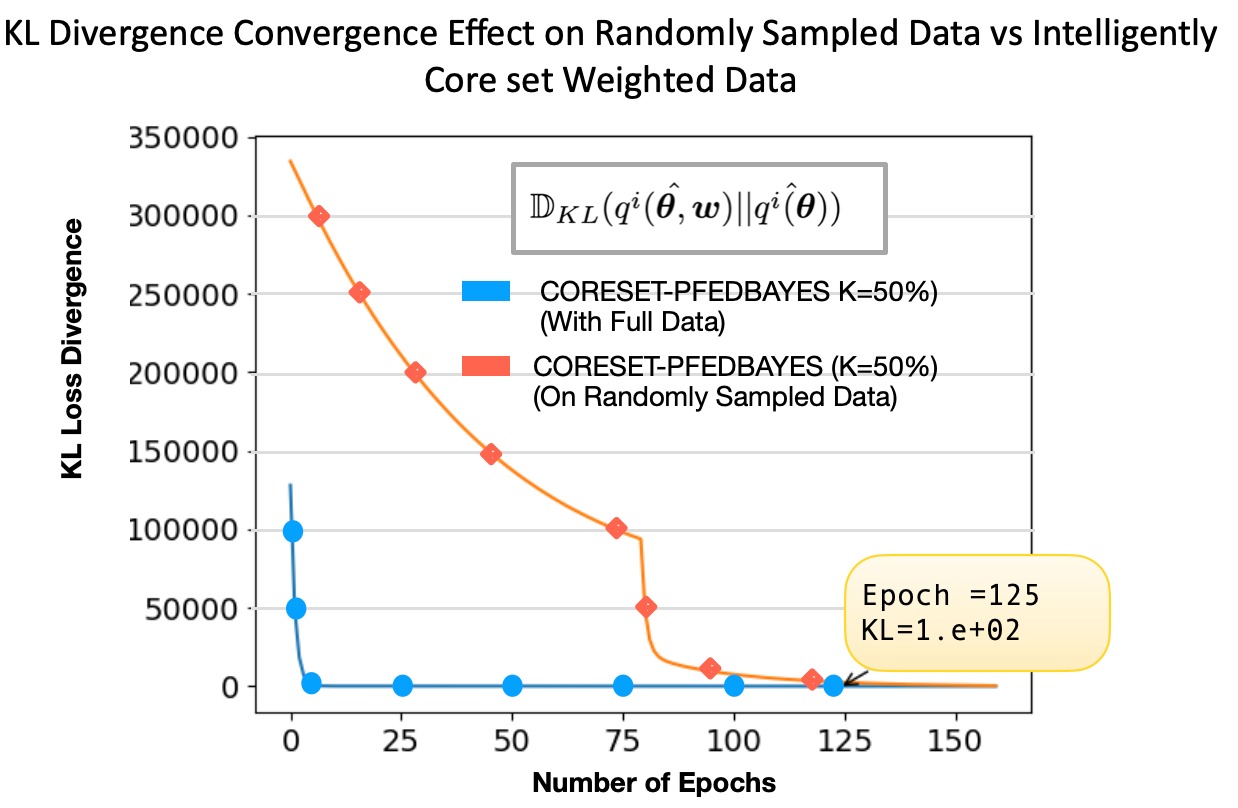}
\caption{We are comparing a random subset selection of the data vs when a subset is selected using our method, the early convergence of the KL-divergence shows the better performance of our method. When trained on a random subset of data, the model takes more epochs to converge.}
  \label{fig:coreset2}
\end{subfigure}%
 \caption{KL Divergence Plot over Number of Epochs (MNIST Dataset)}
\label{fig:coresets}
\end{figure}


\subsection{Experiments on $\methodprop$ against Submodular Subset Selection}

In order to showcase the advantage of model-centric subset selection methods over traditional data-centric(model-agnostic) methods like our proposed work $\methodprop$ which takes into account matching the client distribution under coreset setting $q^i(\boldsymbol{\theta}; \boldsymbol{w})$ to that in normal setting $q^i(\boldsymbol{\theta})$.
Hence, we compare our proposed method $\methodprop$ against \textbf{submodular based functions} (\textit{See Appendix \ref{Submodular_function} for definition})
(specifically, \textit{diversity} based submodular functions as the aim is to select a subset of data points that are most diverse). The discussion on some of the common diversity functions and their properties with regard to monotonicity and submodularity are provided in Appendix \ref{methods:submod}.
Each of the medical datasets consists of 3 classes out of which 1 class (Normal is kept as common) between 2 clients and the data about the other two classes is distributed separately to the 2 clients. Our aim here is to deploy our proposed method in this setting and compare against submodular based subset selection approaches.

More details  regarding the dataset description along with the experimental configurations can be found under Appendix. Note for each of the submodular function baselines, the data was sampled using a submodular function optimization strategy post which FedAvg was applied as the Federated Learning algorithm.

\begin{table*}[h!]
\centering
\caption{Comparative results of classwise global accuracies of all 9 methods on \textbf{3 different medical datasets} and \textbf{2 clients}}
\label{tab:acc}
\resizebox{\textwidth}{!}{%
\begin{tabular}{@{}c|ccc|ccc|ccc@{}}
\toprule
{\textbf{Method}  (Percentage = sampling fraction) } & \multicolumn{3}{c}{\textbf{COVID-19 Radiography Database}}                                                                                                                    & \multicolumn{3}{c}{\textbf{APTOS 2019 Blindness Detection}}                                                                                                                                           & \multicolumn{3}{c}{\textbf{OCTMNIST Dataset}}                                                   \\ \cmidrule(l){2-10} 
                                   & \begin{tabular}[c]{@{}c@{}}Normal\\ X-ray\end{tabular} & \begin{tabular}[c]{@{}c@{}}COVID\\ X-ray\end{tabular} & \begin{tabular}[c]{@{}c@{}}Lung Opacity\\ X-ray\end{tabular} & \begin{tabular}[c]{@{}c@{}}Normal\\ Retina\end{tabular} & \begin{tabular}[c]{@{}c@{}}Mild Diabetic\\ Retinopathy\end{tabular} & \begin{tabular}[c]{@{}c@{}}Severe Diabetic\\ Retinopathy\end{tabular} & \begin{tabular}[c]{@{}c@{}}Normal\\ Retina\end{tabular} & DME           & Drusen        \\ \cmidrule(r){1-10}
\textbf{Vanilla FedAvg (Full)}            & \textbf{\textcolor{red}{0.914 $\pm$ 0.007}}                                          & \textbf{\textcolor{orange}{0.924 $\pm$ 0.005}}                                         & \textbf{\textcolor{red}{0.898 $\pm$ 0.007} }                                               & \textbf{\textcolor{red}{0.968 $\pm$ 0.023}}                                           & \textbf{\textcolor{red}{0.927 $\pm$ 0.019} }                                                      & \textbf{\textcolor{red}{0.853 $\pm$ 0.004} }                                                         & \textbf{\textcolor{red}{0.908 $\pm$ 0.026} }                                          & 0.837 $\pm$ 0.103 & \textbf{\textcolor{magenta}{0.855 $\pm$ 0.092} } \\
\textbf{$\baseline$(Full)}            & \textbf{\textcolor{red}{0.953 $\pm$ 0.006}}                                          & \textbf{\textcolor{orange}{0.938 $\pm$ 0.004}}                                         & \textbf{\textcolor{red}{0.902 $\pm$ 0.011} }                                               & \textbf{\textcolor{red}{0.951 $\pm$ 0.057}}                                           & \textbf{\textcolor{red}{0.941 $\pm$ 0.052} }                                                      & \textbf{\textcolor{red}{0.911 $\pm$ 0.028} }                                                         & \textbf{\textcolor{red}{0.926 $\pm$ 0.013} }                                          & 0.851 $\pm$ 0.021 & \textbf{\textcolor{magenta}{0.874 $\pm$ 0.012} } \\

\textbf{Independent Learning (Full)}      & \textbf{\textcolor{magenta}{0.898 $\pm$ 0.001} }                                         & \textbf{\textcolor{magenta}{0.869 $\pm$ 0.002} }                                        & \textbf{\textcolor{orange}{0.884 $\pm$ 0.003} }                                               & \textbf{\textcolor{magenta}{0.945 $\pm$ 0.025} }                                           & 0.877 $\pm$ 0.049                                                       & 0.830 $\pm$ 0.053                                                         & \textbf{\textcolor{orange}{0.890 $\pm$ 0.073} }                                           & 0.798 $\pm$ 0.076 & \textbf{\textcolor{red}{0.890 $\pm$ 0.041} } \\
\textbf{RandomSub FedAvg (50\%)}   & 0.892 $\pm$ 0.024                                          & 0.670 $\pm$ 0.059                                         & 0.583 $\pm$ 0.033                                                & 0.918 $\pm$ 0.047                                           & 0.835 $\pm$ 0.091                                                       & 0.832 $\pm$ 0.021                                                         & 0.811 $\pm$ 0.070                                           & 0.753 $\pm$ 0.089 & 0.805 $\pm$ 0.068 \\
\textbf{LogDet FedAvg (50\%)}      & 0.887 $\pm$ 0.046                                          & 0.838 $\pm$ 0.086                                         & 0.810 $\pm$ 0.062                                                & 0.918 $\pm$ 0.027                                           & \textbf{\textcolor{magenta}{0.885 $\pm$ 0.082} }                                                       & \textbf{\textcolor{magenta}{0.850 $\pm$ 0.057} }                                                         & \textbf{\textcolor{magenta}{0.842 $\pm$ 0.046} }                                           & \textbf{\textcolor{red}{0.897 $\pm$ 0.039}} & 0.845 $\pm$ 0.068 \\
\textbf{DispSum FedAvg (50\%)}     & \textbf{\textcolor{orange}{0.907 $\pm$ 0.015}}                                          & \textbf{\textcolor{red}{0.925 $\pm$ 0.049}}                                        & \textbf{\textcolor{magenta}{0.812 $\pm$ 0.086} }                                               & \textbf{\textcolor{magenta}{0.945 $\pm$ 0.043} }                                           & \textbf{\textcolor{orange}{0.890 $\pm$ 0.095} }                                                       & \textbf{\textcolor{orange}{0.852 $\pm$ 0.061}}                                                         & 0.834 $\pm$ 0.044                                           & \textbf{\textcolor{magenta}{0.887 $\pm$ 0.082} } & \textbf{\textcolor{orange}{0.863 $\pm$ 0.094}} \\
\textbf{DispMin FedAvg (50\%)}     & 0.866 $\pm$ 0.018                                          & 0.780 $\pm$ 0.045                                         & 0.751 $\pm$ 0.069                                                & \textbf{\textcolor{orange}{0.963 $\pm$ 0.021}}                                           & 0.851 $\pm$ 0.067                                                       & 0.765 $\pm$ 0.033                                                         & 0.831 $\pm$ 0.011                                           & \textbf{\textcolor{orange}{0.892 $\pm$ 0.066}} & 0.835 $\pm$ 0.085 \\ 
\textbf{$\methodprop$ (50\%)}            & \textbf{\textcolor{red}{0.932 $\pm$ 0.003}}                                          & \textbf{\textcolor{orange}{0.919 $\pm$ 0.013}}                                         & \textbf{\textcolor{red}{0.871 $\pm$ 0.025} }                                               & 0.921 $\pm$ 0.016                                          & \textbf{\textcolor{red}{0.894 $\pm$ 0.029} }                                                      & \textbf{\textcolor{red}{0.886 $\pm$ 0.017} }                                                         & \textbf{\textcolor{red}{0.916 $\pm$ 0.042} }                                          & 0.805 $\pm$ 0.008 & \textbf{\textcolor{magenta}{0.816 $\pm$ 0.011} } \\
\bottomrule
\end{tabular}%
\label{table:accuracy-report}
}
\subcaption[]{We report classwise accuracies for the current set of proposed methods for all 3 medical datasets. \textbf{\textcolor{red}{Red}} indicates the highest value in accuracy column-wise (i.e. for a particular class for a dataset across all 9 baselines). Similarly \textbf{\textcolor{orange}{Orange}} and \textbf{\textcolor{magenta}{Magenta}} indicates the 2nd and 3rd best classwise accuracy. \textbf{Colors for Vanilla FedAvg, $\baseline$ , $\methodprop$ are grouped together} to primarily compare against subset selection strategies}
\end{table*}

As observed in Table 2 $\methodprop$ performs better than submodular based approaches on average across Covid-19 and APTOS. This is promising as it indicates model centric subset selection is much more useful in terms of performance than model agnostic subset selection methods.




\section{Conclusion \& Future work}
In this work we proposed several novel objective formulations that draw and synthesize from previous works of two separate domains: coreset optimization and federated learning. Through our extensive experimentations, our proposed method showcases significant gains over traditional federated learning approaches and submodularity based optimization functions followed by Federated Learning. We also showcased through theoretical analysis, how the average generalization error is minimax optimal upto logarithm bounds and how that estimation and approximation error compares against $\baseline$. In future we want to look into how client-wise data distribution affects the current scheme and how we can make the model more robust towards adversarial attacks from (say) skewed data distribution over client-side model parameters. Also, the interplay of how such coreset weights can affect model updates in a privacy-preserving manner is somewhat interesting to explore further.

\bibliography{ref}
\bibliographystyle{iclr2024_conference}

\newpage
\onecolumn
\allowdisplaybreaks
\par\noindent\rule{\textwidth}{1pt}
\begin{center}
\large\textbf{Supplementary Material: 
\papertitle}
\end{center}
\par\noindent\rule{\textwidth}{0.4pt}

In this supplementary material we discuss extensively on the proofs. involved for the theoretical analysis for $\methodprop$ along with more fine-grained experimental details and corresponding baselines.

\section{Proofs}
\label{supplementary:proofs}
Here we discuss the proofs involved with particular propositions and theorems specified in the Theoretical Contributions of this paper.
Utilising the assumptions taken in \cite{zhang2022personalized}, \cite{polson2018posterior} we consider the analysis for equal-width Bayesian Neural network.

\textbf{Assumption 1}: The widths of the neural network are equal width i.e. $s_i = M$.

\textbf{Assumption 2}: Each individual client $i \in [N]$ has equal coreset size of samples $n_k < n$.

\textbf{Assumption 3}: Parameters $s_0, n$ (\textit{total client dataset size}) $n_k$ (\textit{ coreset client dataset size}), $M$, $L$ (number of DNN layers as per Section \ref{Sec:Notations}) are large enough such that the sequence $\sigma_n^2$ is bounded as follows
 
$$
\sigma_n^2=\frac{T}{8 n} A \leq \Omega^2,
$$
where $\tau=\Omega M$ and
$$
\begin{aligned}
& A=\log ^{-1}\left(3 s_0 M\right) \cdot(2 \tau)^{-2(L+1)} \\
& {\left[\left(s_0+1+\frac{1}{\tau-1}\right)^2+\frac{1}{(2 \tau)^2-1}+\frac{2}{(2 \tau-1)^2}\right]^{-1} .}
\end{aligned}
$$

Here $T$ indicates the total number of parameters as defined in Section \ref{Sec:Notations}

Similarly, utilising the coreset regime, we have the following:
$$
\sigma_{n_k}^2=\frac{T}{8 n_k} A \leq \Omega^2,
$$

Since $n_k << n$, hence $\sigma^2_{n_k} >> \sigma^2_{n}$

\textbf{Assumption 4}\label{Assumption:3}: We consider 1-Lipschitz continuous activation function $\sigma(\bullet)$

We also define here a few terms as defined in \citep{zhang2022personalized} which would be useful for our following proof proposals as well.

\begin{definition}\label{def: Prelims_original}
Preliminaries and Definitions required for theoretical proofs under $\baseline$
\begin{align*}
& d^2(\mathcal{P}_{\boldsymbol{\theta}}^i, \mathcal{P}^i)=\mathbb{E}_{X^i}(1-e^{-\frac{\bigl[f^i_{\boldsymbol{\theta}}(X^i) - f^i(X^i)\bigr]^2}{8\sigma^2_{\epsilon}}}) \\
& r_n=((L+1) T / n) \log M+(T / n) \log \left(s_0 \sqrt{n / T}\right) \\
& \xi_n^i=\inf _{\boldsymbol{\theta} \in \boldsymbol{\Theta}(L, \boldsymbol{\mathcal{S}}),\|\boldsymbol{\theta}\|_{\infty} \leq \Omega }\left\|f_{\boldsymbol{\theta}}^i-f^i\right\|_{\infty}^2, \\
& \varepsilon_n=n^{-\frac{1}{2}} \sqrt{(L+1) T \log M+T \log \left(s_0 \sqrt{n / T}\right)} \log ^\delta(n) = \sqrt{r_n}\log ^\delta(n),
\end{align*}
\text{where} $\delta > 1$
\end{definition}

Here $r_n$ indicates the variational error incurred due to the Bayesian approximation to the true posterior distribution in Equation \ref{eq:originalPFedBayes} and $\xi_n^i$ indicates the approximation error incurred during regression w.r.t the actual function to be learnt.

Similarly for the coreset size $n_k$ we define the following:

\begin{definition}\label{def:prelims_coreset_weight}
    Preliminaries and Definitions required for theoretical proofs under $\methodprop$
\begin{align*}
    & d^2(\mathcal{P}_{\boldsymbol{\theta},\boldsymbol{w}}^i, \mathcal{P}^i)=\mathbb{E}_{X^i}(1-e^{-\frac{\bigl[f^i_{\boldsymbol{\theta},\boldsymbol{w}}(X^i) - f^i(X^i)\bigr]^2}{8\sigma^2_{\epsilon}}}) \\
    & \xi_{n_k}^i=\inf _{\boldsymbol{\theta} \in \boldsymbol{\Theta}(L, \boldsymbol{\mathcal{S}}),\|\boldsymbol{\theta}\|_{\infty} \leq \Omega }\left\|f_{\boldsymbol{\theta},\boldsymbol{w}}^i-f^i\right\|_{\infty}^2 \\
    & r_{n_k}=((L+1) T / n_k) \log M+(T / n_k) \log \left(s_0 \sqrt{n_k / T}\right) \\
    & \varepsilon_{n_k}={n_k}^{-\frac{1}{2}} \sqrt{(L+1) T \log M+T \log \left(s_0 \sqrt{n_k / T}\right)} \log ^\delta(n_k) = \sqrt{r_{n_k}}\log ^\delta(n_k)
    & 
\end{align*}
\end{definition}

\begin{lemma}\label{ref: lemma: hellinger symmetry}
    The Hellinger Distance from Definition \ref{def:HellingerDistance} is symmetrical in its arguments $\mathcal{P}^i_{\boldsymbol\theta}$ and $\mathcal{P}^i$.
\end{lemma}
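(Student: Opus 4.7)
The plan is to unpack the definition of Hellinger distance given in Definition~\ref{def: Prelims_original} and observe that the only place where the two distributions enter is through the squared difference $[f^i_{\boldsymbol{\theta}}(X^i) - f^i(X^i)]^2$ inside the exponent. Since this square is invariant under swapping the two arguments, the entire expression, and hence the expectation over $X^i$, is symmetric.

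Concretely, I would first write out $d^2(\mathcal{P}^i, \mathcal{P}^i_{\boldsymbol{\theta}})$ by swapping the roles of the two arguments in the definition, obtaining
\[
d^2(\mathcal{P}^i, \mathcal{P}^i_{\boldsymbol{\theta}}) \;=\; \mathbb{E}_{X^i}\!\left(1 - \exp\!\left(-\tfrac{[f^i(X^i) - f^i_{\boldsymbol{\theta}}(X^i)]^2}{8\sigma^2_{\epsilon}}\right)\right).
\]
Then I would invoke the trivial algebraic identity $(a-b)^2 = (b-a)^2$ applied pointwise with $a = f^i(X^i)$ and $b = f^i_{\boldsymbol{\theta}}(X^i)$, which makes the integrand identical to that of $d^2(\mathcal{P}^i_{\boldsymbol{\theta}}, \mathcal{P}^i)$. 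Linearity of expectation then yields $d^2(\mathcal{P}^i, \mathcal{P}^i_{\boldsymbol{\theta}}) = d^2(\mathcal{P}^i_{\boldsymbol{\theta}}, \mathcal{P}^i)$.

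There is no substantive obstacle here: the lemma is essentially a one-line consequence of the fact that the chosen form of the Hellinger-type distance depends on the two functions only through a squared difference and a common measure $\mathbb{E}_{X^i}$ that does not depend on either argument. I would only flag that if one later applied an analogous identity to $d^2(\mathcal{P}^i_{\boldsymbol{\theta}, \boldsymbol{w}}, \mathcal{P}^i)$ from Definition~\ref{def:prelims_coreset_weight}, the exact same argument goes through verbatim, so the symmetry claim extends to the coreset-weighted variant used later in the theoretical analysis.
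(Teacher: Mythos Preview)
Your proposal is correct and matches the paper's own proof essentially line for line: the paper simply writes $d^2(\mathcal{P}_{\boldsymbol{\theta}}^i, \mathcal{P}^i)$ using the definition, swaps $f^i_{\boldsymbol\theta}(X^i)$ and $f^i(X^i)$ inside the squared term, and reads off $d^2(\mathcal{P}^i, \mathcal{P}_{\boldsymbol{\theta}}^i)$. Your additional remark that the same argument applies to the coreset-weighted variant is accurate and slightly more explicit than what the paper states.
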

\begin{proof}

    It is easy to show that,
\begin{align}
d^2(\mathcal{P}_{\boldsymbol{\theta}}^i, \mathcal{P}^i) & = \mathbb{E}_{X^i}(1-e^{-\frac{[f^i_{\boldsymbol\theta}(X^i) - f^i(X^i)]^2}{8\sigma^2_{\epsilon}}})\\
&= \mathbb{E}_{X^i}(1-e^{-\frac{[f^i(X^i) - f^i_{\boldsymbol\theta}(X^i)]^2}{8\sigma^2_{\epsilon}}})\\
& = d^2(\mathcal{P}^i, \mathcal{P}_{\boldsymbol{\theta}}^i)
\end{align}\label{proof:HellingeDistanceSymmetry}
\end{proof}

\subsection*{\textbf{Proof. of Theorem }\ref{Theorem:1}}

\begin{thm}\label{thm1:Supplementary}

The difference in the upper bound incurred in the overall generalization error of $\methodprop$ as compared w.r.t that of $\baseline$ is always upper bounded by a closed form positive function that depends on the coreset weights and coreset size- $\boldsymbol{\Im}(\boldsymbol{w}, n_k)$. generalization error in the original full data setup 
 \begin{align*}
    \left[\frac{1}{N}\sum_{i=1}^N\int_{\Theta}d^2(\mathcal{P}^i_{\boldsymbol{\theta}}, \mathcal{P}^i)\hat{q^i}(\boldsymbol{\theta})d\boldsymbol{\theta}\right]_{u.b.}
   -  \left[\frac{1}{N}\sum_{i=1}^N\int_{\Theta}d^2(\mathcal{P}^i_{\boldsymbol{\theta}, w}, \mathcal{P}^i)\hat{q^i}(\boldsymbol{\theta};\boldsymbol{w})d\boldsymbol{\theta}\right]_{u.b.} \leq \boldsymbol{\Im}(\boldsymbol{w}, n_k)
    \end{align*}
\end{thm}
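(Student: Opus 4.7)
The plan is to leverage the PFedBayes generalization bound (Theorem 1 of \citet{zhang2022personalized}) as a black-box template, re-execute the same argument in the coreset regime to obtain a matching upper bound for $\methodprop$, and then subtract the two expressions. First I would record the baseline statement: under Assumptions 1--4, one has
\begin{align*}
\Bigl[\tfrac{1}{N}\sum_{i=1}^N\int_{\Theta}d^2(\mathcal P^i_{\boldsymbol\theta},\mathcal P^i)\,\hat q^i(\boldsymbol\theta)\, d\boldsymbol\theta\Bigr]_{u.b.}
\;=\; C_1\Bigl(\varepsilon_n^2 + \tfrac{1}{N}\sum_{i=1}^N \xi_n^i\Bigr),
\end{align*}
with $\varepsilon_n,\xi_n^i$ as in Definition \ref{def: Prelims_original}. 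The baseline proof has three ingredients: a variational (ELBO) bound scaling with $r_n$, an approximation error $\xi_n^i$ coming from a H\"older-smooth minimizer, and a uniform Type-I/II analysis on a Hellinger sieve via the prior-mass condition.

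Next I would repeat each of those three ingredients for the coreset-weighted model $\mathcal P^i_{\boldsymbol\theta,\boldsymbol w}$ with variational solution $\hat q^i(\boldsymbol\theta;\boldsymbol w)$. Two changes enter. The effective sample size drops from $n$ to $n_k$, so every occurrence of $n$ becomes $n_k$ and the variational/approximation terms turn into $\varepsilon_{n_k}^2$ and $\xi_{n_k}^i$ of Definition \ref{def:prelims_coreset_weight}. The likelihood itself is also perturbed by $\boldsymbol w$, which I would propagate into the bound through the triangle-type inequality
\begin{align*}
d^2(\mathcal P^i_{\boldsymbol\theta,\boldsymbol w}, \mathcal P^i) \;\le\; 2\, d^2(\mathcal P^i_{\boldsymbol\theta,\boldsymbol w}, \mathcal P^i_{\boldsymbol\theta}) + 2\, d^2(\mathcal P^i_{\boldsymbol\theta}, \mathcal P^i),
\end{align*}
combined with a Lipschitz comparison between Hellinger distance and the $L^2(\hat\pi)$-norm in which the coreset objective is phrased; Lemma \ref{ref: lemma: hellinger symmetry} permits swapping arguments wherever convenient. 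The first cross term is then absorbed into $\mathcal G^i(\boldsymbol w) = \|P_{\boldsymbol\theta}(\mathcal D^i) - P_{\boldsymbol\theta,\boldsymbol w}(\mathcal D^i)\|^2_{\hat\pi,2}$ from Eq.~\eqref{eq:Bayesian Coreset original Eq}. The upshot is a closed-form upper bound
\begin{align*}
\Bigl[\tfrac{1}{N}\sum_{i=1}^N\int_{\Theta}d^2(\mathcal P^i_{\boldsymbol\theta,\boldsymbol w},\mathcal P^i)\,\hat q^i(\boldsymbol\theta;\boldsymbol w)\, d\boldsymbol\theta\Bigr]_{u.b.}
\;=\; C_1'\Bigl(\varepsilon_{n_k}^2 + \tfrac{1}{N}\sum_{i=1}^N \xi_{n_k}^i + \tfrac{1}{N}\sum_{i=1}^N \mathcal G^i(\boldsymbol w)\Bigr).
\end{align*}

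With both upper bounds in hand, the theorem follows by subtracting and taking the positive part. Setting
\begin{align*}
\boldsymbol\Im(\boldsymbol w, n_k) \;:=\; C_1\Bigl(\varepsilon_n^2 + \tfrac{1}{N}\sum_i \xi_n^i\Bigr) + C_1'\Bigl(\varepsilon_{n_k}^2 + \tfrac{1}{N}\sum_i \xi_{n_k}^i + \tfrac{1}{N}\sum_i \mathcal G^i(\boldsymbol w)\Bigr)
\end{align*}
gives a manifestly positive quantity depending only on $\boldsymbol w$ (through $\mathcal G^i$) and on $n_k$ (through $\varepsilon_{n_k},\xi_{n_k}^i$), and the required inequality is immediate, since the difference of two nonnegative upper bounds is at most their sum.

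The main obstacle I anticipate is the second step: extending the Bayesian exponential-inequality machinery from the unweighted likelihood $P_{\boldsymbol\theta}$ to the coreset-reweighted $P_{\boldsymbol\theta,\boldsymbol w}$. Concretely, I need to show that the reweighted log-likelihood $\sum_j w_{i,j}\log p_{\boldsymbol\theta}(\mathcal D^i_j)$ still supports a Hellinger test with the right Type-II rate. The sparsity constraint $\|\boldsymbol w_i\|_0\le k$ together with the Monte-Carlo Hilbert-space approximation behind Eq.~\eqref{eq:Bayesian Coreset original Eq} are the two levers that make this work: the former keeps only $k$ active terms and the latter bounds the uniform gap between the weighted and unweighted ELBOs over the effective H\"older ball. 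Once that uniform control is in place, the rest of the derivation mirrors the baseline proof verbatim with $n$ replaced by $n_k$.
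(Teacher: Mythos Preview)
Your proposal reaches a valid inequality, but by a route that is both different from and much cruder than the paper's. The paper does not take the sum of the two upper bounds; instead it computes their \emph{actual difference}. Concretely, it re-runs the Donsker--Varadhan/Pati et al.\ argument once with sample size $n$ and once with $n_k$, obtaining explicit upper-bound expressions whose difference is
\[
\boldsymbol\Im(\boldsymbol w,n_k)\;=\;C(\varepsilon_n^2-\varepsilon_{n_k}^2)\;+\;\zeta C'(nr_n-n_kr_{n_k})\;+\;\tfrac{C''}{N}\sum_{i}(n\xi_n^i-n_k\xi_{n_k}^i),
\]
and then proves each of the three ``drift'' terms is nonnegative via two monotonicity lemmas in $n$ versus $n_k$ (namely $nr_n>n_kr_{n_k}$ and $\varepsilon_n^2>\varepsilon_{n_k}^2$). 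The $\boldsymbol w$-dependence enters only through $\xi_{n_k}^i=\inf_{\boldsymbol\theta}\|f^i_{\boldsymbol\theta,\boldsymbol w}-f^i\|_\infty^2$; there is no triangle inequality on Hellinger distances and no appearance of $\mathcal G^i(\boldsymbol w)$.

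Your definition $\boldsymbol\Im=[\text{baseline u.b.}]+[\text{coreset u.b.}]$ is trivially positive and trivially dominates the difference, but it discards all the structure: you never actually need $n_k<n$, nor the coreset analysis, nor the $\mathcal G^i(\boldsymbol w)$ term you worked to introduce, since for nonnegative $A,B$ one always has $A-B\le A+B$. The Hellinger triangle-inequality step and the Hellinger-to-$L^2(\hat\pi)$ Lipschitz comparison you propose are therefore a detour that neither the paper uses nor your own final bound requires. If you want to match the paper's content, drop the sum-of-bounds shortcut, write down the two explicit upper-bound expressions, subtract them term by term, and prove the sign of each difference using the monotonicity of $n\mapsto nr_n$ and $n\mapsto\varepsilon_n^2$.
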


\begin{proof}

Let us define $\log \eta(P_{\boldsymbol\theta}^i, P^i) = l_n(P_{\boldsymbol\theta}^i, P^i)/\zeta + nd^2(P_{\boldsymbol{\theta}}^i, P^i) $.

Using Theorem 3.1 of \cite{pati2018statistical}
with probability at most $e^{-Cn_k\varepsilon^2_{n_k}}$ , where $C$ is a constant, with high probability for $\methodprop$ we have 

\begin{equation}
    \int_{\Theta}\eta(\mathcal{P}^i_{\boldsymbol{\theta}, \boldsymbol{w}}, \mathcal{P}^i)z^{*}(\boldsymbol{\theta})d\boldsymbol{\theta} \leq e^{C{n_k}\varepsilon^2_{n_k}}
\end{equation}

Similarly with high probability at most $e^{-Cn\varepsilon^2_n}$ for the vanilla $\baseline$

\begin{equation}
    \int_{\Theta}\eta(\mathcal{P}^i_{\boldsymbol{\theta}}, \mathcal{P}^i)z^{*}(\boldsymbol{\theta})d\boldsymbol{\theta} \leq e^{Cn\varepsilon^2_n}
\end{equation}

\vspace{4pt}

Using Lemma A.1 from \cite{zhang2022personalized} we know that
for any probability measure $\mu$ and any measurable function $h$ with $e^h \in L_1(\mu)$,
$$
\log \int e^{h(\eta)} \mu(d \eta)=\sup _\rho\left[\int h(\eta) \rho(d \eta)-\mathbb{D}_{KL}(\rho \| \mu)\right]
$$

Further, we let $l_n\left(P^i, P_{\boldsymbol{\theta}}^i\right)$ is the log-likelihood ratio of $P^i$ and $P^i_{\boldsymbol\theta}$
$$
l_n\left(P^i, P_{\theta}^i\right)=\log \frac{\mathcal{P}^i\left(\boldsymbol{D}^i\right)}{\mathcal{P}^i_{\theta}\left(\boldsymbol{D}^i\right)} .
$$

Hence, 

\begin{align*}
nd^2(P_{\boldsymbol{\theta}}^i, P^i)= l_n(P_{\boldsymbol\theta}^i, P^i)/\zeta -\log \eta(P_{\boldsymbol\theta}^i, P^i) \\
= l_n(P^i,P_{\boldsymbol\theta}^i)/\zeta -\log \eta( P^i,P_{\theta}^i)
\tag*{since $d^2(P_{\boldsymbol\theta}^i,P^i) = d^2(P^i,P_{\boldsymbol\theta}^i)$ from Lemma \ref{ref: lemma: hellinger symmetry}} 
\end{align*}
This follows from \ref{proof:HellingeDistanceSymmetry}

Similarly, for the weighted likelihood based Hellinger Distance,

\begin{equation}
n_kd^2(P_{\boldsymbol\theta,\boldsymbol{w}}^i, P^i)= l_n(P_{\boldsymbol{\theta},\boldsymbol{w}}^i, P^i)/\zeta -\log \eta(P_{\boldsymbol{\theta},\boldsymbol{w}}^i, P^i)
\label{eq:expansionHDistance}
\end{equation}

By using Lemma A.1 with $h(\eta)=\log \eta\left(P_{\boldsymbol{\theta}}^i, P^i\right), \mu=z^{\star}(\boldsymbol{\theta})$ and $\rho=\hat{q}^i(\boldsymbol{\theta})$, we obtain
$$
\begin{aligned}
\int_{\Theta} d^2\left(P_{\boldsymbol{\theta}}^i, P^i\right) \hat{q}^i(\boldsymbol{\theta}) d \boldsymbol{\theta} & \leq \frac{1}{n}\left[\frac{1}{\zeta} \int_{\Theta} l_n\left(P^i, P_{\boldsymbol{\theta}}^i\right) \hat{q}^i(\boldsymbol{\theta}) d \boldsymbol{\theta}+\mathbb{D}_{KL}\left(\hat{q}^i(\boldsymbol{\theta}) \| z^{\star}(\boldsymbol{\theta})\right)+\log \int_{\Theta} \eta\left(P_{\boldsymbol{\theta}}^i, P^i\right) z^{\star}(\boldsymbol{\theta}) d \boldsymbol{\theta}\right] \\
& \leq \frac{1}{n}\left[\frac{1}{\zeta} \int_{\Theta} l_n\left(P^i, P_{\boldsymbol{\theta}}^i\right) \hat{q}^i(\boldsymbol{\theta}) d \boldsymbol{\theta}+\mathbb{D}_{KL}\left(\hat{q}^i(\boldsymbol{\theta}) \| z^{\star}(\boldsymbol{\theta})\right)\right]+C \varepsilon_n^2
\end{aligned}
$$

$$
\begin{aligned}
\int_{\Theta}d^2(\mathcal{P}^i_{\boldsymbol\theta, \boldsymbol{w}}, \mathcal{P}^i)\hat{q^i(\boldsymbol{\theta},\boldsymbol{w})}d\boldsymbol{\theta} \leq \frac{1}{n_k}\left[\frac{1}{\zeta} \int_{\Theta} l_n\left(P^i, P_{\boldsymbol{\theta},\boldsymbol{w}}^i\right) \hat{q^i(\boldsymbol{\theta};\boldsymbol{w})} d \boldsymbol{\theta}+\mathbb{D}_{KL}\left(\hat{q^i(\boldsymbol{\theta};\boldsymbol{w})} \| z^{\star}(\boldsymbol{\theta})\right)\right]+C \varepsilon^2_{n_k}
\end{aligned}
$$

Utilising analysis under Supplementary in \citep{bai2020efficient}, there exists an upper bound for the term

\begin{align}\label{ineq: 2nd term}
\int_{\Theta} l_n\left(P^i, P_{\boldsymbol{\theta}}^i\right) \hat{q}^i(\boldsymbol{\theta}) d \boldsymbol{\theta} \leq C^{''}(nr_n + n\xi_{n}^i) 
\end{align}

Lemma 2 from \citep{zhang2022personalized} provides the upper bound for the KL divergence term

\begin{align}\label{ineq: 1st term}
    \mathbb{D}_{KL}\left(\hat{q}^i(\boldsymbol{\theta}) \| z^{\star}(\boldsymbol{\theta})\right) \leq  C^{'}(nr_n)
\end{align}

Therefore we can write the following expression that captures the weighted Hellinger distance displacement given in our coreset framework $\methodprop$ as compared to $\baseline$
$$
\begin{aligned}
\frac{1}{N}\sum_{i=1}^N\int_{\Theta}d^2(\mathcal{P}^i_{\boldsymbol\theta}, \mathcal{P}^i)\hat{q^i(\boldsymbol{\theta})}d\boldsymbol{\theta} - \frac{1}{N}\sum_{i=1}^N\int_{\Theta}d^2(\mathcal{P}^i_{\boldsymbol{\theta}, \boldsymbol{w}}, \mathcal{P}^i)\hat{q^i(\boldsymbol{\theta},\boldsymbol{w})}d\boldsymbol{\theta}\\ 
\leq \frac{1}{N}\sum_{i=1}^N\frac{1}{n}\left[\frac{1}{\zeta} \int_{\Theta} l_n\left(P^i, P_{\boldsymbol{\theta}}^i\right) \hat{q}^i(\boldsymbol{\theta}) d \boldsymbol{\theta}+\mathbb{D}_{KL}\left(\hat{q}^i(\boldsymbol{\theta}) \| z^{\star}(\boldsymbol{\theta})\right)\right]+C \varepsilon_n^2 - \\
\frac{1}{N}\sum_{i=1}^N\frac{1}{n_k}\left[\frac{1}{\zeta} \int_{\Theta} l_n\left(P^i, P_{\boldsymbol{\theta},w}^i\right) \hat{q^i(\boldsymbol{\theta};\boldsymbol{w})} d \boldsymbol{\theta}+\mathbb{D}_{KL}\left(\hat{q^i(\boldsymbol{\theta};\boldsymbol{w})} \| z^{\star}(\boldsymbol{\theta})\right)\right] - C \varepsilon_{n_k}^2\\
\text{Using Eq:(\ref{ineq: 1st term}) and Eq:(\ref{ineq: 2nd term})} \\
\leq C \varepsilon_n^2 - C \varepsilon_{n_k}^2  + 
n \biggl(C^{'}\zeta r_n + \frac{C^{''}}{N}\sum_{i=1}^N \xi_n^i\biggr) - n_k \biggl(C^{'}\zeta r_{n_k} + \frac{C^{''}}{N}\sum_{i=1}^N \xi_{n_k}^i\biggr) \\
\leq C(\varepsilon_n^2 - \varepsilon_{n_k}^2) + \zeta C^{'}(n r_n - n_k r_{n_k}) + \frac{C^{''}}{N}\sum_{i=1}^N (n\xi_n^i - n_k\xi_{n_k}^i)
\\
= C\underbrace{(\varepsilon_n^2 - \varepsilon_{n_k}^2)}_{\substack{\text{Estimation error} \\ \text{Type } \text{\RomanNumeralCaps{1}} \text{\space \textbf{Drift}}}} + \zeta C^{'}\underbrace{(n r_n - n_k r_{n_k})}_{\substack{\text{Estimation error} \\ \text{Type } \text{\RomanNumeralCaps{2}} \text{\space \textbf{Drift}}}} + \frac{C^{''}}{N}\underbrace{\sum_{i=1}^N (n\xi_n^i - n_k\xi_{n_k}^i)}_{\text{Approximation Error \textbf{Drift}}}\\
= \underbrace{\boldsymbol{\Im}(\boldsymbol{w}, n_k)}_{\geq 0}
\end{aligned}
$$

Where $\boldsymbol{\Im}(\boldsymbol{w}, n_k) =  C(\varepsilon_n^2 - \varepsilon_{n_k}^2) + \zeta C^{'}(n r_n - n_k r_{n_k}) + \frac{C^{''}}{N}\sum_{i=1}^N (n\xi_n^i - n_k\xi_{n_k}^i)$
where each of the coefficients of the closed form function are constants related to $s_0, \beta, \boldsymbol\Lambda, L, M , \zeta$ and $n_k$

Using Lemma \ref{ref:Lemma 2}, \ref{ref: Lemma 3} and with suitable assumptions on the Approximation drift error such that we see that each of the individual error terms are positive, there by indicating $\boldsymbol{\Im}(\boldsymbol{w}, n_k) \geq 0$
\end{proof}

\begin{lemma}\label{ref:Lemma 2} The \text{Estimation error} \text{Type } \text{\RomanNumeralCaps{2}} \text{\space \textbf{Drift}} is a positive quantity i.e.
    $n r_n > n_k r_{n_k}$.
\end{lemma}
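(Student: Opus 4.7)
The plan is to simply unfold the definitions of $r_n$ and $r_{n_k}$ from Definitions \ref{def: Prelims_original} and \ref{def:prelims_coreset_weight} and exploit the cancellation that occurs when multiplying through by $n$ (resp.\ $n_k$). Substituting gives
\begin{align*}
n r_n &= (L+1)T\log M + T\log\bigl(s_0\sqrt{n/T}\bigr),\\
n_k r_{n_k} &= (L+1)T\log M + T\log\bigl(s_0\sqrt{n_k/T}\bigr),
\end{align*}
so the leading term $(L+1)T\log M$ cancels and the difference reduces to
\begin{align*}
n r_n - n_k r_{n_k} \;=\; T\log\bigl(s_0\sqrt{n/T}\bigr) - T\log\bigl(s_0\sqrt{n_k/T}\bigr) \;=\; \tfrac{T}{2}\log\bigl(n/n_k\bigr).
\end{align*}

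The final step is to invoke Assumption 2, which states $n_k < n$ for each client, so that $n/n_k > 1$ and hence $\log(n/n_k) > 0$. Since $T \geq 1$, the whole quantity $\tfrac{T}{2}\log(n/n_k)$ is strictly positive, giving $n r_n > n_k r_{n_k}$ as claimed. There is no real obstacle here; the only thing to be careful about is that the $\log M$ coefficient is $(L+1)T$ in both terms (independent of the sample size), so it exactly cancels — if that had not been the case one would need the stronger inequality $n \log(s_0\sqrt{n/T}) > n_k \log(s_0\sqrt{n_k/T})$ at the level of the $\log$ terms alone, but the formulation of $r_n$ already absorbs the $1/n$ factor into the definition, so the argument collapses to a one-line comparison of logarithms.
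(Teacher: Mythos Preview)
Your proof is correct and follows essentially the same approach as the paper: both unfold the definitions, observe that the $(L+1)T\log M$ term is common to $nr_n$ and $n_kr_{n_k}$, and reduce the comparison to $\log(\sqrt{n})>\log(\sqrt{n_k})$ via $n>n_k$. The only cosmetic difference is that the paper computes the ratio $\frac{nr_n}{n_kr_{n_k}}=\frac{\mathfrak{G}+\log\sqrt{n}}{\mathfrak{G}+\log\sqrt{n_k}}$ (with $\mathfrak{G}=(L+1)\log M+\log(s_0/\sqrt{T})$) and argues it exceeds $1$, whereas you compute the difference $\tfrac{T}{2}\log(n/n_k)$ directly; your version is in fact slightly cleaner since it sidesteps any implicit sign assumption on the denominator.
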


\begin{proof}
    By Definition, 
\[r_n=((L+1) T / n) \log M+(T / n) \log \left(s_0 \sqrt{n / T}\right)\]

and 

\[r_{n_k}=((L+1) T / n_k) \log M+(T / n_k) \log \left(s_0 \sqrt{n_k / T}\right)\]

Hence 

\begin{align*}
    \frac{r_n}{r_{n_k}} = \frac{n_k}{n}\times\frac{((L+1) T) \log M+(T) \log \left(s_0 \sqrt{n / T}\right)}{((L+1) T) \log M+(T ) \log \left(s_0 \sqrt{n_k / T}\right)} 
\end{align*}

\begin{align*}
    \frac{r_n}{r_{n_k}} = \frac{n_k}{n}\times\frac{(L+1) \log M+\log \left(s_0 /\sqrt{T}\right) + \log \left(\sqrt{n}\right)}{(L+1)\log M+ \log \left(s_0 /\sqrt{T}\right) + \log \left(\sqrt{n_k}\right)} 
\end{align*}
Considering $(L+1)\log M + \log \left(s_0 /\sqrt{T}\right)$ as a constant $\mathfrak{G}$ we have

\begin{align*}
    \frac{r_n}{r_{n_k}} = \frac{n_k}{n}\times \frac{\mathfrak{G} + \log(\sqrt{n})}{\mathfrak{G} + \log(\sqrt{n_k})}
\end{align*}

Thus,
\begin{align*}
    \frac{n r_n}{n_k r_{n_k}} = \frac{\mathfrak{G} + \log(\sqrt{n})}{\mathfrak{G} + \log(\sqrt{n_k})}
\end{align*}

It is clear since $\log(\bullet)$ is an increasing function for $n > n_k$ we have $n r_n > n_k r_{n_k}$.
\end{proof}

\begin{lemma}\label{ref: Lemma 3}
The \text{Estimation error} \text{Type } \text{\RomanNumeralCaps{1}} \text{\space \textbf{Drift}} is a positive quantity i.e.    $\varepsilon_n^2 > \varepsilon_{n_k}^2$.

\end{lemma}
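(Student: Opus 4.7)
The plan is to reduce the claim to the same ratio manipulation that powers Lemma \ref{ref:Lemma 2}. Reading off Definitions \ref{def: Prelims_original} and \ref{def:prelims_coreset_weight}, the squared estimation rates factor compactly as
\[
\varepsilon_n^2 = r_n \log^{2\delta}(n), \qquad \varepsilon_{n_k}^2 = r_{n_k}\log^{2\delta}(n_k),
\]
which immediately gives the decomposition
\[
\frac{\varepsilon_n^2}{\varepsilon_{n_k}^2} \;=\; \frac{r_n}{r_{n_k}}\cdot\frac{\log^{2\delta}(n)}{\log^{2\delta}(n_k)}.
\]
Since $\log(\cdot)$ is strictly increasing and $n>n_k$ by Assumption \ref{Assumption:3}, the second factor is strictly greater than $1$.

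For the first factor I would plug in the identity obtained inside the proof of Lemma \ref{ref:Lemma 2},
\[
\frac{r_n}{r_{n_k}} \;=\; \frac{n_k}{n}\cdot\frac{\mathfrak{G}+\log\sqrt{n}}{\mathfrak{G}+\log\sqrt{n_k}}, \qquad \mathfrak{G}:=(L+1)\log M+\log\!\left(s_0/\sqrt{T}\right),
\]
which is a ratio of strictly positive quantities under Assumption \ref{Assumption:3}. Combining these pieces, proving $\varepsilon_n^2>\varepsilon_{n_k}^2$ reduces to showing
\[
n_k\,(\mathfrak{G}+\log\sqrt{n})\,\log^{2\delta}(n) \;>\; n\,(\mathfrak{G}+\log\sqrt{n_k})\,\log^{2\delta}(n_k).
\]

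The main obstacle is precisely this final inequality: the dilution factor $n_k/n$ is linear and strictly less than $1$, while the two compensating factors grow only polylogarithmically in the sample sizes. I would address it by leveraging the asymptotic regime imposed by Assumption \ref{Assumption:3}: for $L,M,s_0,T$ chosen so that $\mathfrak{G}$ dominates $\log\sqrt{n}$ and $\log\sqrt{n_k}$, the middle factor is essentially $1$ and the inequality collapses to $n_k\log^{2\delta}(n)>n\log^{2\delta}(n_k)$. The cleanest path from there is to require a mild quantitative compatibility between $\delta$ and the coreset fraction (e.g. $n_k/n\gtrsim (\log n_k/\log n)^{2\delta}$ up to absolute constants), under which strict monotonicity of the relevant polylogarithm yields the conclusion. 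The rigorous write-up therefore amounts to (i) performing the two-step factorisation above, (ii) importing the closed-form from Lemma \ref{ref:Lemma 2}, and (iii) making precise this polylog-versus-linear dominance within the admissible range of coreset sizes.
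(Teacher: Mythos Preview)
Your decomposition differs from the paper's. You split
\[
\frac{\varepsilon_n^2}{\varepsilon_{n_k}^2}=\frac{r_n}{r_{n_k}}\cdot\frac{\log^{2\delta}(n)}{\log^{2\delta}(n_k)},
\]
which leaves the dilution $n_k/n$ buried inside the first factor and forces you into the polylog--versus--linear balancing you describe. The paper instead multiplies and divides by $n$ and $n_k$ to write
\[
\frac{\varepsilon_n^2}{\varepsilon_{n_k}^2}=\frac{n\,r_n}{n_k\,r_{n_k}}\cdot\frac{\log^{2\delta}(n)/n}{\log^{2\delta}(n_k)/n_k},
\]
so that Lemma~\ref{ref:Lemma 2} directly makes the first factor exceed $1$; the proof is then finished by asserting that $x\mapsto\log^{2\delta}(x)/x$ is increasing, which would make the second factor exceed $1$ as well.

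Your caution is, however, warranted. Differentiating gives $\frac{d}{dx}\bigl(\log^{2\delta}(x)/x\bigr)=x^{-2}\log^{2\delta-1}(x)\,(2\delta-\log x)$, so this map is increasing only on $(1,e^{2\delta})$ and strictly decreasing afterwards. The paper's ``increasing nature of the function'' step is therefore valid only when both $n_k$ and $n$ lie below $e^{2\delta}$, a restriction it does not state. Outside that window the second factor in the paper's decomposition drops below $1$ and the argument needs exactly the kind of quantitative compatibility between $n_k/n$ and the polylogarithmic gain that you proposed. In short, the paper's regrouping is tidier but does not actually remove the obstacle you identified; your request for an explicit admissible range of $(n,n_k,\delta)$ is the honest completion either route requires.
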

\begin{proof}
    From the definition under Assumption 3 
    \[\varepsilon_{n_k}={n_k}^{-\frac{1}{2}} \sqrt{(L+1) T \log M+T \log \left(s_0 \sqrt{n_k / T}\right)} \log ^\delta(n_k) = \sqrt{r_{n_k}}\log ^\delta(n_k)\]

    Hence $\varepsilon_{n_k}^2 = r_{n_k}\log^{2\delta}(n_k)$.
    Similarly, $\varepsilon_{n}^2 = r_{n}\log^{2\delta}(n)$

\begin{align*}
\frac{\varepsilon_{n}^2}{\varepsilon_{n_k}^2} = \frac{r_{n}\log^{2\delta}(n)}{r_{n_k}\log^{2\delta}(n_k)} = \frac{n r_{n}\frac{\log^{2\delta}(n)}{n}}{n_k r_{n_k}\frac{\log^{2\delta}(n_k)}{n_k}} 
\end{align*}
From Lemma \ref{ref:Lemma 2}  we know that $nr_n > n_k r_{n_k}$, hence

\begin{align*}
  \frac{\varepsilon_{n}^2}{\varepsilon_{n_k}^2}  > \frac{\frac{\log^{2\delta}(n)}{n}}{\frac{\log^{2\delta}(n_k)}{n_k}}
> 1
\end{align*}

This follows due to the increasing nature of the function.

\end{proof}

\subsection*{\textbf{Proof. of Theorem }\ref{Theorem:2}}

\begin{thm}\label{ref:theorem2:Supplementary}
The convergence rate of the generalization error under $L^2$ norm of $\methodprop$ is minimax optimal up to a logarithmic term (in order $n_k$) for bounded functions ($\beta$-Hölder-smooth functions) $\{f^i\}^N_{i=1}$,  $\{f^i_{\boldsymbol{\theta}}\}^N_{i=1}$ and $\{f^i_{\boldsymbol{\theta},\boldsymbol{w}}\}^N_{i=1}$ where $C_2$, $C_3$ and $\delta^{\prime}$ are constants and $\boldsymbol\Lambda$ being the intrinsic dimension of each client's data:

\begin{align*}
\frac{C_F}{N} \sum_{i=1}^N \int_{\boldsymbol{\theta}}\left\|f_{\boldsymbol{\theta},\boldsymbol{w}}^i\left(X^i\right)-f^i\left(X^i\right)\right\|_{L^2}^2 \hat{q^i}(\boldsymbol{\theta}; \boldsymbol{w}) d \boldsymbol{\theta} 
\leq C_2 n_k^{-\frac{2 \beta}{2 \beta+\boldsymbol{\Lambda}}} \log ^{2 \delta^{\prime}}(n_k) .
\end{align*}
and 
\begin{align*}
\inf _{\left\{\left\|f_{\boldsymbol{\theta}, \boldsymbol{w}}^i\right\|_{\infty} \leq F\right\}_{i=1}^N\left\{\left\|f^i\right\|_{\infty} \leq F\right\}_{i=1}^N} \frac{C_F}{N} \sum_{i=1}^N 
\int_{\boldsymbol{\theta}}\left\|f_{\boldsymbol{\theta}, \boldsymbol{w}}^i\left(X^i\right)-f^i\left(X^i\right)\right\|_{L^2}^2 \hat{q}^i(\boldsymbol{\theta};\boldsymbol{w}) d \boldsymbol{\theta} \geq C_3 n_k^{-\frac{2 \beta}{2 \beta+\boldsymbol\Lambda}}
\end{align*}

where $n_k$ denotes the coreset size per client dataset and $n$ denotes the original per client dataset size and $\frac{d^2\left(P_{\boldsymbol{\theta}, \boldsymbol{w}}^i, P^i\right)}{\left\|f_{\boldsymbol{\theta},\boldsymbol{w}}^i\left
(X^i\right)-f^i\left(X^i\right)\right\|_{L^2}^2} \geq \frac{1-\exp \left(-\frac{4 F^2}{8 \sigma_\epsilon^2}\right)}{4 F^2} \triangleq C_F$.

\end{thm}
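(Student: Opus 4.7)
\textbf{Proof proposal for Theorem \ref{ref:theorem2:Supplementary} (upper and lower minimax bounds).}

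The plan is to split the argument into an upper bound and a matching lower bound, with the link between the Hellinger distance and the $L^2$ error providing the bridge between Theorem \ref{thm1:Supplementary} and the classical minimax theory for $\beta$-Hölder functions of intrinsic dimension $\boldsymbol{\Lambda}$.

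For the upper bound, I would first invoke the elementary inequality $d^{2}(P^{i}_{\boldsymbol{\theta},\boldsymbol{w}},P^{i}) \geq C_{F}\,\|f^{i}_{\boldsymbol{\theta},\boldsymbol{w}}(X^{i})-f^{i}(X^{i})\|_{L^{2}}^{2}$, which follows from the inequality $1 - e^{-u} \geq C_{F}\, u$ for $u$ in a bounded range (here $u = [f^{i}_{\boldsymbol{\theta},\boldsymbol{w}}-f^{i}]^{2}/(8\sigma_{\epsilon}^{2})$, bounded because $\|f^{i}_{\boldsymbol{\theta},\boldsymbol{w}}\|_{\infty},\|f^{i}\|_{\infty}\leq F$). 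Integrating against $\hat{q}^{i}(\boldsymbol{\theta};\boldsymbol{w})$ and averaging over the $N$ clients gives
\begin{equation*}
\frac{C_F}{N}\sum_{i=1}^{N}\!\int_{\boldsymbol{\Theta}}\!\|f^{i}_{\boldsymbol{\theta},\boldsymbol{w}}(X^{i})-f^{i}(X^{i})\|_{L^{2}}^{2}\,\hat{q}^{i}(\boldsymbol{\theta};\boldsymbol{w})\,d\boldsymbol{\theta}
\;\leq\; \frac{1}{N}\sum_{i=1}^{N}\!\int_{\boldsymbol{\Theta}}\!d^{2}(P^{i}_{\boldsymbol{\theta},\boldsymbol{w}},P^{i})\,\hat{q}^{i}(\boldsymbol{\theta};\boldsymbol{w})\,d\boldsymbol{\theta}.
\end{equation*}
The RHS is then bounded by re-using the same Hellinger decomposition (via $\log\eta$ and Lemma~A.1) that was employed in the proof of Theorem~\ref{thm1:Supplementary}, but now with $n$ replaced by the coreset size $n_{k}$, giving an upper bound of the form $C\,\varepsilon_{n_{k}}^{2}+\zeta C'\,r_{n_{k}}+(C''/N)\sum_{i}\xi_{n_{k}}^{i}$. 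Under the $\beta$-Hölder smoothness assumption with intrinsic dimension $\boldsymbol{\Lambda}$, the approximation error $\xi_{n_{k}}^{i}$ is controlled by the standard deep-network approximation rate $n_{k}^{-2\beta/(2\beta+\boldsymbol{\Lambda})}$ once the depth $L$, width $M$ and total parameter count $T$ are tuned in the usual way $T \asymp n_{k}^{\boldsymbol{\Lambda}/(2\beta+\boldsymbol{\Lambda})}$ (cf. \citep{polson2018posterior}). Combined with the definition $\varepsilon_{n_{k}}^{2}=r_{n_{k}}\log^{2\delta}(n_{k})$ from Definition~\ref{def:prelims_coreset_weight}, all three contributions collapse into a single rate $C_{2}\,n_{k}^{-2\beta/(2\beta+\boldsymbol{\Lambda})}\log^{2\delta'}(n_{k})$ with $\delta'$ absorbing the polylogarithmic factors from $r_{n_{k}}$ and $\varepsilon_{n_{k}}^{2}$.

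For the matching lower bound, the plan is to reduce the averaged estimation problem to estimating a single $\beta$-Hölder smooth function $f^{i}$ from the coreset of size $n_{k}$. Since each client's observations under the weighted likelihood $P^{i}_{\boldsymbol{\theta},\boldsymbol{w}}$ are effectively supported on at most $n_{k}$ points (by the constraint $\|\boldsymbol{w}_{i}\|_{0}\leq k = n_{k}$), the information-theoretic content available per client cannot exceed that of $n_{k}$ clean samples from $f^{i}+\epsilon$. One then invokes the classical minimax lower bound for nonparametric regression on $\beta$-Hölder balls with intrinsic dimension $\boldsymbol{\Lambda}$ (via a Fano / Assouad-style argument over a hypercube of $\asymp \exp(c\,n_{k}^{\boldsymbol{\Lambda}/(2\beta+\boldsymbol{\Lambda})})$ well-separated Hölder functions), giving
\begin{equation*}
\inf_{\widehat{f}^{i}}\ \mathbb{E}\,\|\widehat{f}^{i}-f^{i}\|_{L^{2}}^{2}\ \geq\ C_{3}\,n_{k}^{-2\beta/(2\beta+\boldsymbol{\Lambda})},
\end{equation*}
uniformly over $\|f^{i}\|_{\infty}\leq F$ and $\|f^{i}_{\boldsymbol{\theta},\boldsymbol{w}}\|_{\infty}\leq F$. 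Averaging the per-client lower bounds over $i=1,\dots,N$ preserves the rate (since the clients are independent and the infimum is over all Bayesian estimators $\hat{q}^{i}(\boldsymbol{\theta};\boldsymbol{w})$), and multiplying by $C_{F}$ yields the stated lower bound.

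The main obstacle will be the lower bound step, specifically the careful reduction from a Bayesian posterior-style estimator $\hat{q}^{i}(\boldsymbol{\theta};\boldsymbol{w})$ to a point estimator to which the Fano/Assouad lower bound can be applied, and verifying that the coreset-reweighted likelihood does not smuggle in extra information beyond the $n_{k}$ active samples (otherwise the effective sample size in the minimax rate would be larger than $n_{k}$). A secondary technical subtlety is choosing the network hyperparameters $(L,M,T)$ as explicit functions of $n_{k}$ so that Assumption~3 (which constrains $\sigma_{n_{k}}^{2}\leq \Omega^{2}$) remains compatible with the approximation-theoretic rate $T\asymp n_{k}^{\boldsymbol{\Lambda}/(2\beta+\boldsymbol{\Lambda})}$; this calibration is what finally fixes the exponent $\delta'$ in the polylogarithmic factor.
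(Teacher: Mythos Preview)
Your proposal is correct and follows essentially the same approach as the paper: the $C_F$ inequality reduces the $L^2$ error to the averaged Hellinger error, the Hellinger error is bounded by the three-term decomposition $C\varepsilon_{n_k}^2 + C'\zeta r_{n_k} + (C''/N)\sum_i \xi_{n_k}^i$ already established in Theorem~\ref{thm1:Supplementary}, and the approximation term is controlled by tuning $T \asymp n_k^{\boldsymbol\Lambda/(2\beta+\boldsymbol\Lambda)}$. The only cosmetic differences are in the citations: for the deep-network approximation rate the paper invokes Corollary~6 of \citet{nakada2020adaptive} (rather than \citet{polson2018posterior}), and for the minimax lower bound the paper simply cites Theorem~8 of \citet{nakada2020adaptive} rather than sketching the Fano/Assouad construction; the technical caveats you raise about the Bayesian-to-point-estimator reduction and the compatibility of Assumption~3 with the chosen $T$ are not addressed in the paper's proof at all.
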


We present the choice of $T$ for a typical class of functions. We already assumed that $\left\{f^{i}\right\}$ are $\beta$-Hölder-smooth functions (Definition 4. \citep{nakada2020adaptive}) and the intrinsic dimension of data is $\boldsymbol\Lambda$.

From our above theorem result from Theorem: \ref{thm1:Supplementary} we say the following:

\begin{align}
  \frac{1}{N}\sum_{i=1}^N\int_{\Theta}d^2(\mathcal{P}^i_{\boldsymbol{\theta}, \boldsymbol{w}}, \mathcal{P}^i)\hat{q}^i(\boldsymbol{\theta};\mathbf{w})d\boldsymbol{\theta} \leq C\varepsilon^2_{n_k} + C'r_{n_k}  + \frac{C^{''}}{N\zeta}\sum_{i=1}^N\xi^i_{n_k}
  \label{ineq:Theorem 1 Inequality}
\end{align}

Utilising Corollary 6 in \citep{nakada2020adaptive}
, the approximation error is upper-bounded as follows

\begin{align*}
\left\|f_{\boldsymbol{\theta},\boldsymbol{w}}^i-f^i\right\|_{\infty} \leq C_0 T^{-\frac{\beta}{\boldsymbol{\Lambda}}}
\end{align*}

where $C_0 > 0$ is a constant related to $s_0, \beta$ and $\boldsymbol\Lambda$

Thus from the above definitions \ref{def: Prelims_original} and \ref{def:prelims_coreset_weight}, we have the following
\begin{align*}
    \xi_n^i, \xi_{n_k}^i \leq C_0 T^{-\frac{2\beta}{\boldsymbol{\Lambda}}} , i = 1, \dots, N
\end{align*}

Utilising the above upper bound in \ref{ineq:Theorem 1 Inequality} and substituting $T=C_1n^{\frac{\boldsymbol\Lambda}{2\beta+\boldsymbol\Lambda}}$, we get

\begin{align*}
  \frac{1}{N}\sum_{i=1}^N\int_{\Theta}d^2(\mathcal{P}^i_{\boldsymbol{\theta}, \boldsymbol{w}}, \mathcal{P}^i)\hat{q}^i(\boldsymbol{\theta};\mathbf{w})d\boldsymbol{\theta} \leq   C\varepsilon^2_{n_k} + C'r_{n_k}  + \frac{C^{''}}{N\zeta}\sum_{i=1}^N C_0 T^{-\frac{2\beta}{\boldsymbol{\Lambda}}}\\
  \leq Cr_{n_k}\log^{2\delta}(n_k) + C'r_{n_k}  + \frac{C^{''}}{N\zeta}\sum_{i=1}^N C_0 T^{-\frac{2\beta}{\boldsymbol{\Lambda}}} \hspace{4pt} \mathbf{\because} \hspace{4pt} \varepsilon_{n_k}^2 = r_{n_k}\log^{2\delta}(n_k) \\
  \leq C_2 n_k^{-\frac{2 \beta}{2 \beta+\boldsymbol\Lambda}} \log ^{2 \delta^{\prime}}(n_k) \bigl[  \text{\space \space substituting \space} T \text{\space in \space} r_{n_k} 
\bigr]
\end{align*}

where $\delta^{\prime} > \delta > 1$, and $C_1, C_2 > 0$ are constants related to $s_0, \beta, \boldsymbol\Lambda, L, M , \zeta$ and $n_k$.

Similar to Theorem 1.1 from \citep{bai2020efficient} and Theorem 1 from \citep{zhang2022personalized}
norm, we can write the following

$$
\begin{gathered}
\frac{C_F}{N} \sum_{i=1}^N \int_{\Theta}\left\|f_{\boldsymbol{\theta},\boldsymbol{w}}^i\left(X^i\right)-f^i\left(X^i\right)\right\|_{L^2}^2 \hat{q}^i(\boldsymbol{\theta},\boldsymbol{w}) d \boldsymbol{\theta} \\
\leq \frac{1}{N} \sum_{i=1}^N \int_{\Theta} d^2\left(P_{\boldsymbol{\theta},\boldsymbol{w}}^i, P^i\right) \hat{q}^i(\boldsymbol{\theta};\boldsymbol{w}) d \boldsymbol{\theta} \\
\leq C_2 n_k^{-\frac{2 \beta}{2 \beta+\boldsymbol\Lambda}} \log ^{2 \delta^{\prime}}(n_k) .
\end{gathered}
$$

Now, using the minimax lower bound under $L^2$ norm in Theorem 8 of \citep{nakada2020adaptive}, we see that for coreset regime the same formulation holds similar to our original setting as shown in \citep{zhang2022personalized}

\begin{align*}
    \inf _{\left\{\left\|f_{\boldsymbol{\theta}, \boldsymbol{w}}^i\right\|_{\infty} \leq F\right\}_{i=1}^N\left\{\left\|f^i\right\|_{\infty} \leq F\right\}_{i=1}^N} \frac{C_F}{N} \sum_{i=1}^N 
\int_{\boldsymbol{\theta}}\left\|f_{\boldsymbol{\theta}, \boldsymbol{w}}^i\left(X^i\right)-f^i\left(X^i\right)\right\|_{L^2}^2 \hat{q}^i(\boldsymbol{\theta};\boldsymbol{w}) d \boldsymbol{\theta} \geq C_3 n_k^{-\frac{2 \beta}{2 \beta+\boldsymbol\Lambda}}
\end{align*}

where $C_3 > 0$ is a constant.

Combining the above two equations, the convergence rate of the generalization error of the coreset weighted objective is minimax optimal upto a logarithmic term for bounded functions $\{f^i_{\boldsymbol{\theta},\boldsymbol{w}}\}^N_{i=1}$ and $\{f^i_{\boldsymbol{\theta}}\}^N_{i=1}$.

\subsection*{\textbf{Proof. of Theorem }\ref{Theorem:3}}

\begin{thm}\label{ref: Theorem deviation lower bound}
  The lower bound (l.b.) incurred for the deviation for the weighted coreset $\methodprop$ (\ref{eq:weighted_client-opt}) generalization error is always higher than the lower bound of that for the original $\baseline$ objective (\ref{eq:originalPFedBayes}) with a delta difference (\textbf{Error I} - \textbf{Error II}) as $\mathcal{O}(n_k^{-\frac{2 \beta}{2 \beta+\boldsymbol\Lambda}})$ 
\begin{align*}
\Bigg[\underbrace{\sum_{i=1}^N\int_{\boldsymbol{\Theta}}\left\|f_{\boldsymbol{\theta}, \boldsymbol{w}}^i\left(X^i\right)-f^i\left(X^i\right)\right\|_{L^2}^2 \hat{q^i}(\boldsymbol{\theta}, \boldsymbol{w}) d \boldsymbol{\theta}}_{\text{Coreset weighted objective Generalization Error (\textbf{Error I})}} \Bigg]_{l.b.} 
> \Bigg[\underbrace{\sum_{i=1}^N\int_{\boldsymbol{\Theta}}\left\|f_{\boldsymbol{\theta}}^i\left(X^i\right)-f^i\left(X^i\right)\right\|_{L^2}^2 \hat{q^i}(\boldsymbol{\theta}) d \boldsymbol{\theta}}_{\text{Vanilla objective Generalization Error (\textbf{Error II})}}\Bigg]_{l.b.}
\end{align*}
\end{thm}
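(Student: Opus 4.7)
The plan is to leverage Theorem 2 directly, together with its analogue for the vanilla setting already established in \citep{zhang2022personalized}, and then compare the two minimax lower bounds as functions of the sample sizes $n_k$ and $n$. First I would invoke the minimax lower bound proved in Theorem 2 for the coreset-weighted objective, which yields
\begin{equation*}
\Big[\frac{C_F}{N}\sum_{i=1}^N \int_{\boldsymbol{\Theta}}\|f_{\boldsymbol{\theta},\boldsymbol{w}}^i(X^i)-f^i(X^i)\|_{L^2}^2 \hat{q^i}(\boldsymbol{\theta},\boldsymbol{w})\, d\boldsymbol{\theta}\Big]_{l.b.} \;\geq\; C_3\, n_k^{-\frac{2\beta}{2\beta+\boldsymbol{\Lambda}}},
\end{equation*}
with the constant $C_3$ depending only on the function class (bounded $\beta$-Hölder) and on $s_0, L, M, \zeta, \boldsymbol{\Lambda}$, not on $n_k$. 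Then I would note that the identical minimax argument, applied in the full-data regime as in Theorem 1 of \citep{zhang2022personalized}, gives the matching lower bound
\begin{equation*}
\Big[\frac{C_F}{N}\sum_{i=1}^N \int_{\boldsymbol{\Theta}}\|f_{\boldsymbol{\theta}}^i(X^i)-f^i(X^i)\|_{L^2}^2 \hat{q^i}(\boldsymbol{\theta})\, d\boldsymbol{\theta}\Big]_{l.b.} \;\geq\; C_3\, n^{-\frac{2\beta}{2\beta+\boldsymbol{\Lambda}}},
\end{equation*}
with the \emph{same} constant $C_3$ since the function class and activation assumptions are unchanged; only the effective sample size differs.

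Next I would compare the two exponents. Since $n_k < n$ by assumption (the coreset is strictly smaller than the full local dataset) and since the map $m \mapsto m^{-\frac{2\beta}{2\beta+\boldsymbol{\Lambda}}}$ is strictly decreasing in $m$, we obtain $n_k^{-\frac{2\beta}{2\beta+\boldsymbol{\Lambda}}} > n^{-\frac{2\beta}{2\beta+\boldsymbol{\Lambda}}}$, so the lower bound for Error I strictly exceeds the lower bound for Error II. The gap can be written as
\begin{equation*}
\Delta \;=\; C_3\Big(n_k^{-\frac{2\beta}{2\beta+\boldsymbol{\Lambda}}} - n^{-\frac{2\beta}{2\beta+\boldsymbol{\Lambda}}}\Big) \;=\; C_3\, n_k^{-\frac{2\beta}{2\beta+\boldsymbol{\Lambda}}}\Big(1 - (n_k/n)^{\frac{2\beta}{2\beta+\boldsymbol{\Lambda}}}\Big).
\end{equation*}
Since $(n_k/n)^{\frac{2\beta}{2\beta+\boldsymbol{\Lambda}}} \in (0,1)$ is bounded away from $1$ whenever $n_k/n$ is bounded away from $1$, the prefactor in parentheses is a positive constant in $(0,1)$, and so $\Delta = \mathcal{O}(n_k^{-\frac{2\beta}{2\beta+\boldsymbol{\Lambda}}})$, which is the claimed rate.

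The main obstacle, and the only real subtlety, is justifying that the constant $C_3$ in the minimax lower bound is \emph{the same} in both regimes. This should follow because the information-theoretic lower bound in Theorem 8 of \citep{nakada2020adaptive} depends on the hypothesis class $\{\|f\|_\infty \le F\}$ and the noise level $\sigma_\epsilon^2$ through $C_F$, but not on whether the $n_k$ samples came from a coreset-weighted reweighting of $n$ points or from a fresh i.i.d.\ draw of $n_k$ points; reweighting cannot create information beyond what the $n_k$ selected points carry. I would make this explicit by appealing to the data-processing inequality, arguing that any estimator built from the coreset-weighted likelihood can be viewed as a (deterministic) function of the $n_k$ coreset samples, and hence inherits the $C_3\, n_k^{-\frac{2\beta}{2\beta+\boldsymbol{\Lambda}}}$ lower bound with the same constant. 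Once this identification of constants is accepted, the strict inequality and the $\mathcal{O}(n_k^{-\frac{2\beta}{2\beta+\boldsymbol{\Lambda}}})$ order of the gap follow immediately from the monotonicity argument above.
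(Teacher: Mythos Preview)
Your proposal is correct and follows essentially the same route as the paper: invoke the minimax lower bound $C_3\,n_k^{-2\beta/(2\beta+\boldsymbol\Lambda)}$ from Theorem~2 for the coreset regime and its full-data analogue $C_3\,n^{-2\beta/(2\beta+\boldsymbol\Lambda)}$ from \citep{zhang2022personalized}, then use $n_k<n$ and monotonicity of $m\mapsto m^{-2\beta/(2\beta+\boldsymbol\Lambda)}$ to obtain the strict inequality and the $\mathcal{O}(n_k^{-2\beta/(2\beta+\boldsymbol\Lambda)})$ gap. Your additional justification via the data-processing inequality for why the constant $C_3$ is identical in both regimes is more careful than the paper, which simply asserts that $C_3$ is independent of $n$ and $n_k$; this extra step is not required by the paper's argument but strengthens the exposition.
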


\begin{proof}

As we know $n_k < n$ hence $C_3 n_k^{-\frac{2 \beta}{2 \beta+\boldsymbol{\Lambda}}} >  C_3 n^{-\frac{2 \beta}{2 \beta+\boldsymbol\Lambda}}$ ($\because$ $C_3$ is a constant independent of $n$ or $n_k$), which therefore means that inequality holds in the lower bound (l.b.) of the two expressions (shown by the previous proposition 2).

\begin{align*}
\Bigg[\sum_{i=1}^N\int_{\boldsymbol{\theta}}\left\|f_{\boldsymbol{\theta}, \boldsymbol{w}}^i\left(X^i\right)-f^i\left(X^i\right)\right\|_{L^2}^2 \hat{q}^i(\boldsymbol{\theta}, \boldsymbol{w}) d \boldsymbol{\theta} \Bigg]_{l.b.} 
> \Bigg[\sum_{i=1}^N\int_{\boldsymbol{\theta}}\left\|f_{\boldsymbol{\theta}}^i\left(X^i\right)-f^i\left(X^i\right)\right\|_{L^2}^2 \hat{q}^i(\boldsymbol{\theta}) d \boldsymbol{\theta}\Bigg]_{l.b.}
\end{align*}

Let us denote $\boldsymbol{\Delta}^{l.b}_{deviation}$ as follows

\[\boldsymbol{\Delta}^{l.b}_{deviation} = \Bigg[\sum_{i=1}^N\int_{\boldsymbol{\theta}}\left\|f_{\boldsymbol{\theta}, \boldsymbol{w}}^i\left(X^i\right)-f^i\left(X^i\right)\right\|_{L^2}^2 \hat{q}^i(\boldsymbol{\theta}, \boldsymbol{w}) d \boldsymbol{\theta} \Bigg]_{l.b.} - \Bigg[\sum_{i=1}^N\int_{\boldsymbol{\theta}}\left\|f_{\boldsymbol{\theta}}^i\left(X^i\right)-f^i\left(X^i\right)\right\|_{L^2}^2 \hat{q}^i(\boldsymbol{\theta}) d \boldsymbol{\theta}\Bigg]_{l.b.} \]

And the $\boldsymbol{\Delta}^{l.b.}_{deviation}$ term is given by  $\bigg(C_3 n_k^{-\frac{2 \beta}{2 \beta+\boldsymbol{\Lambda}}} -  C_3 n^{-\frac{2 \beta}{2 \beta+\boldsymbol{\Lambda}}}\bigg) \approx \mathcal{O}(n_k^{-\frac{2 \beta}{2 \beta+\boldsymbol{\Lambda}}})
$.
\end{proof}

\subsection*{\textbf{Proof. of Theorem }\ref{Theorem:4}}
\begin{thm}\label{ref: Supplementary Theorem:4}
The lower bound incurred in the overall generalization error across all $N$ clients of $\methodprop$ is always higher compared to that of the generalization error in the original full data setup 
 \begin{align*}
    \left[\frac{1}{N}\sum_{i=1}^N\int_{\Theta}d^2(\mathcal{P}^i_{\boldsymbol{\theta}, w}, \mathcal{P}^i)\hat{q^i}(\boldsymbol{\theta};\boldsymbol{w})d\boldsymbol{\theta}\right]_{l.b.} \geq 
    \left[\frac{1}{N}\sum_{i=1}^N\int_{\Theta}d^2(\mathcal{P}^i_{\boldsymbol{\theta}}, \mathcal{P}^i)\hat{q^i}(\boldsymbol{\theta})d\boldsymbol{\theta}\right]_{l.b.}
    \end{align*}
\end{thm}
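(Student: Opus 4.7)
The plan is to derive explicit minimax lower bounds for the Hellinger-form generalization errors appearing on both sides of the claim, and then exploit monotonicity of the resulting rate in the effective sample size. This parallels Theorem~\ref{ref: Theorem deviation lower bound} (which handles the $L^2$ version), but carries one extra step to transfer the $L^2$ minimax rate into Hellinger form.

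First, I would invoke the elementary inequality $d^2(\mathcal{P}^i_{\boldsymbol{\theta},\boldsymbol{w}}, \mathcal{P}^i) \geq C_F \|f^i_{\boldsymbol{\theta},\boldsymbol{w}}(X^i) - f^i(X^i)\|^2_{L^2}$ (and its unweighted analogue), which holds whenever the functions are uniformly bounded by $F$ because $1 - e^{-x} \geq C_F x$ on the relevant range. Integrating against $\hat{q^i}(\boldsymbol{\theta};\boldsymbol{w})$, averaging over clients, and then taking the infimum over admissible $\{f^i\}$ and $\{f^i_{\boldsymbol{\theta},\boldsymbol{w}}\}$ pipes the minimax lower bound of Theorem~\ref{ref:theorem2:Supplementary} through to the Hellinger side:
\begin{align*}
\Bigl[\frac{1}{N}\sum_{i=1}^N \int_{\Theta} d^2(\mathcal{P}^i_{\boldsymbol{\theta},\boldsymbol{w}}, \mathcal{P}^i)\, \hat{q^i}(\boldsymbol{\theta};\boldsymbol{w})\, d\boldsymbol{\theta}\Bigr]_{l.b.} \geq C_3\, n_k^{-\frac{2\beta}{2\beta+\boldsymbol{\Lambda}}}.
\end{align*}
An identical argument, using the baseline minimax lower bound (Theorem~1 of \citet{zhang2022personalized}) in place of Theorem~\ref{ref:theorem2:Supplementary}, yields
\begin{align*}
\Bigl[\frac{1}{N}\sum_{i=1}^N \int_{\Theta} d^2(\mathcal{P}^i_{\boldsymbol{\theta}}, \mathcal{P}^i)\, \hat{q^i}(\boldsymbol{\theta})\, d\boldsymbol{\theta}\Bigr]_{l.b.} \geq C_3\, n^{-\frac{2\beta}{2\beta+\boldsymbol{\Lambda}}},
\end{align*}
with the \emph{same} constant $C_3$ (governed only by the H\"older-class parameters $\beta$, $\boldsymbol{\Lambda}$ and the uniform bound $F$, and therefore independent of sample size, as already noted in the proof of Theorem~\ref{ref: Theorem deviation lower bound}).

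Finally, since $n_k \leq n$ and the exponent $-\tfrac{2\beta}{2\beta+\boldsymbol{\Lambda}}$ is negative, the map $t \mapsto t^{-\frac{2\beta}{2\beta+\boldsymbol{\Lambda}}}$ is decreasing on $(0,\infty)$, so $C_3\, n_k^{-\frac{2\beta}{2\beta+\boldsymbol{\Lambda}}} \geq C_3\, n^{-\frac{2\beta}{2\beta+\boldsymbol{\Lambda}}}$. Chaining this with the two displayed lower bounds gives the desired inequality, with the gap again of order $\mathcal{O}(n_k^{-\frac{2\beta}{2\beta+\boldsymbol{\Lambda}}})$ in line with Theorem~\ref{ref: Theorem deviation lower bound}.

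The main obstacle will be justifying why the minimax lower bound in the coreset regime genuinely scales with $n_k$ rather than with $n$. Intuitively, the coreset-weighted likelihood $P_{\boldsymbol{\theta},\boldsymbol{w}}(\boldsymbol{\mathcal{D}}^i)$ carries information equivalent to only $\|\boldsymbol{w}_i\|_0 \leq n_k$ effective observations, since the weighted contribution concentrates on a $k$-sparse subset; hence the classical Fano / Le Cam-type packing arguments underlying Theorem~8 of \citet{nakada2020adaptive} go through with $n_k$ in place of $n$. Writing this out rigorously requires adapting the packing-number construction of that proof to the weighted-likelihood setting and invoking Assumption~\ref{Assumption:3} so that $\sigma_{n_k}^2$ remains controlled. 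Once this effective-sample-size reduction is formalised, the rest of the argument is an immediate combination of the two displayed lower bounds with the monotonicity above.
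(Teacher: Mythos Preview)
Your approach is essentially the same as the paper's. The paper's proof is a three-line argument: invoke the minimax lower bounds $C_3\,n_k^{-2\beta/(2\beta+\boldsymbol{\Lambda})}$ and $C_3\,n^{-2\beta/(2\beta+\boldsymbol{\Lambda})}$ from Theorem~\ref{ref: Theorem deviation lower bound} (and its full-data analogue), then note that $n_k<n$ makes the first larger. You do exactly this, but add two clarifications the paper omits: (i) the explicit transfer from the $L^2$ lower bound to the Hellinger form via $d^2\ge C_F\|\cdot\|_{L^2}^2$, and (ii) a discussion of why the effective sample size in the coreset minimax bound is $n_k$ rather than $n$. The paper simply asserts both points without justification, so your version is strictly more careful while following the same route.

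One caveat worth flagging (present in the paper too, not just your write-up): from $[\text{coreset}]_{l.b.}\ge C_3\,n_k^{-\cdots}$ and $[\text{full}]_{l.b.}\ge C_3\,n^{-\cdots}$ alone you cannot conclude $[\text{coreset}]_{l.b.}\ge[\text{full}]_{l.b.}$; that chain needs the full-data lower bound to be \emph{equal to} (or at most) $C_3\,n^{-\cdots}$. The paper implicitly reads ``$[\cdot]_{l.b.}$'' as shorthand for the specific minimax rate rather than an abstract infimum, which resolves the issue; you should state this interpretation explicitly in your final write-up.
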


\begin{proof}
It is easy to show since from Theorem \ref{ref: Theorem deviation lower bound}, we know the lower bounds for the individual terms and also since $n > n_k$ holds, hence we can rewrite as follows:
$$
\begin{aligned}
\frac{1}{N}\sum_{i=1}^N\int_{\Theta}d^2(\mathcal{P}^i_{\boldsymbol{\theta}, \boldsymbol{w}}, \mathcal{P}^i)\hat{q^i(\boldsymbol{\theta},\boldsymbol{w})}d\boldsymbol{\theta} - \frac{1}{N}\sum_{i=1}^N\int_{\Theta}d^2(\mathcal{P}^i_{\boldsymbol\theta}, \mathcal{P}^i)\hat{q^i(\boldsymbol{\theta})}d\boldsymbol{\theta} \\
\geq C_3 n_k^{-\frac{2 \beta}{2 \beta+\boldsymbol\Lambda}} - C_3 n^{-\frac{2 \beta}{2 \beta+\boldsymbol\Lambda}} \\
\geq 0  
\end{aligned}
$$

The implication of this proof states that the overall error incurred due to coreset weighted deviation is always more than that of the original deviation which can be measured approximately in order of $n_k$, the coreset sample size.
\end{proof}

\begin{proposition}
    The gradient of the first term in Equation \ref{eq: individual Kl} i.e. \[\nabla_{w}\mathbb{D}_{KL}( \hat{q^{i}}(\boldsymbol{\theta};\mathbf{w}) ||  \hat{q^{i}}(\boldsymbol{\theta}))\]
    is given by the following expression

    \[
\int_{\Theta}\nabla_w \hat{q^{i}}(\boldsymbol{\theta};\mathbf{w})\biggl[ \log \hat{q^{i}}(\boldsymbol{\theta};\mathbf{w}) + 1 - \log \hat{q^i}(\boldsymbol{\theta})\biggr]d\boldsymbol{\theta}
    \]

    where

    $\nabla_w \hat{q^{i}}(\boldsymbol{\theta};\mathbf{w})
 = \frac{\hat{q^{i}}(\boldsymbol{\theta};\boldsymbol{w})}{\varrho^i(\boldsymbol{\theta_{i,m}}; \boldsymbol{w})}g^{'}_m(\boldsymbol{w}) + g_m(\boldsymbol{w})\nabla_w\prod_{k \neq m}^T \varrho^i(\boldsymbol{\theta_{i,k}};\boldsymbol{w})$
   and 
   $    q^i(\boldsymbol{\theta};\boldsymbol{w}) 
    = \prod_{m=1}^T \varrho^i(\boldsymbol{\theta}_{i,m}; \boldsymbol{w})$
\end{proposition}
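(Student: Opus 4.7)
The plan is to differentiate under the integral sign and then apply the product rule twice: once to the integrand of the KL divergence, and once to the mean-field factorization of $\hat{q^i}(\boldsymbol{\theta};\mathbf{w})$. Throughout I would identify the shorthand $g_m(\boldsymbol{w}) := \varrho^i(\boldsymbol{\theta}_{i,m};\boldsymbol{w})$ (so $g'_m(\boldsymbol{w}) = \nabla_{\boldsymbol{w}} \varrho^i(\boldsymbol{\theta}_{i,m};\boldsymbol{w})$), which appears to be the intended meaning of the notation in the statement.

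First I would expand
\begin{equation*}
\mathbb{D}_{KL}(\hat{q^i}(\boldsymbol{\theta};\mathbf{w}) \,\|\, \hat{q^i}(\boldsymbol{\theta})) = \int_{\Theta} \hat{q^i}(\boldsymbol{\theta};\mathbf{w})\bigl[\log \hat{q^i}(\boldsymbol{\theta};\mathbf{w}) - \log \hat{q^i}(\boldsymbol{\theta})\bigr]\,d\boldsymbol{\theta},
\end{equation*}
and then exchange $\nabla_{\boldsymbol{w}}$ with the integral (justified by the usual dominated-convergence argument, which holds since the variational family $\mathcal{Q}_i$ is Gaussian with bounded parameters by the assumptions in Section~3). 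The denominator $\log \hat{q^i}(\boldsymbol{\theta})$ does not depend on $\boldsymbol{w}$, so its contribution is just $-\int_{\Theta} \nabla_{\boldsymbol{w}}\hat{q^i}(\boldsymbol{\theta};\mathbf{w}) \log \hat{q^i}(\boldsymbol{\theta})\,d\boldsymbol{\theta}$. For the first part, the product rule gives
\begin{equation*}
\nabla_{\boldsymbol{w}}\bigl[\hat{q^i}(\boldsymbol{\theta};\mathbf{w}) \log \hat{q^i}(\boldsymbol{\theta};\mathbf{w})\bigr] = \nabla_{\boldsymbol{w}}\hat{q^i}(\boldsymbol{\theta};\mathbf{w}) \cdot \log \hat{q^i}(\boldsymbol{\theta};\mathbf{w}) + \hat{q^i}(\boldsymbol{\theta};\mathbf{w}) \cdot \frac{\nabla_{\boldsymbol{w}}\hat{q^i}(\boldsymbol{\theta};\mathbf{w})}{\hat{q^i}(\boldsymbol{\theta};\mathbf{w})},
\end{equation*}
and the second summand collapses to $\nabla_{\boldsymbol{w}}\hat{q^i}(\boldsymbol{\theta};\mathbf{w})$, yielding the bracketed $+1$. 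Collecting everything produces exactly the stated integral expression with integrand $\nabla_{\boldsymbol{w}}\hat{q^i}(\boldsymbol{\theta};\mathbf{w})\bigl[\log \hat{q^i}(\boldsymbol{\theta};\mathbf{w}) + 1 - \log \hat{q^i}(\boldsymbol{\theta})\bigr]$.

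For the second identity I would single out the $m$-th factor of the mean-field form $\hat{q^i}(\boldsymbol{\theta};\boldsymbol{w}) = \varrho^i(\boldsymbol{\theta}_{i,m};\boldsymbol{w}) \cdot \prod_{k\neq m}^T \varrho^i(\boldsymbol{\theta}_{i,k};\boldsymbol{w})$ and apply the product rule to this two-factor decomposition. The derivative of the isolated factor $g_m(\boldsymbol{w})$ times the untouched product $\prod_{k\neq m}\varrho^i(\boldsymbol{\theta}_{i,k};\boldsymbol{w})$ can be rewritten as $g'_m(\boldsymbol{w})\cdot \hat{q^i}(\boldsymbol{\theta};\boldsymbol{w})/\varrho^i(\boldsymbol{\theta}_{i,m};\boldsymbol{w})$, giving the first summand; the remaining term $g_m(\boldsymbol{w})\cdot \nabla_{\boldsymbol{w}}\prod_{k\neq m}^T \varrho^i(\boldsymbol{\theta}_{i,k};\boldsymbol{w})$ is the second summand.

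The computation is entirely mechanical; the only mild obstacle is bookkeeping. Specifically, one must be careful that the $m$ in the displayed formula is an arbitrary but fixed coordinate (the expression is really a decomposition that holds for each $m \in [T]$), and one must verify the integrability conditions needed to differentiate under the integral --- these follow from the Gaussian parametrization together with the boundedness assumption $\|\boldsymbol{\theta}\|_\infty \leq \Omega$ recalled in the preliminaries.
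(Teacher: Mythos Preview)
Your argument is correct and matches the paper's own derivation for the parts actually asserted in the proposition: both you and the paper expand the KL as an integral, push $\nabla_{\boldsymbol w}$ inside, apply the product rule to $\hat q^i(\boldsymbol\theta;\boldsymbol w)\log\hat q^i(\boldsymbol\theta;\boldsymbol w)$ to produce the $+1$, and then apply the product rule to the mean-field factorization $\hat q^i=\varrho^i(\boldsymbol\theta_{i,m};\boldsymbol w)\prod_{k\neq m}\varrho^i(\boldsymbol\theta_{i,k};\boldsymbol w)$ to obtain the displayed decomposition of $\nabla_{\boldsymbol w}\hat q^i$.

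The only substantive difference is that the paper's proof inserts an additional intermediate step that you (reasonably) omit: it uses the stationarity condition $\nabla_{q^i(\boldsymbol\theta)}F_i^{\boldsymbol w}(z^\star)\big|_{\hat q^i(\boldsymbol\theta;\boldsymbol w)}=0$ for the weighted client objective to extract an \emph{explicit} formula for $g'_m(\boldsymbol w)$ in terms of $\mathcal P_{\theta,\boldsymbol w}(\boldsymbol{\mathcal D}^i)$, $z^\star(\boldsymbol\theta)$, and $\hat q^i(\boldsymbol\theta;\boldsymbol w)$. That extra computation is not required to establish the proposition as stated (which leaves $g'_m$ as an abstract derivative), so your more streamlined route is sufficient; the paper's version buys a concrete expression that makes $g'_m$ computable in practice.
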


\begin{proof}


\begin{equation}
\begin{aligned}
\nabla_{w}\mathbb{D}_{KL}( \hat{q^{i}}(\theta;\mathbf{w}) ||  \hat{q^{i}}(\theta)) \\ 
 = 
\nabla_w \mathrm{E}_{\hat{q^{i}}(\theta;\mathbf{w})}\biggl[\log \hat{q^{i}}(\theta;\mathbf{w}) - \log \hat{q^{i}}(\theta) \biggr] \\
= \nabla_w\biggl[\int_{\Theta}\hat{q^{i}}(\boldsymbol{\theta};\mathbf{w})\log \hat{q^{i}}(\boldsymbol{\theta};\mathbf{w})d\boldsymbol{\theta} - \int_{\Theta}\hat{q^{i}}(\boldsymbol{\theta};\mathbf{w})\log \hat{q^{i}}(\boldsymbol{\theta})d\boldsymbol{\theta}\biggr] \\
= \biggl[\int_{\Theta}\nabla_w\biggl(\hat{q^{i}}(\boldsymbol{\theta};\mathbf{w})\log \hat{q^{i}}(\boldsymbol{\theta};\mathbf{w})\biggr)d\boldsymbol{\theta} - \int_{\Theta}\nabla_w\biggl(\hat{q^{i}}(\boldsymbol{\theta};\mathbf{w})\log \hat{q^{i}}(\boldsymbol{\theta})\biggr)d\boldsymbol{\theta}\biggr] \\
= \biggl[
\int_{\Theta} \biggl(\log
\hat{q^{i}(\boldsymbol{\theta};\mathbf{w})}\nabla_w \hat{q^{i}}(\boldsymbol{\theta};\mathbf{w}) + \nabla_w \hat{q^{i}}(\boldsymbol{\theta};\mathbf{w})\biggr)d\boldsymbol{\theta}
- \int_{\Theta}\log \hat{q^{i}}(\boldsymbol{\theta})\nabla_w\hat{q^{i}}(\boldsymbol{\theta};\mathbf{w})d\boldsymbol{\theta}\biggr]\\
= \int_{\Theta}\nabla_w \hat{q^{i}}(\boldsymbol{\theta};\mathbf{w})\biggl[ \log \hat{q^{i}}(\boldsymbol{\theta};\mathbf{w}) + 1 - \log \hat{q^i}(\boldsymbol{\theta})\biggr]d\boldsymbol{\theta}
\end{aligned}
\label{proof:main-equation}
\end{equation}

In order to compute the gradient $\nabla_w \hat{q^{i}}(\boldsymbol{\theta};\mathbf{w})$, 
the following objective can be utilized.

Let $z^*(\boldsymbol\theta)$ be the optimal variable solution to Equation (\ref{eq:weighted_client-opt}).
\begin{align*}
\nabla_{q^i(\theta)}F^w_i(z^*)\at[\bigg]{\hat{q^{i}(\theta;\mathbf{w})}} = 0 \\
 \implies  \underbrace{\nabla_{q^i(\boldsymbol\theta)}\int_{\Theta}-\log \mathcal{P}_{\theta, w}(\boldsymbol{\mathcal{D}}^i)q^i(\boldsymbol{\theta})d\boldsymbol{\theta}}_\textbf{First Part}\at[\bigg]{\hat{q^{i}(\theta;\mathbf{w})}}
 + \underbrace{\zeta \nabla_{q^i(\boldsymbol\theta)}\mathbb{D}_{KL}(q^{i}(\boldsymbol\theta) || z^*(\boldsymbol\theta))}_\textbf{Second Part}\at[\bigg]{\hat{q^{i}(\theta;\mathbf{w})}} = 0
 \\ 
\end{align*}

For the \textbf{first part},
\begin{equation}
    \begin{aligned}
        \nabla_{q^i(\theta)}\int_{\Theta}-\log \mathcal{P}_{\theta, w}(\boldsymbol{\mathcal{D}}^i)q^i(\boldsymbol{\theta})d\boldsymbol{\theta}\at[\bigg]{\hat{q^{i}(\theta;\mathbf{w})}}\\
        = \int_{\Theta}\nabla_{q^i(\theta)}\biggl[-\log \mathcal{P}_{\theta, w}(\boldsymbol{\mathcal{D}}^i)q^i(\boldsymbol{\theta})\biggr]d\boldsymbol{\theta}\at[\bigg]{\hat{q^{i}(\theta;\mathbf{w})}}\\
    = \underbrace{\int_{\Theta}\biggl[-q^i(\boldsymbol{\theta})\nabla_{q^i(\theta)}\log \mathcal{P}_{\theta, w}(\boldsymbol{\mathcal{D}}^i) + \log \mathcal{P}_{\theta, w}(\boldsymbol{\mathcal{D}}^i)\biggr]d\boldsymbol{\theta}}_\textbf{Modified First part}\at[\bigg]{\hat{q^{i}(\theta;\mathbf{w})}}\\
    \end{aligned}
        \label{proof:firstPart_}
\end{equation}

By the assumption that the distribution $q^i(\boldsymbol{\theta})$ satisfies mean-field decomposition i.e.

\begin{equation}
\begin{aligned}
    q^i(\boldsymbol{\theta}) = \prod_{m=1}^T\mathcal{N}(\theta_{i,m},\sigma^2_n) \\
    = \prod_{m=1}^T \varrho^i(\boldsymbol{\theta}_{i,m})
\end{aligned}  
\end{equation}

Let us denote $\mathcal{M}_w = \mathcal{P}_{\theta,w}(\boldsymbol{\mathcal{D}}^i)$.

Therefore, we extract out the following portion from (\ref{proof:firstPart_}): $\nabla_{q^i(\theta)}\log \mathcal{P}_{\theta, w}(\boldsymbol{\mathcal{D}}^i)$

\paragraph{}

\begin{equation}
    \nabla_{q^i(\theta)}\log \mathcal{P}_{\theta, w}(\boldsymbol{\mathcal{D}}^i) = \nabla_{q^i(\theta)}\log\mathcal{M}_w
\end{equation}

\paragraph{}

We now consider the individual partial differentials here

\begin{equation}
    \frac{\partial} 
   {\partial 
 \varrho^i(\boldsymbol{\theta_{i,m}})}\log \mathcal{M}_w \\
    = \frac{1}{\mathcal{M}_w}\frac{\partial \mathcal{M}_w}{\partial w}\frac{\partial w}{\partial \varrho^i(\boldsymbol{\theta_{i,m}})}
\end{equation}

Thus, we can rewrite (\ref{proof:firstPart_}) from the perspective of individual components of $q^i(\boldsymbol{\theta})$ as follows:

\begin{equation}
\begin{aligned}
    \int_{\Theta}\biggl[-q^i(\boldsymbol{\theta})\frac{1}{\mathcal{M}_w}\frac{\partial \mathcal{M}_w}{\partial w}\frac{\partial w}{\partial \varrho^i(\boldsymbol{\theta_{i,m}})} + \log \mathcal{P}_{\theta, w}(\boldsymbol{\mathcal{D}}^i)\biggr]d\boldsymbol{\theta}\at[\bigg]{\hat{q^{i}(\theta;\mathbf{w})}}\\
    =  \underbrace{\int_{\Theta}\biggl[-q^i(\boldsymbol{\theta})\frac{1}{\mathcal{P}_{\theta,w}(\boldsymbol{\mathcal{D}}^i)}\frac{\partial \mathcal{P}_{\theta,w}(\boldsymbol{\mathcal{D}}^i)}{\partial w}\frac{\partial w}{\partial \varrho^i(\boldsymbol{\theta_{i,m}})} + \log \mathcal{P}_{\theta, w}(\boldsymbol{\mathcal{D}}^i)\biggr]d\boldsymbol{\theta}}_\textbf{Modified First part}\at[\bigg]{\hat{q^{i}(\theta;\mathbf{w})}}
\end{aligned}
\end{equation}

Now, we can rewrite the \textbf{second part} as follows:

\begin{equation}
    \begin{aligned}
         \zeta \nabla_{q^i(\theta)}\mathbb{D}_{KL}(q^{i}(\boldsymbol{\theta}) || z^*(\theta))\at[\bigg]{\hat{q^{i}(\boldsymbol{\theta};\mathbf{w})}}\\
         = \zeta\nabla_{q^i(\theta)}\biggl[\int_{\Theta}q^{i}(\boldsymbol{\theta})\log q^{i}(\boldsymbol{\theta}) - q^{i}(\boldsymbol{\theta})\log(z^*(\boldsymbol{\theta}))d\boldsymbol{\theta}\biggr]\at[\bigg]{\hat{q^{i}(\theta;\mathbf{w})}}\\
         = \zeta\int_{\Theta}\nabla_{q^i(\theta)} \biggl[q^{i}(\boldsymbol{\theta})\log q^{i}(\boldsymbol{\theta}) - q^{i}(\boldsymbol{\theta})\log(z^*(\boldsymbol{\theta}))\biggr]d\boldsymbol{\theta}\at[\bigg]{\hat{q^{i}(\theta;\mathbf{w})}}\\
         =
         \underbrace{\zeta\int_{\Theta}\biggl(\log  q^{i}(\boldsymbol{\theta}) + 1 -\log(z^*(\boldsymbol{\theta}))\biggr)d\boldsymbol{\theta}}_\textbf{Modified Second Part}\at[\bigg]{\hat{q^{i}(\theta;\mathbf{w})}}
    \end{aligned}
\end{equation}
    
\end{proof}

Combining both the first and second part we get

\begin{equation}
    \begin{aligned}
    \int_{\Theta}\biggl[-\hat{q^i(\boldsymbol{\theta}; \boldsymbol{w})}\frac{1}{\mathcal{P}_{\theta,w}(\boldsymbol{\mathcal{D}}^i)}\frac{\partial \mathcal{P}_{\theta,w}(\boldsymbol{\mathcal{D}}^i)}{\partial w}\frac{\partial w}{\partial \varrho^i(\boldsymbol{\theta_{i,m}})} + \log \mathcal{P}_{\theta, w}(\boldsymbol{\mathcal{D}}^i)\biggr]d\boldsymbol{\theta} \\
    +
\zeta\int_{\Theta}\biggl(\log  \hat{q^{i}(\boldsymbol{\theta};\boldsymbol{w})} + 1 -\log(z^*(\boldsymbol{\theta}))\biggr)d\boldsymbol{\theta}  = 0 \\
\implies \zeta\int_{\Theta}\biggl(\log  \hat{q^{i}(\boldsymbol{\theta};\boldsymbol{w})} + 1 -\log(z^*(\boldsymbol{\theta})) + \log \mathcal{P}_{\theta, w}(\boldsymbol{\mathcal{D}}^i)\biggr)d\boldsymbol{\theta} \\  = \int_{\Theta}\biggl(\hat{q^i(\boldsymbol{\theta}; \boldsymbol{w})}\frac{1}{\mathcal{P}_{\theta,w}(\boldsymbol{\mathcal{D}}^i)}\frac{\partial \mathcal{P}_{\theta,w}(\boldsymbol{\mathcal{D}}^i)}{\partial w}\frac{\partial w}{\partial \varrho^i(\boldsymbol{\theta_{i,m}})}\biggr)d\boldsymbol{\theta} \\
\end{aligned}
\label{eq:variationalSolOptimalEq}
\end{equation}

\paragraph{}

Let us assume without loss of generality that each of the individual components of the optimal coreset weighted client distribution $\hat{q^i(\boldsymbol{\theta}; \boldsymbol{w})}$ can be denoted as some function $g(\boldsymbol{w})$. More, specifically,

\begin{equation}
\begin{aligned}
\varrho^i(\boldsymbol{\theta_{i,j}}; \boldsymbol{w}) = g_j(\boldsymbol{w}) \\
        \nabla_w\varrho^i (\boldsymbol{\theta_{i,j}}; \boldsymbol{w}) = g^{'}_j(\boldsymbol{w})
\end{aligned}
\end{equation}

\

Thus we can reuse the above expression to simplify (\ref{eq:variationalSolOptimalEq})

\begin{equation}
    \begin{aligned}
        g^{'}_m(\boldsymbol{w})  = \frac{\int_{\Theta}\biggl(\hat{q^i(\boldsymbol{\theta}; \boldsymbol{w})}\frac{1}{\mathcal{P}_{\theta,w}(\boldsymbol{\mathcal{D}}^i)}\frac{\partial \mathcal{P}_{\theta,w}(\boldsymbol{\mathcal{D}}^i)}{\partial w}\biggr)d\boldsymbol{\theta}}{\zeta\int_{\Theta}\biggl(\log  \hat{q^{i}(\boldsymbol{\theta};\boldsymbol{w})} + 1 -\log(z^*(\boldsymbol{\theta})) + \log \mathcal{P}_{\theta, w}(\boldsymbol{\mathcal{D}}^i)\biggr)d\boldsymbol{\theta}} 
    \end{aligned}
\end{equation}

\paragraph{}

We now go back to utilizing the above derived expression in our main Eq. (\ref{proof:main-equation}) to replace $\nabla_w \hat{q^{i}(\boldsymbol{\theta};\mathbf{w})}$

\begin{equation}
    \begin{aligned}
        \nabla_w \hat{q^{i}(\boldsymbol{\theta};\boldsymbol{w})} \\
        = \nabla_w\prod_{k=1}^T \varrho^i(\boldsymbol{\theta_{i,k}};\boldsymbol{w}) \\
        = \nabla_w \prod_{k=1}^T g_k(\boldsymbol{w}) \\
        = \prod_{k \neq m}^T g_k(\boldsymbol{w})\nabla_w g_m(\boldsymbol{w}) + g_m(\boldsymbol{w})\nabla_w\prod_{k \neq m}^T g_k(\boldsymbol{w}) \\
       = \prod_{k \neq m}^T g_k(\boldsymbol{w})g^{'}_m(\boldsymbol{w}) + g_m(\boldsymbol{w})\nabla_w\prod_{k \neq m}^T g_k(\boldsymbol{w})\\
       = \frac{\hat{q^{i}(\boldsymbol{\theta};\boldsymbol{w})}}{\varrho^i(\boldsymbol{\theta_{i,m}}; \boldsymbol{w})}g^{'}_m(\boldsymbol{w}) + g_m(\boldsymbol{w})\nabla_w\prod_{k \neq m}^T \varrho^i(\boldsymbol{\theta_{i,k}};\boldsymbol{w})\\
    \end{aligned}
\end{equation}

Thus, we now have a closed form solution to computing the gradient of the KL divergence $\mathbb{D}( \hat{q^{i}(\boldsymbol{\theta};\mathbf{w})} ||  \hat{q^{i}(\boldsymbol{\theta})})$ w.r.t the coreset weight parameters $\boldsymbol{w}$.


\begin{proposition}
    The gradient of the second term in Equation \ref{eq:modified-personalized-weights} w.r.t $\boldsymbol{w}$ i.e.
    \[\nabla_w||P_{\theta}(\boldsymbol{\mathcal{D}^{i}}) - P_{\theta, w}(\boldsymbol{\mathcal{D}^i})||^2_{\hat{\pi},2}\]
    
    is given by the following expression

\begin{align*}
-2\mathcal{P}_{\boldsymbol{\Phi}}^T\biggl(\mathcal{P} - \mathcal{P}_{\boldsymbol{\Phi}}\boldsymbol{w}\biggr) 
\end{align*}

where  $\mathcal{P} = \sum_{j=1}^n\hat{g_j}$ and $\mathcal{P}_{\boldsymbol{\Phi}} = [\hat{g_1}, \hat{g_2}, \dots, \hat{g_n}]$

\label{prop:AIHT_simplified}
\end{proposition}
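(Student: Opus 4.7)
The plan is to reduce the $L^2(\hat{\pi})$-norm expression to a finite-dimensional squared Euclidean norm and then apply standard matrix calculus. First I would invoke the Monte Carlo simplification already established in Proposition~\ref{prop:AIHT_simplified} (as used to pass from Equation~\ref{eq:Bayesian Coreset original Eq} to Equation~4 in the main text), which lets me replace
\[
\norm{P_{\boldsymbol{\theta}}(\boldsymbol{\mathcal{D}^i}) - P_{\boldsymbol{\theta},\boldsymbol{w}}(\boldsymbol{\mathcal{D}^i})}^2_{\hat{\pi},2}
\]
by the tractable surrogate
\[
\Bigl\lVert\sum_{j=1}^n \hat{g}_j \;-\; \sum_{j=1}^n w_j\,\hat{g}_j\Bigr\rVert_2^2,
\]
where each $\hat{g}_j$ is the Monte Carlo estimate of $P_{\boldsymbol{\theta}}(\boldsymbol{\mathcal{D}}^i_j) - \mathbb{E}_{\boldsymbol{\theta}\sim\hat{\pi}} P_{\boldsymbol{\theta}}(\boldsymbol{\mathcal{D}}^i_j)$.

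Next I would rewrite the second sum as a matrix-vector product. Stacking the vectors $\hat{g}_1,\dots,\hat{g}_n$ as columns of the design matrix $\mathcal{P}_{\boldsymbol{\Phi}} = [\hat{g}_1,\hat{g}_2,\dots,\hat{g}_n]$, I have $\sum_{j=1}^n w_j \hat{g}_j = \mathcal{P}_{\boldsymbol{\Phi}}\boldsymbol{w}$, while the first sum is simply the fixed vector $\mathcal{P} = \sum_{j=1}^n \hat{g}_j$. The objective therefore becomes the quadratic form
\[
\Psi(\boldsymbol{w}) \;=\; \bigl\lVert \mathcal{P} - \mathcal{P}_{\boldsymbol{\Phi}}\boldsymbol{w} \bigr\rVert_2^2 \;=\; (\mathcal{P}-\mathcal{P}_{\boldsymbol{\Phi}}\boldsymbol{w})^\top(\mathcal{P}-\mathcal{P}_{\boldsymbol{\Phi}}\boldsymbol{w}).
\]

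Finally, I would differentiate using the standard identity $\nabla_{\boldsymbol{w}}\lVert a - B\boldsymbol{w}\rVert_2^2 = -2B^\top(a - B\boldsymbol{w})$. Expanding $\Psi$ as $\mathcal{P}^\top\mathcal{P} - 2\mathcal{P}^\top\mathcal{P}_{\boldsymbol{\Phi}}\boldsymbol{w} + \boldsymbol{w}^\top\mathcal{P}_{\boldsymbol{\Phi}}^\top\mathcal{P}_{\boldsymbol{\Phi}}\boldsymbol{w}$ and using $\nabla_{\boldsymbol{w}}(\boldsymbol{w}^\top A \boldsymbol{w}) = (A+A^\top)\boldsymbol{w}$ with the symmetric $A = \mathcal{P}_{\boldsymbol{\Phi}}^\top\mathcal{P}_{\boldsymbol{\Phi}}$ yields $\nabla_{\boldsymbol{w}}\Psi(\boldsymbol{w}) = -2\mathcal{P}_{\boldsymbol{\Phi}}^\top\mathcal{P} + 2\mathcal{P}_{\boldsymbol{\Phi}}^\top\mathcal{P}_{\boldsymbol{\Phi}}\boldsymbol{w} = -2\mathcal{P}_{\boldsymbol{\Phi}}^\top(\mathcal{P} - \mathcal{P}_{\boldsymbol{\Phi}}\boldsymbol{w})$, matching the claimed expression.

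There is no serious obstacle here once one accepts the Monte Carlo reduction; the only subtle point is justifying that the $L^2(\hat{\pi})$ inner product used to define the squared norm collapses (up to the Monte Carlo approximation) into a Euclidean inner product between the sampled vectors $\hat{g}_j$, so that the ``design matrix'' $\mathcal{P}_{\boldsymbol{\Phi}}$ is well-defined and $\mathcal{P}_{\boldsymbol{\Phi}}\boldsymbol{w}$ faithfully represents $\sum_j w_j \hat{g}_j$. This step relies on the construction in Appendix~\ref{prop:AIHT_simplified} and is the only place where one must be careful to match dimensions; the differentiation itself is a one-line application of standard matrix calculus.
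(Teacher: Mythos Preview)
Your proposal is correct and follows essentially the same approach as the paper: both reduce the $L^2(\hat{\pi})$-norm to a finite-dimensional Euclidean norm via the Monte Carlo approximation (defining $\hat{g}_j$ from $S$ posterior samples), rewrite the result in the matrix form $\|\mathcal{P} - \mathcal{P}_{\boldsymbol{\Phi}}\boldsymbol{w}\|_2^2$, and then apply the standard gradient identity for quadratic least-squares objectives. The only cosmetic difference is that the paper spells out the Monte Carlo construction of $\hat{g}_j$ explicitly within the proof, whereas you cite it as a prior reduction; since that derivation is in fact part of this very proposition's proof in the appendix, you should inline it rather than reference it.
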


\paragraph{}

\begin{proof}
First, we reformulate the given expression in terms 

\begin{align*}
\left\lVert P_{\theta}(\boldsymbol{\mathcal{D}}^{i}) - P_{\theta, w}(\boldsymbol{\mathcal{D}}^i)\right\rVert^2_{\hat{\pi},2} \\
= \mathrm{E}_{\theta \sim \hat{\pi}}[(P_{\theta}(\boldsymbol{\mathcal{D}}^{i}) - P_{\theta, w}(\boldsymbol{\mathcal{D}}^i))^2] 
\end{align*}

We define $g_j = \mathcal{P}_{\theta}(\boldsymbol{\mathcal{D}}^i_j) - \mathrm{E}_{\theta \sim \hat{\pi}}P_{\theta}(\boldsymbol{\mathcal{D}}^{i}_j)$

As a result the equivalent optimization problem becomes minimizing $ \left\lVert \sum_{j=1}^n g_j - \sum_{j=1}^nw_jg_j\right\rVert^2_{\hat{\pi},2}$

Further, using Monte Carlo approximation, given $S$ samples $\{\theta_j\}_{j=1}^S$, $\theta_j \sim \hat{\pi}$, the $L^2(\hat{\pi})$-norm can be approximated as follows
\begin{align*}
\left\lVert\sum_{j=1}^n\hat{g_j} - \sum_{j=1}^nw_j\hat{g_j}\right\rVert^2_2
\end{align*}

where

$\hat{g_j} = \frac{1}{\sqrt{S}}\bigl[\mathcal{P}_{\theta_1}(\boldsymbol{\mathcal{D}}^i_j) - \Bar{\mathcal{P}(\boldsymbol{\mathcal{D}}^i_j)},\mathcal{P}_{\theta_2}(\boldsymbol{\mathcal{D}}^i_j) - \Bar{\mathcal{P}(\boldsymbol{\mathcal{D}}^i_j)},\dots, \mathcal{P}_{\theta_S}(\boldsymbol{\mathcal{D}}^i_j) - \Bar{\mathcal{P}(\boldsymbol{\mathcal{D}}^i_j)}\bigr]$ and $\Bar{\mathcal{P}(\boldsymbol{\mathcal{D}}^i_j)} = \frac{1}{S}\sum_{k=1}^S\mathcal{P}_{\theta_k}(\boldsymbol{\mathcal{D}}^i_j)$

\paragraph{}

We can write the above problem in matrix notation as follows

\begin{align*}
f(\boldsymbol{w}):= \left\lVert \mathcal{P} - \mathcal{P}_{\boldsymbol{\Phi}} \boldsymbol{w}\right\rVert_2^2
\end{align*}

where  $\mathcal{P} = \sum_{j=1}^n\hat{g_j}$ and $\mathcal{P}_{\boldsymbol{\Phi}} = [\hat{g_1}, \hat{g_2}, \dots, \hat{g_n}]$

Thus we have the gradient w.r.t $\boldsymbol{w}$ as follows:

\begin{equation}
\begin{aligned}
    \nabla_w f(\boldsymbol{w}) = -2\mathcal{P}_{\boldsymbol{\Phi}}^T\biggl(\mathcal{P} - \mathcal{P}_{\boldsymbol{\Phi}}\boldsymbol{w}\biggr)
\end{aligned}
\end{equation}
\end{proof}

\section{Experiments}
All the experiments have been done using the following configuration: Nvidia RTX A4000(16GB) and  Apple M2 Pro 10 cores and 16GB memory.

\subsection{Proposal for a modified objective in Equation \ref{eq:modified-personalized-weights}}

\begin{equation}
\begin{aligned}
 \{\boldsymbol{w_i}^*\} \triangleq \arg \min_{\boldsymbol{w}}\mathbb{D}_{KL}(\hat{q^{i}}(\boldsymbol{\theta}, \boldsymbol{w}) || \hat{q^{i}}(\boldsymbol{\theta}) ) + \left\lVert P_{\boldsymbol{\theta}}(\boldsymbol{\mathcal{D}^{i}}) - P_{\boldsymbol{\theta}, \boldsymbol{w_i}}(\boldsymbol{\mathcal{D}^i})\right\rVert^2_{\hat{\pi},2} 
\hspace{7pt} ||\boldsymbol{w_i}||_{0} \leq k 
\end{aligned}
\end{equation}

We discuss here the utility of our proposed modified client side objective function via an ablation study where we want to gauge the inclusion of the first term in our objective function as just inlcuding the coreset loss.

Through experimental analysis, we find that just including the coreset loss optimization results in early saturation, possibly hinting towards getting stuck in local minima, but however inclusion the KL Divergence loss and forcing the coreset weighted local distribution of the client and the normal local distribution of the client to be similar leads to better stability in the training loss and better convergence.

\begin{figure}[h!]
    \centering
    \includegraphics[scale=0.15]{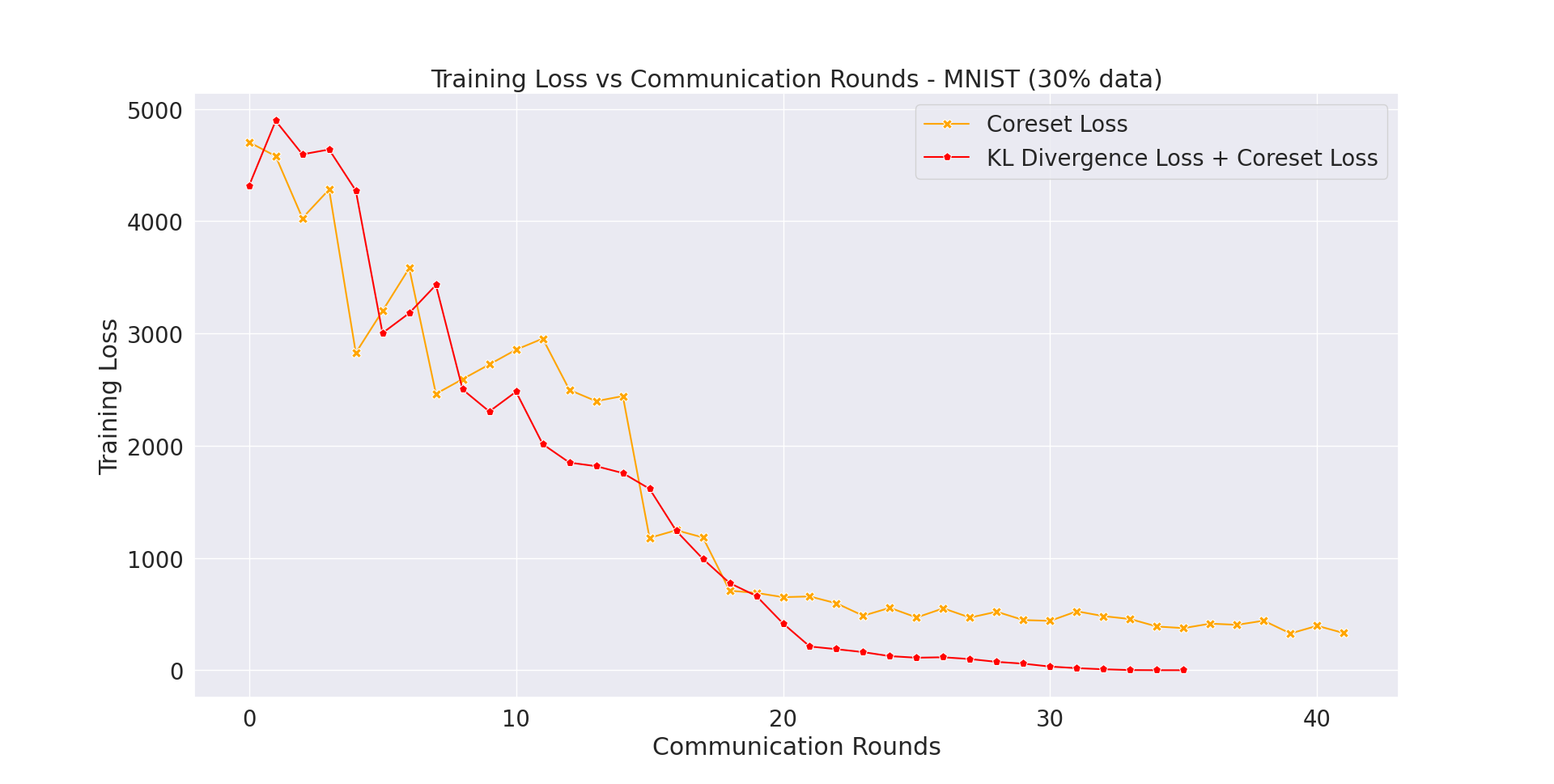}%
        \includegraphics[scale=0.15]{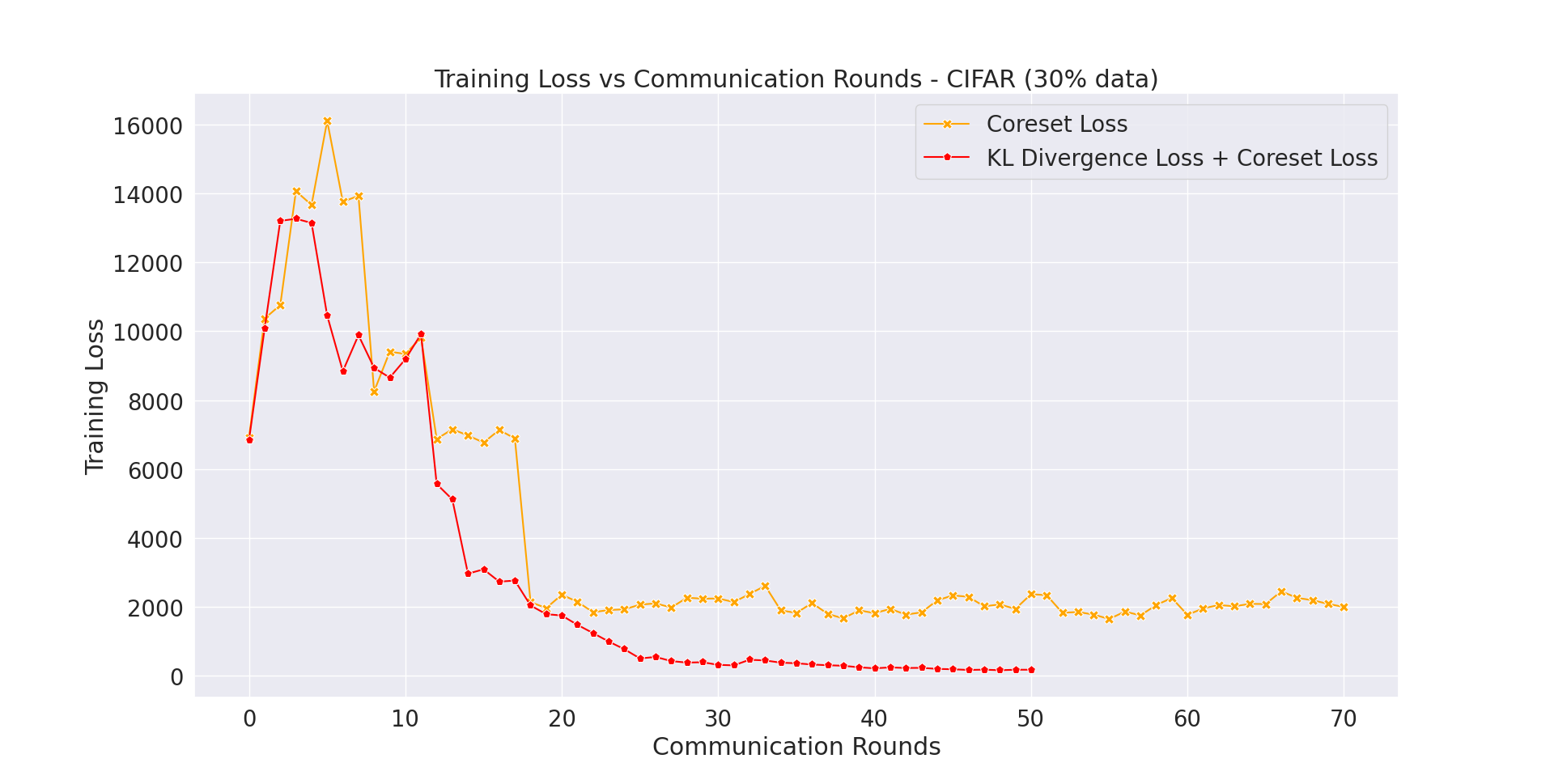}
    \caption{Ablation Study on using KL divergence between two local distribution w.r.t just using coreset weights}
    \label{fig:enter-label}
\end{figure}

\subsection{Communication Complexity Analysis For Different Coreset Sizes}\label{sec:Communication Complexity Analysis For Different Coreset Sizes}

Here we showcase an analysis for different coreset sample size for different datasets and how it affects on the final accuracy and the total number of communication rounds in the Federated Learning setting. This showcases cost-effectiveness of our approach where by using only a small number of communication rounds our proposed approach is able to attain near-optimal performance as per the table below. In addition Fig: \ref{fig:conv_analysis} substantiates the cost-effectiveness of our approach.
    
    \begin{table*}[h!]
    \centering
    \caption{Comparative results of test accuracies across different coreset sample complexity}
    \label{Table : Accuracy Stats}
    \resizebox{\textwidth}{!}{%
    \begin{tabular}{@{}c|cc|cc|cc@{}}
    \toprule
    {\textbf{Method}  (Percentage = sampling fraction) } & \multicolumn{2}{c}{\textbf{MNIST}}                                                                                                                    & \multicolumn{2}{c}{\textbf{FashionMNIST}}                                                                                                                                           & \multicolumn{2}{c}{\textbf{CIFAR}}                                                   \\ \cmidrule(l){2-7} 
                                       & \begin{tabular}[c]{@{}c@{}}Test Accuracy\end{tabular} & \begin{tabular}[c]{@{}c@{}}Communication Rounds\end{tabular} & \begin{tabular}[c]{@{}c@{}}Test Accuracy\end{tabular} & \begin{tabular}[c]{@{}c@{}}Communication Rounds\end{tabular} & \begin{tabular}[c]{@{}c@{}}Test Accuracy\end{tabular} & \begin{tabular}[c]{@{}c@{}}Communication Rounds\end{tabular} \\ \cmidrule(r){1-7}
    \textbf{$\baseline$ (Full)}  & 98.79& 194 & 93.01 & 215 &83.46 & 266
    \\
    \textbf{$\randomsub$ (50\%)}  & 80.2 & 135 & 87.12 & 172 &48.31 &183
    \\
    \textbf{$\methodprop$ (k = 50\%)}  & 92.48 & 98 & 89.55 & 93 & 69.66 & 112
    \\
    \textbf{$\methodprop$ (k = 30\%)}  & 90.17 & 84 & 88.16 & 72 & 59.12 & 70 
    \\
    \textbf{$\methodprop$ (k = 15\%)}  & 88.75 & 62 & 85.15 & 38 & 55.66 & 32
    \\
    \textbf{$\methodprop$ (k = 10\%)}  & 85.43 & 32 & 82.64 & 24 & 48.25 & 16
    \\
    \bottomrule
    \end{tabular}%
    \label{table:communication-complexity-analysis}}
    \subcaption[]{We report test accuracies across different sample complexity for datasets like \textbf{MNIST, CIFAR, FashionMNIST}.
    \textbf{Full indicates training on full dataset and 50\% is on using half the data size after randomly sampling 50\% of the training set.
    }}
    \end{table*}

\begin{figure}[h!]
\centering
\includegraphics[width=.5\textwidth]{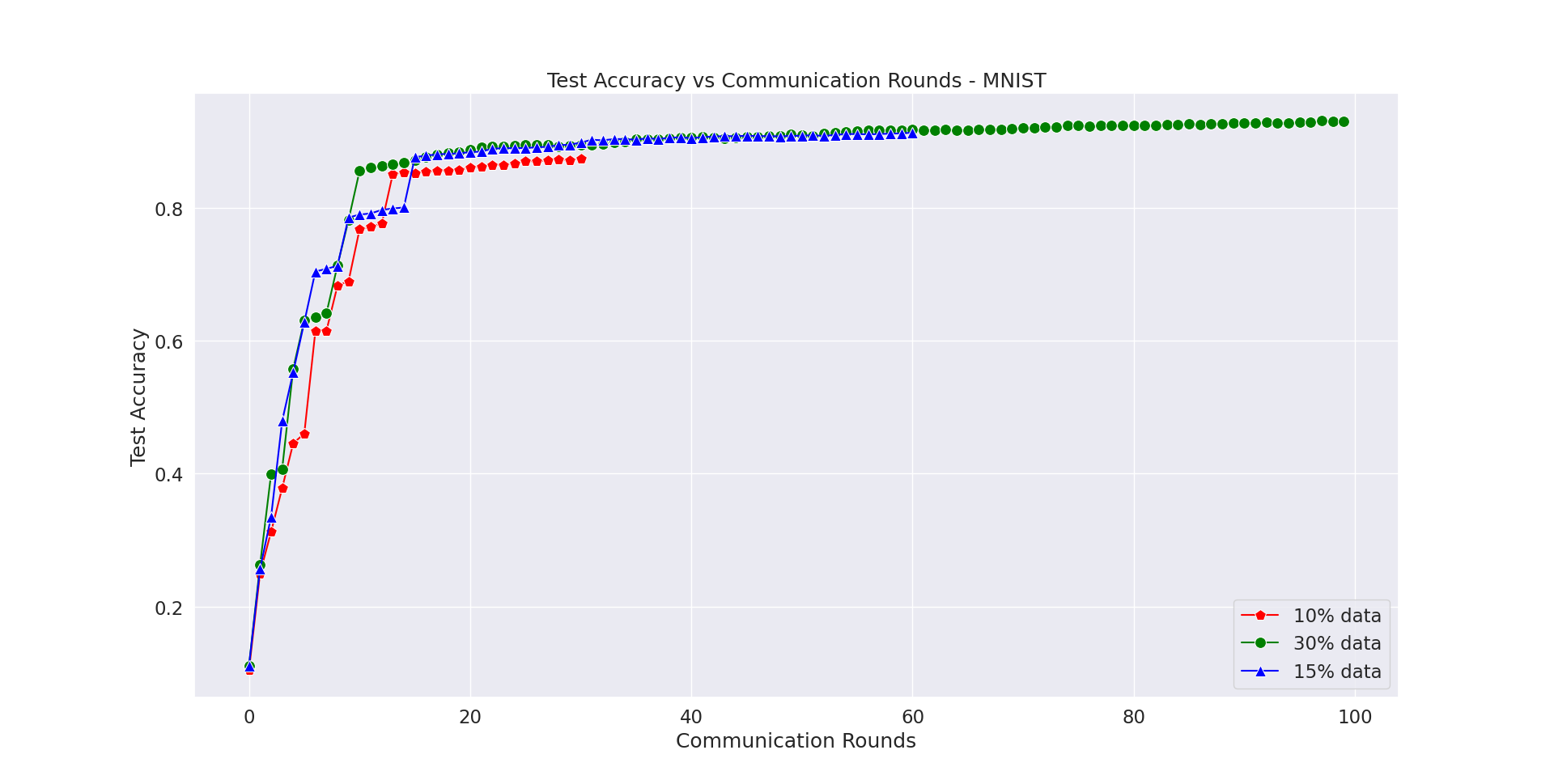}\hfill
\includegraphics[width=.5\textwidth]{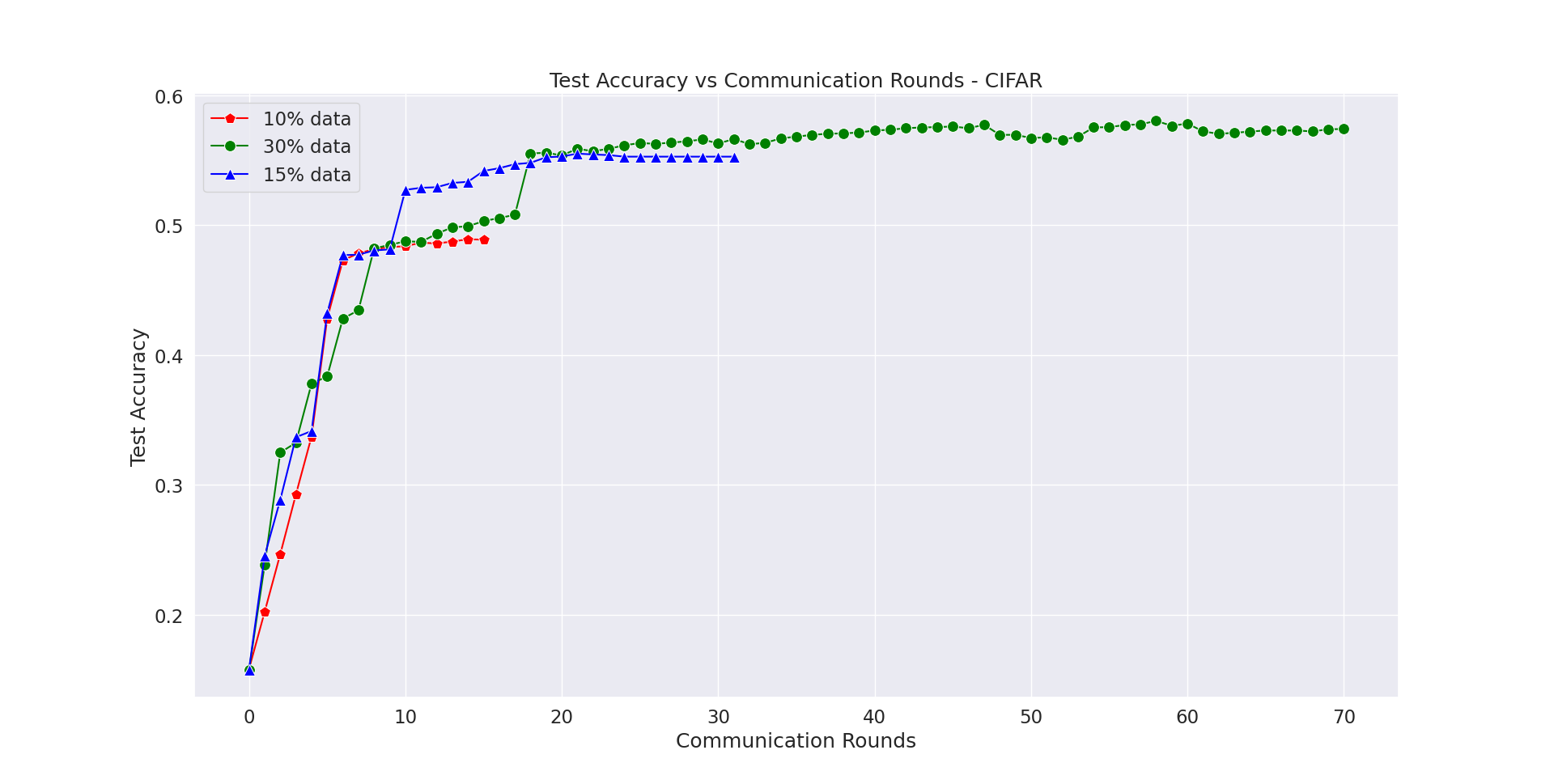}
\caption{Communication Rounds across Different Sample Size - Convergence analysis}
\label{fig:conv_analysis}
\end{figure}

\subsection{Computing likelihood objective using AIHT}
Here, we showcase how we utilised the Accelerated Iterative Hard Thresholding algorithm (A-IHT) for computing the likelihood.

\subsection{Medical Dataset experiment details}

Owing to the rise of Federated Learning based approaches in the medical setting due to privacy-preserving features, we chose to perform our experiments on 3 medical datasets in addition to our main experiments.

For our Federated Learning setup, we considered the setting where we have only 2 clients and one global server.

For each of the 3 datasets in the medical dataset setting, we consider each client has X-ray images of symptomatic type \textbf{A}/ type \textbf{B} and Normal images . We perform a classification task at each client.

\subsection{Basline Comparisons: Diversity based Submodular Optimization Functions}
\label{methods:submod}

For our second set of experiments, we chose different diversity based submodular optimization functions, specifically the following functions whose definition have been provided here

\begin{definition}
\textbf{Log-determinant Function} is a diversity-based submodular function. It is non-monotone in nature. Let $\mathbf{L}$ denote a positive semidefinite kernel matrix and $\mathbf{L_S}$ denote the subset of rows and columns indexed by set $\mathbf{S}$. Log-determinant function $f$ is specified as:
\begin{equation}
    f(\mathbf{S}) = \text{logdet}(\mathbf{L_S})
\end{equation}
\end{definition}

The log-det function models diversity and is closely related to a determinantal point process.

\begin{definition}
\textbf{Disparity Sum Function} characterizes diversity by considering the sum of distances between every pair of points in a subset $\mathbf{S}$. For any two points $i,j \in \mathbf{S}$, let $d_{ij}$ denote the distance between them.
\begin{equation}
    f(\mathbf{S})=\sum_{i, j \in \mathbf{S}}d_{i j}
\end{equation}
The aim is to select a subset $\mathbf{S}$ such that $f(\mathbf{S})$ is maximized. Disparity sum is not a submodular function.
\end{definition}

\begin{definition}
\textbf{Disparity Min Function} characterizes diversity by considering the minimum distance between any two non-similar points in a subset $\mathbf{S}$.
\begin{equation}
    f(\mathbf{S})=\min _{i, j \in \mathbf{S}, i \neq j}d_{i j}
\end{equation}
The aim is to select a subset $\mathbf{S}$ such that $f(\mathbf{S})$ is maximized. Disparity min is not a submodular function.
\end{definition}

For the above experiments we utilise the \textit{Submodlib library} \footnote{ \href{github.com/decile-team/submodlib}{ 
  Submodlib decile library}} for our implementation \cite{kaushal2022submodlib}.

\subsection{Experiment Configuration}

\subsubsection{MNIST experiment configuration}

For both $\methodprop$ and corresponding baseline $\baseline$, we use a fully connected DNN model with 3 layers [784,100,10] on MNIST dataset.

\textbf{Learning rate hyperparameters}:
As per \cite{zhang2022personalized}'s proposal i.e. $\baseline$ the learning rates for personalized (client model) and global model ($\eta_1, \eta_2$) are set to 0.001 since these choices result in the best setting for $\baseline$. To compare against the stable best hyperparameters of $\baseline$, we also fix the same for our proposal $\methodprop$.

\textbf{Personalization Hyperparameter}: The $\zeta$ parameter adjusts the degree of personalization in the case of clients. Again for a fair comparison against our baseline $\baseline$, we fix the $\zeta$ parameter for our proposal $\methodprop$ to the best setting given by the baseline. In \cite{zhang2022personalized} the authors tune $\zeta \in \{0.5, 1,5,10,20\}$ and find that $\zeta = 10$ results in the best setting. We, therefore, fix the personalization parameter $\zeta = 10$.

\subsubsection{Medical Datasets experiment configuration}
\label{medical datasets}

We discuss here the detailed configuration and models used for our further experiments.

Here we specifically consider the setting where we only have \textit{2} clients and a single global server. Each of the 2 clients are assigned with data from only 2 classes along with a shared class for classification purpose.

For example, client 1 has class $A$ and \textit{Normal} (shared class) images while client 2 has class $B$ and the remaining \textit{Normal} images.

\begin{figure}
    \centering
    \includegraphics[scale=0.25]{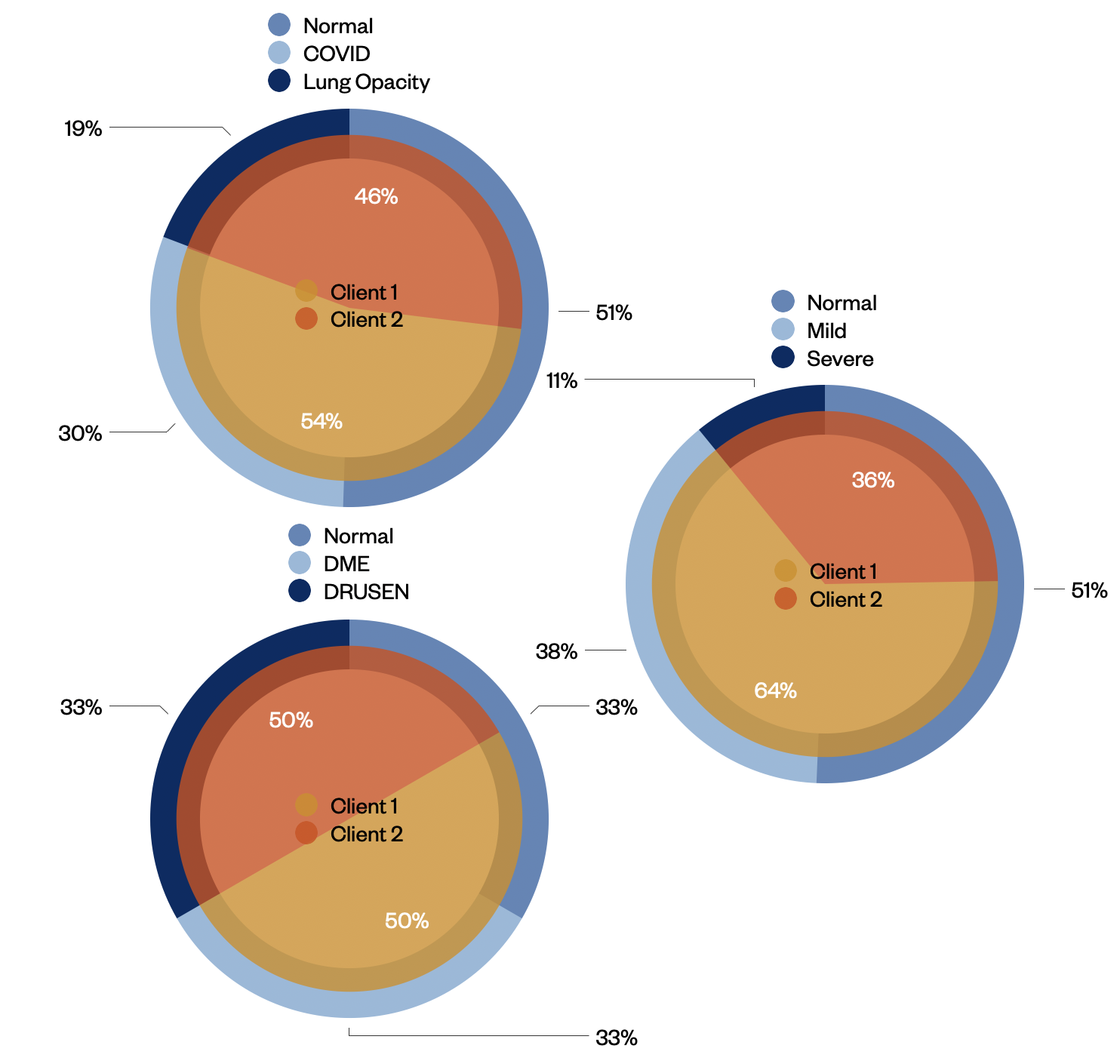}
    \caption{Data distribution for Medical Datasets}
    \label{fig:distribution}
\end{figure}

\textbf{COVID-19 Radiography Database}: Client 1 has COVID-19 x-ray images while client 2 has lung opacity x-ray images. Normal X-ray images are shared across both clients. Fig. \ref{fig:distribution} depicts the dataset distribution. For random subset selection, we randomly choose $\lambda = 0.1$ fraction of samples on the client side. For diversity-based subset selection, we first convert each of the 299$\times$299 images into a [512$\times$1] vector embeddings using a ResNet architecture. Diversity functions are then applied to these embeddings to retrieve a final subset of diverse and representative embeddings. Eventually, we decode back to the original space using the chosen representative indices.

 \textbf{APTOS 2019 Blindness Detection}: Unlike the COVID-19 radiography dataset, the APTOS dataset has 3 RGB channels and a higher resolution. We rescale the dimension of images to 299x299 for maintaining uniformity across all datasets. The same model configuration is followed as in the COVID-19 radiography dataset.

 \textbf{OCTMNIST}: 
    The OCTMnist dataset is a large dataset with single-channel images of a higher resolution. We have resized the images to 299$\times$299 resolution for our experiments. The Normal class has above 50,000 train images itself, with the other two classes having close to 10,000 train images. Due to this class imbalance, we have randomly selected 8,000 images from each class for our experiments. Post which we again use a ResNet architecture to reduce the feature dimensions, which we then feed into the $\methodprop$ pipeline.

\textbf{Baseline : Independent Learning}
In this scenario, each of the 2 clients solve the classification problem independently without any involvement of a server as opposed to federated learning. Thus there is no sharing of model weights to a common server as compared to the federated setting.

\textbf{Baseline : Independent Learning on other client's test data}
In this scenario, similar to the independent learning setup, we report the metrics for a particular client not only on its own test data but also on the other client's test data by training on the individual client's own training data.

\vspace{7pt}

For all the experiments for the medical dataset analysis across all the baselines, we report the classwise accuracy in Table \ref{table:accuracy-report}.

\begin{definition}
\textbf{Submodular Functions}
\label{Submodular_function}
are set functions which exhibit diminishing returns. Let $\mathbf{V}$ denote the ground-set of $n$ data points $\{x_1, x_2, \dots x_n\}$ where $x_i \in \mathbb{R}^d$. More formally, $\mathbf{V} = \{x_{i}\}_{i=1}^n$. Let $\mathbf{A} \subseteq \mathbf{B}$ where $\mathbf{A}, \mathbf{B} \subset \mathbf{V} \text{ and } v \in \mathbf{V}$. A submodular function $f : 2^V \mapsto \mathbb{R}$ satisfies the diminishing returns property as follows:
\begin{equation}
f(\mathbf{A} \cup v)-f(\mathbf{A}) \geq f(\mathbf{B} \cup v)  - f(\mathbf{B})
\end{equation}

\end{definition}

\subsection{Algorithm for Accelerated IHT} \label{algorithm:AIHT}
We first present the accelerated IHT algorithm as proposed in \cite{zhang2021bayesian} in Algorithm \ref{alg: AIHT}.

\begin{tabularx}{\textwidth}{l}
\hline${\textbf{ Algorithm 2}  \text { Accelerated IHT (A-IHT) for Bayesian Coreset Optimization }}$ \\
\hline \textbf{Input Objective} $f(w)=\|y-\Phi w\|_2^2 ;$ sparsity $k$ \\
 1: $t=0, z_0=0, w_0=0$ \\
 2: repeat \\
 3: $\quad \mathcal{Z}=\operatorname{supp}\left(z_t\right)$ \\
 4: $\quad \mathcal{S}=\operatorname{supp}\left(\Pi_{\mathcal{N}_k \backslash \mathcal{Z}}\left(\nabla f\left(z_t\right)\right)\right) \cup \mathcal{Z}$ where $|\mathcal{S}| \leq 3 k$ \\
 5: $\quad \widetilde{\nabla}_t=\left.\nabla f\left(z_t\right)\right|_{\mathcal{S}}$ \\
 6: $\quad \mu_t=\arg \min _\mu f\left(z_t-\mu \widetilde{\nabla}_t\right)=\frac{\left\|\tilde{\nabla}_t\right\|_2^2}{2\left\|\Phi \tilde{\nabla}_t\right\|_2^2}$ \\
 7: $\quad w_{t+1}=\Pi_{\mathcal{C}_k \cap \mathbb{R}_{+}^n}\left(z_t-\mu_t \nabla f\left(z_t\right)\right)$ \\
 8: $\quad \tau_{t+1}=\arg \min _\tau f\left(w_{t+1}+\tau\left(w_{t+1}-w_t\right)\right)$ \\
 $\quad=\frac{\left\langle y-\Phi w_{t+1}, \Phi\left(w_{t+1}-w_t\right)\right\rangle}{2\left\|\Phi\left(w_{t+1}-w_t\right)\right\|_2^2}$ \\
 9: $\quad z_{t+1}=w_{t+1}+\tau_{t+1}\left(w_{t+1}-w_t\right)$ \\
 10: $\quad t=t+1$ \\
 11: until Stop criteria met \\
 12: return $w_t$ \\
 \hline
 \label{alg: AIHT}
 \end{tabularx}

The algorithm \textbf{Accelerated IHT} above is proposed by 
\cite{zhang2021bayesian}. We share a high level view of the algorithm include some of the important features.

\textbf{Step Size Selection}
The authors propose that given
the quadratic objective of the coreset optimization, they perform exact line search to obtain the best step size per iteration. $\frac{\left\|\tilde{\nabla}_t\right\|_2^2}{2\left\|\Phi \tilde{\nabla}_t\right\|_2^2}$

\textbf{Momentum}
The authors propose adaptive momentum acceleration as is evident from line 8 of the pseudocode. At the end during the next update, Nesterov Accelerated Gradient is applied as shown in line 9.

\section{Code}
We share our code on GitHub at \href{https://github.com/prateekiiest/coresetFedML}{Link}

\end{document}